\documentclass[11pt]{article}

\pdfoutput=1

\usepackage[utf8]{inputenc}
\usepackage[T1]{fontenc}
\usepackage{lmodern}
\usepackage{anyfontsize}
\usepackage{inconsolata}
\usepackage{microtype}

\usepackage[margin=1in]{geometry}

\usepackage{amsmath, amssymb, amsfonts, amsthm}
\usepackage{bm, cases, mathtools, thmtools}
\usepackage{mathrsfs, dsfont, bbm}
\usepackage{nicefrac, xfrac}    %
\pdfoutput=1
\usepackage{amsmath,amsthm,wrapfig}
	\usepackage[utf8]{inputenc} 
\usepackage{amsmath, amssymb,bm, cases, mathtools, thmtools}
\usepackage{verbatim}
\usepackage{graphicx}\graphicspath{{figures/}}
\usepackage{multicol}
\usepackage{tabularx}
\usepackage{mathrsfs} 
\usepackage{caption}
\usepackage{algorithm}
\usepackage{algorithmicx}
\usepackage[noend]{algpseudocode}
\usepackage{array,booktabs,arydshln,xcolor}

\usepackage[T1]{fontenc}
\usepackage{inconsolata}
\usepackage{dsfont}
\usepackage{hyperref}
\usepackage{listings} %

\usepackage{enumitem}
\usepackage{booktabs}       %
\usepackage{nicefrac}       %

\usepackage{lineno}
\usepackage{bbm}

\usepackage{caption}
\usepackage{subcaption}

\usepackage{datetime}

\DeclareMathAlphabet\EuRoman{U}{eur}{m}{n}
\SetMathAlphabet\EuRoman{bold}{U}{eur}{b}{n}

\usepackage{amsthm}

\declaretheorem[style=plain,numberwithin=section,name=Theorem]{theorem}

\declaretheorem[style=plain,sibling=theorem,name=Lemma]{lemma}
\declaretheorem[style=plain,sibling=theorem,name=Assumption]{assumption}

\declaretheorem[style=plain,sibling=theorem,name=Corollary]{corollary}

\declaretheorem[style=definition,sibling=theorem,name=Definition]{definition}

\declaretheorem[style=remark,qed=$\triangleleft$,sibling=theorem,name=Remark]{remark}
\numberwithin{theorem}{section}

\usepackage{xparse}
\usepackage{xstring}
\usepackage{xspace}
\usepackage{enumitem}
\usepackage{thmtools} 

\newcommand{\Dist}{\mathcal D}
\newcommand{\dataspace}{\mathcal X}
\newcommand\optparen[1]{\ifthenelse{\equal{#1}{}}{}{(#1)}}

\newcommand{\Naturals}{\mathbb{N}}

\newcommand{\Reals}{\mathbb{R}}

\DeclareMathOperator*{\newlim}{\mathrm{lim}\vphantom{\mathrm{infsup}}}
\DeclareMathOperator*{\newmin}{\mathrm{min}\vphantom{\mathrm{infsup}}}
\DeclareMathOperator*{\newmax}{\mathrm{max}\vphantom{\mathrm{infsup}}}
\DeclareMathOperator*{\newinf}{\mathrm{inf}\vphantom{\mathrm{infsup}}}
\DeclareMathOperator*{\newsup}{\mathrm{sup}\vphantom{\mathrm{infsup}}}
\renewcommand{\lim}{\newlim}
\renewcommand{\min}{\newmin}
\renewcommand{\max}{\newmax}
\renewcommand{\inf}{\newinf}
\renewcommand{\sup}{\newsup}

\newcommand{\ProbMeasures}[1]{\Delta(#1)}

\def\EE{\mathbb{E}}

\newcommand{\Alg}{\mathcal{A}}

\newcommand{\lcrx}[4][{-1}]{
	\IfEq{#1}{-1}{\left #2 {{{{#3}}}} \right #4}{
   	\IfEq{#1}{0}{#2 {{{{#3}}}} #4}{
	\IfEq{#1}{1}{\bigl #2 {{{{#3}}}} \bigr #4}{
	\IfEq{#1}{2}{\Bigl #2 {{{{#3}}}} \Bigr #4}{
	\IfEq{#1}{3}{\biggl #2 {{{{#3}}}} \biggr #4}{
	\IfEq{#1}{4}{\Biggl #2 {{{{#3}}}} \Biggr #4}{
    \GenericWarning{"4th argument to lcrx must be -1, 0, 1, 2, 3, or 4"}
    }}}}}}}

\newcommand{\indic}[1]{\mathds{1}\left[#1\right]}

\newcommand{\starnum}{\mathfrak{s}}

\newcommand{\Adap}{\textsc{Adap}\xspace}
\newcommand{\SAP}{\textsc{SampleAware}\xspace}

\newcommand{\DAP}{\textsc{DAP}\xspace}
\newcommand{\Uni}{\textsc{Uniform}\xspace}

\newcommand{\vercost}{c_{\text{ver}}}

\newcommand{\rewcost}{c_{\text{rew}}}

\newcommand{\cmin}{c_{\min}}

\newcommand{\conceptclass}{\mathcal{H}}
\newcommand{\Pool}{\mathcal{P}}
\usepackage{xspace}
\usepackage[font=footnotesize,labelfont=bf,skip=4pt,belowskip=-6pt]{caption}
\usepackage[font=scriptsize]{subcaption}
\renewcommand{\arraystretch}{0.95}   %
\usepackage{booktabs}  
\usepackage[%
        minnames=1,maxnames=99,maxbibnames=99,minalphanames=1,maxalphanames=3,
		style=alphabetic,
		sorting=nyt,
		sortcites=false, %
		doi=false,url=false,
		uniquename=init,
		giveninits=true,
		hyperref,natbib,
		backend=biber]{biblatex}

\renewbibmacro{in:}{%
	\ifentrytype{article}{}{\printtext{\bibstring{in}\intitlepunct}}}
\usepackage[capitalize]{cleveref}
\usepackage{hyperref}
\usepackage{titletoc}
\hypersetup{
	linktocpage=true,
	colorlinks=true,				
	linkcolor=[rgb]{.7,0,0},				
	citecolor=magenta,				
	urlcolor=[rgb]{.7,0,.7},
}

\usepackage{titlesec}
\titlespacing{\section}{0pt}{\parskip}{0pt}
\titlespacing{\subsection}{0pt}{\parskip}{0pt}
\titlespacing{\subsubsection}{0pt}{\parskip}{0pt}

\renewcommand{\epsilon}{\varepsilon}

\usepackage[colorinlistoftodos]{todonotes}
\def\[#1\]{\begin{equation*}\begin{aligned}#1\end{aligned}\end{equation*}}
\def\*[#1\*]{\begin{align*}#1\end{align*}}

\title{Adaptive Generate-Rank-Verify: \\ Inference-Time Search with Costly Verification%
\thanks{Authors listed in alphabetical order.}}
\author{
{Shaddin Dughmi\thanks{Supported by the Air Force Office of Scientific Research under award number FA9550-24-1-0261. This work was done while the author was on sabbatical as the Carter and Tania Neild visiting professor at Northwestern University, as well as a visiting professor in the Data Science Institute at the University of Chicago.}} \\
\small University of Southern California
\and
{Mahdi Haghifam\thanks{This work was conducted while the author was visiting the Simons Institute for the Theory of Computing.}} \\
\small Toyota Technological Institute at Chicago
\and
{Yusuf Hakan Kalayci\thanks{Supported by the Air Force Office of Scientific Research under award number FA9550-24-1-0261.}} \\
\small University of Southern California
}

\date{}

\begin{document}

\maketitle

\begin{abstract}
 Many inference-time language-model pipelines combine a cheap reward signal with an expensive verifier, such as exact answer checking in mathematical reasoning or hidden-test execution in code generation. 
 We formalize this setting using a learning-theoretic lens as
\emph{generative active search}: a cost-sensitive first-positive search problem
in which a policy adaptively samples candidates from an unknown distribution,
observes cheap scores, and pays for verifier labels until it finds a positive
example.
For a fixed prompt, the generator and reward model induce two unknown objects: a distribution over reward scores and a score-conditioned success function. When these quantities are known, we characterize the distribution-aware optimal policy using a dynamic programming approach. In the realistic and practical setting where both the score distribution and success function are unknown, we propose ADAP, a shellwise adaptive generate-rank-verify algorithm that progressively increases the number of sampled responses and top-ranked verifications. Under the monotonicity assumption that higher reward scores are no less likely to pass verification, we show that ADAP achieves expected cost within a constant factor of the distribution-aware optimum. 
We complement this result with
learning-theoretic lower bounds, based on a centered star number, showing that
structural assumptions on the score--label relationship are necessary.
Experiments on mathematical reasoning and competitive programming validate the
predicted advantage over both fixed non-adaptive policies and difficulty-adaptive baselines. 
\end{abstract}

\section{Introduction}

State-of-the-art large language models (LLMs) increasingly rely on 
\emph{inference-time search}: they generate multiple candidate solutions, use a  cheap but noisy reward signal to triage them, and reserve an expensive verifier for the few candidates that may be correct. This paradigm underlies recent 
progress on difficult reasoning tasks, including competitive programming and 
mathematical Olympiad exams \citep{deepmind2025geminiicpc, openai2024o1}, where verification may take the form of hidden-test execution, exact answer checking, or proof verification. Because each additional generation, reward score, and verifier call incurs \emph{cost}---in wall-clock latency, GPU compute, or API spend---these systems face a basic compute-allocation problem at inference time. In a nutshell: given an LLM, a cheap but noisy reward model, and a costly verifier, how should one use reward scores to decide when to keep sampling and when to spend verification calls, in order to find a verified-correct response with minimal total cost? This question has motivated a growing line of work on test-time compute allocation and adaptive inference-time search \citep{snell2024scaling, damani2025hardthink, zhai2026adaptive, 
raman2025adabon, kalayci2025optimalstopping, wan2025beacon, qu2026adaptive}.

The challenge is not merely that inference-time compute is costly; it is that the right allocation is inherently prompt-dependent. A natural non-adaptive baseline fixes a pair $(N_\text{rew},N_\text{ver})$: generate $N_\text{rew}$ candidate responses, rank them by reward score, and verify the top $N_\text{ver}$. However, no single choice of $(N_\text{rew},N_\text{ver})$ is appropriate across all prompts. As shown in \cref{fig:mk-iso}, the per-prompt optimal choice varies widely. Easy instances may require only a small candidate pool and one verification call, while hard instances may require orders of magnitude more generation and several verification attempts. Similar prompt-dependent variation in the value of 
test-time compute has also been observed in recent studies \citep{snell2024scaling,damani2025hardthink,zhai2026adaptive,raman2025adabon,
huang2026optimalbayesian}. Thus, any fixed policy must either overspend on easy  prompts or under-search on hard ones.

One natural response is to learn a predictor of per-prompt difficulty from historical data and use it to choose the inference-time budget  \citep{snell2024scaling,damani2025hardthink}. Such approaches can be effective when the deployment prompt distribution is stable and the learned predictor remains well calibrated. However, this assumption can fail under distribution shift: the mix of prompts may change over time, and the relationship between reward scores and verifier acceptance may vary across prompts or domains. In such settings, historical tuning or a single global calibration rule may not transfer reliably. This motivates online, prompt-dependent policies that adapt using only the reward scores and verification outcomes observed on the current prompt. 

These observations lead to the central question we aim to answer in this paper: Can an online policy use an uncalibrated reward model, adapt to prompt difficulty, and achieve near-optimal cost relative to a distribution-aware policy that knows the score distribution and reward–verifier relationship? We answer this question by formulating generate--rank--verify inference as a 
cost-sensitive search problem and by developing an adaptive 
algorithm with provable and empirical guarantees across various cost regimes.

\begin{figure}[ht]
    \centering
    \begin{minipage}[t]{0.495\linewidth}
        \centering
        \includegraphics[width=0.85\linewidth]{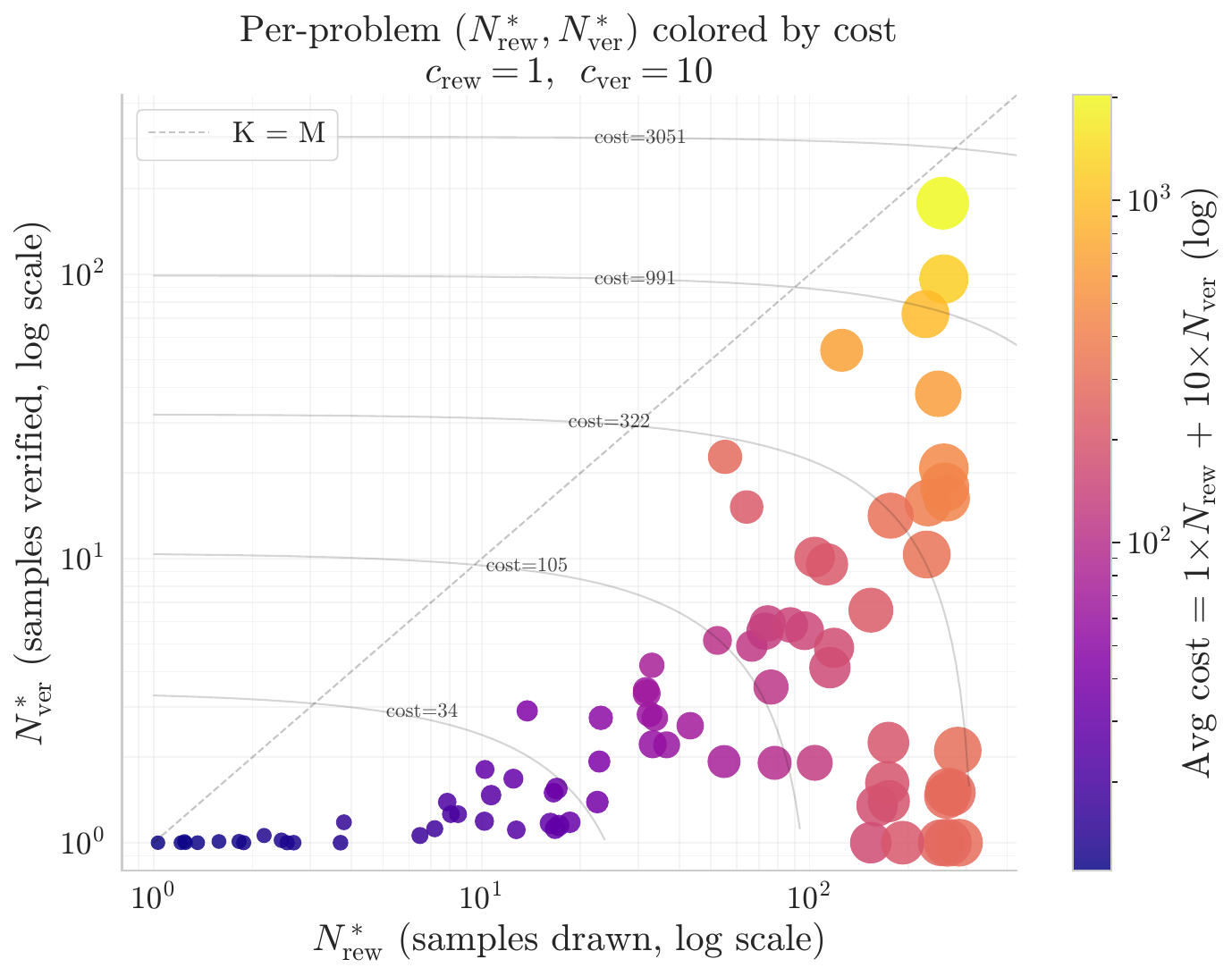}
        \captionof{figure}{Per-problem optimal non-adaptive choices \((\overline{N_{\text{rew}}^\star},\overline{N_{\text{ver}}^\star})\) on LiveCodeBench (\(100\) permutations each). For each problem, \(N_\text{rew}^\star\) is the number of generated candidates and \(N_\text{ver}^\star\) is the number of top-ranked candidates verified. Color encodes total cost; faint diagonals are iso-cost contours. The wide spread shows that fixed \((N_\text{rew},N_\text{ver})\) policies can over-spend on easy problems and under-spend on hard ones.}
        \label{fig:mk-iso}
    \end{minipage}
    \hfill
    \begin{minipage}[t]{0.495\linewidth}
        \centering
        \includegraphics[width=0.85\linewidth]{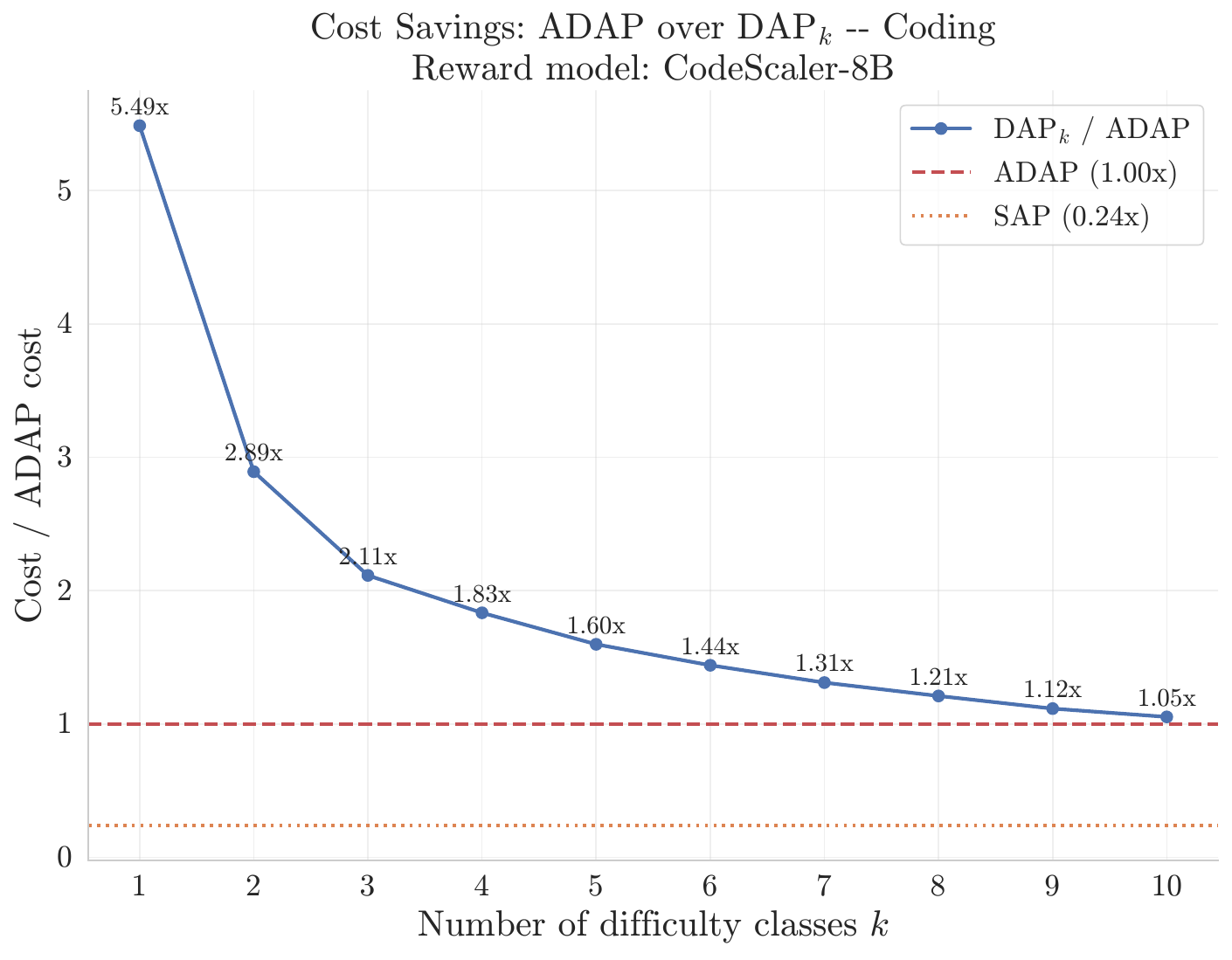}
        \captionof{figure}{
        The plot shows average cost ratios of two baselines---Difficulty Adaptive Policy ($\DAP_k$) and \SAP Policy---relative to \Adap. \SAP retroactively picks the minimum-cost generation count $N_{\text{rew}}$ and verification count $N_{\text{ver}}$ such that verifying the $N_{\text{ver}}$ highest-reward samples among $N_{\text{rew}}$ finds a correct answer. $\DAP_k$ estimates problem difficulty from the fraction of successful samples, partitions them into $k$ classes optimally, and assigns each the cheapest uniform strategy ensuring $100\%$ success.
        }
        \label{fig:performance-alg}
    \end{minipage}
\end{figure}

\subsection{Contributions}

  We study inference-time generate--rank--verify pipelines, where candidate 
responses are sampled from a language model, scored by a cheap reward model, and 
checked by an expensive verifier. Motivated by the compute-allocation question 
above, we formalize this setting as a cost-sensitive sequential decision problem, 
which we call \emph{active search}. Let \(\rewcost>0\) denote the cost of generating one candidate 
response and scoring it with the reward model, and let \(\vercost>0\) denote the 
cost of one verifier call. For each prompt \(x\), the generator and reward model induce an unknown
distribution \(\Dist_x \in \ProbMeasures{\Reals}\) over reward scores. The verifier induces an unknown
score-conditioned success function \(h_x^\star:\Reals \to [0,1]\), where \(h_x^\star(r)\) is the 
probability that a candidate with reward score \(r\) passes verification. 
A sound policy adaptively generates candidates, observes their reward scores, and 
chooses which candidates to verify. It pays $J = \rewcost N_{\rm rew} + \vercost N_{\rm ver},$  where \(N_{\rm rew}\) is the number of generated-and-scored candidates and 
\(N_{\rm ver}\) is the number of verifier calls, and it may stop only after 
observing a positive verifier label. This formulation captures the central tension 
from the introduction: the policy must decide, online and prompt-by-prompt, when 
to spend cost on more reward-scored samples and when to spend cost on verification.

\begin{enumerate}[leftmargin=*]
 \item \textbf{Optimal Distribution-Aware Benchmark.}  
    We first study the idealized setting in which the prompt-specific score
    distribution and reward--verifier relationship are known. This gives the
    benchmark for online adaptation and clarifies the ideal compute-allocation rule.
    We show that the optimal policy has a simple threshold form: it verifies a
    candidate exactly when its probability of passing the verifier is high enough
    to justify the verification cost (\cref{thm:distribution-aware}). 
    
   \item \textbf{A prompt-dependent online policy: \Adap.}
    In practice, the policy does not know \(\Dist_x\) or \(h^\star_x\), and therefore 
    cannot implement the distribution-aware policy directly. We propose \Adap, a shellwise 
    online algorithm that searches over \emph{dyadic} generation and verification scales (\cref{alg:expected-shellwise-search}).  
    \Adap progressively enlarges the candidate pool and verifies top-ranked 
    candidates, using the reward model only as a ranking signal. Under the natural 
    monotonicity assumption that \(h^\star_x\) is non-decreasing in the reward score, 
    we prove that \Adap achieves expected cost within a constant factor of the 
    distribution-aware optimum, uniformly over all feasible \((\Dist_x,h^\star_x)\) 
    (\cref{thm:expected-shellwise}).

     \item \textbf{Learning-theoretic foundations.} The guarantee for \Adap relies on the assumption that higher reward scores are more likely to pass verification. We show that some such inductive bias is unavoidable. Without structural assumptions on the reward--verifier relationship, any sound online policy can be forced to spend much more than the distribution-aware benchmark. We formalize this obstruction through the \emph{centered star number}, a learning-theoretic complexity measure for active search. For binary concept classes, we prove matching lower and upper bounds: the worst-case adaptivity gap is characterized, up to constants, by \(\min\{\starnum_0,\vercost/\rewcost\}\) where $\starnum_0$ is the centered star number. We also show a separation between active search and active learning \citep{balcan2006agnostic,balcan2016active,hanneke2015minimax}.

\item \textbf{Empirical validation.} We evaluate \Adap on HMMT mathematical reasoning and LiveCodeBench competitive
programming tasks, using exact answer matching and hidden-test execution as verifiers.
On the feasible subset of prompts with at least one correct sampled candidate,
\Adap matches the 100\% success rate of the cheapest fixed \((N_{\rm rew}, N_{\rm ver})\) policy that
also achieves 100\% success, while using substantially lower mean cost: \(2.9\times\)
lower on HMMT and \(5.5\times\) lower on LiveCodeBench in our current experiments.
At the same mean cost as \Adap, the best fixed policy succeeds on only \(84\%\)
and \(88\%\) of trials, respectively. \cref{fig:performance-alg} previews this comparison: \Adap remains competitive with \(\DAP_k\), an oracle
difficulty-stratified proxy for learning-based budget allocation, without using
historical prompt-difficulty labels; see \cref{sec:experiments} for details.
\end{enumerate}

\subsection{Related Work}
\label{sec:related-work}
\paragraph{Generate--rank--verify foundations.}
Many inference-time reasoning systems follow a generate--evaluate--select pattern.
In math reasoning, trained outcome verifiers rank complete solutions~\citep{cobbe2021training}, while process reward models and generative verifiers score intermediate or complete reasoning traces~\citep{lightman2023verify,wang2024mathshepherd,setlur2025processverifiers,zhang2025genrm,khalifa2025thinkprm,zhang2025lessonsprm}.
In code generation, AlphaCode used large-scale sampling followed by execution-based filtering, clustering, and selection~\citep{li2022alphacode}; CodeT and LEVER use generated tests or execution-aware learned verifiers to rank programs~\citep{chen2022codet,ni2023lever}; and CodeRM studies dynamic scaling of generated unit tests to improve reward quality~\citep{ma2025coderm}.
These works motivate our generate--rank--verify abstraction, but mainly study verifier design, empirical scaling, or engineered selection pipelines.
\vspace{-1em}
\paragraph{Adaptive test-time compute allocation.}
A line of work studies how to spend inference-time compute adaptively.
A common theme is that uniform Best-of-$N$ sampling is inefficient because the value of additional samples depends on input difficulty~\citep{snell2024scaling}.
Some methods allocate compute across inputs using predicted reward curves~\citep{damani2025hardthink}, constrained optimization under an average budget~\citep{zhai2026adaptive}, or prompt-level Best-of-$N$ budget allocation~\citep{raman2025adabon}.
Others adapt computation within a single input, stopping generation when further samples are unlikely to improve reward~\citep{kalayci2025optimalstopping,wan2025beacon}, or when answer agreement is sufficiently high in self-consistency sampling~\citep{huang2026optimalbayesian}.
Our focus is different: the policy must decide how to trade off generating additional against verifying available candidates.
\vspace{-1em}
\paragraph{Selective and imperfect verification.}
A related line studies how to use imperfect or costly verification signals.
Weak--strong verification policies decide when a cheap noisy verifier is sufficient and when to defer to a stronger verifier~\citep{kiyani2026weakstrong}, while Derailer--Rerailer uses a lightweight stability check to trigger more expensive reasoning only when needed~\citep{wan2025derailer}.
Closest to our setting, \citet{qu2026adaptive} studies a verification-cost-limited regime and allocates verifier calls across intermediate reasoning states using feasibility gates, learned pre-verification scores, and uncertainty.
In contrast, our verifier is treated as the final certification mechanism, and the main question is how costly certification should be interleaved with generation and reward scoring. 
\vspace{-0.75em}
\paragraph{Active search.}

Active search studies discovery-oriented label querying, where the goal is to find
positives rather than to learn a globally accurate classifier. Classical Bayesian active search is usually pool-based. A fixed unlabeled pool is given in advance, and a known probabilistic model or Bayesian posterior over labels guides which points to query under a labeling budget
\citep{garnett2012bayesian,jiang2017efficient,jiang2018batch}. The closest formulation to ours is cost-effective active search~\citep{jiang2019cost}, which
minimizes labeling cost to find a prescribed number of positives from a fixed pool. Our setting keeps the discovery objective of active search, but the candidate pool is generated online by sampling from a language model and scoring samples with a reward model.
Because the score distribution and reward--verifier relationship are unknown, the policy cannot rely on a known posterior over a fixed pool.
It must decide both which candidates to verify and when to generate more scored candidates, creating a generation--verification tradeoff absent from standard fixed-pool active search.

\paragraph{Learning-Theoretic View of the Challenges in LLMs.}

A related line of work studies the learnability of verifiers when we have chain-of-thought traces. \citep{balcan2025learning}
introduce PAC-style models for learning Chain-of-Thought verifiers, including simple verifiers that
generalize on traces from a fixed distribution and stronger trustable verifiers that aim to reject faulty
reasoning traces more robustly. \citep{balcan2026online} develop an online learning
framework for Chain-of-Thought verification, emphasizing the asymmetric costs of soundness errors
(accepting faulty reasoning) and completeness errors (rejecting correct reasoning), and characterize
the optimal mistake tradeoffs using soundness-completeness variants of the Littlestone dimension. Our work addresses a complementary inference-time decision problem. Rather than learning the verifier, we assume access to a final certifying verifier, such as exact answer checking,
hidden-test execution, or proof verification, and treat each verifier call as costly. We also
assume access to a cheap but noisy reward model that can rank candidate responses. The
goal is therefore not to learn a globally reliable verifier, but to adaptively decide how many
responses to generate and which high-reward candidates to verify in order to find a
verified-correct response at minimum cost.

\paragraph{Inference with Imperfect Proxies.}
Recent theoretical works analyze inference guided by imperfect reward models. \citep{huang2025best} quantify proxy imperfection via squared error relative to a hypothetical true reward, showing that Best-of-$N$ sampling overexploits tail errors; crucially, their setting relies entirely on the proxy and lacks a final verifier. \citep{rohatgi2025taming} formalize process verifiers as approximate value functions, demonstrating how local evaluation errors compound over long generation horizons. We capture proxy imperfection through a fundamentally different lens. Rather than assuming the proxy is calibrated to a true reward or tracks a process value function, we treat it strictly as a cheap, uncalibrated ranking signal. For each prompt $x$, the generator and proxy induce an unknown score distribution $\mathcal{D}_x$, while an exact but costly final verifier defines an unknown conditional success probability $h_x^\star(r)=\Pr(V_x=1\mid R_x=r)$. This decouples the noisy proxy from the ground-truth verifier, shifting the theoretical focus from bounding proxy approximation errors to cost-sensitive active search.

\section{Problem Setup}
\label{sec:problem-setup}

Let $\mathcal X$ denote the prompt space and $\mathcal V$
 the vocabulary. The response space  $ \mathcal Y = \bigcup_{t=0}^{\infty} \mathcal V^t$
is the set of all sequences of arbitrary length. For each prompt $x\in\mathcal X$, the language model induces a distribution $\pi(\cdot \mid x)\in \ProbMeasures{\mathcal Y}$ over candidate responses. We write $Y\sim \pi(\cdot\mid x)$ for one sampled response.

We assume access to two \emph{evaluators} for a prompt-response pair. The first is a
reward model $\mathsf{RM}:\mathcal X\times \mathcal Y \to \mathbb R,$ which returns a real-valued score. The second is a verifier $\mathsf{Ver}:\mathcal X\times \mathcal Y \to \{0,1\},$ which returns the final correctness label. We say a response $y$ is \emph{accepted for prompt $x$} iff $\mathsf{Ver}(x,y)=1$. Crucially, the reward model is only a proxy for the verifier, and the reward model may be noisy or miscalibrated, and correctness is defined by the verifier.

For a fixed prompt $x$, the generator and evaluators induce a joint distribution over reward scores and verifier labels. In particular, if $Y\sim\pi(\cdot\mid x)$, define random variables
$    R_x= \mathsf{RM}(x,Y)$ and $
    V_x = \mathsf{Ver}(x,Y).
$
Then $(R_x,V_x)$ is a random element of $\mathbb R\times\{0,1\}$. We assume throughout that the prompt is feasible, meaning $\Pr_{Y\sim\pi(\cdot\mid x)}\!\left(\mathsf{Ver}(x,Y)=1\right) > 0$ since otherwise no generate-and-verify procedure can succeed. The key quantity connecting the reward model to verification is the conditional success probability
$
    h^\star_x(r) = \Pr(V_x=1 \mid R_x=r).
$
Equivalently, $h^\star_x(r)$ is the probability that a candidate with reward score 
$r$ will pass the verifier. When the prompt is clear from context, we suppress
the subscript $x$ and write $(R,V)$ and $h^\star$.

\subsection{Active Search}

The goal of active search is to find a
verified response while minimizing the total cost of generation, reward
querying, and verification. The policy sees neither which candidates the
verifier will accept nor how reward scores translate into acceptance
probabilities; it must learn enough about both, on the fly, to decide when
to spend a verification call and when to keep sampling. We formalize the
problem in two steps: \cref{def:active-search-instance} specifies the
problem instance induced by a prompt, and \cref{def:active-search}
specifies what it means for a policy to solve it.

\begin{definition}\label{def:active-search-instance}
Fix a prompt $x \in \mathcal X$. The generator $\pi(\cdot\mid x)$, reward model
$\mathsf{RM}$, and verifier $\mathsf{Ver}$ induce an \emph{active search
instance} $(\Dist_x, h^\star_x)$, where
\begin{itemize}[leftmargin=*]
    \item $\Dist_x \in \ProbMeasures{\Reals}$ is the marginal distribution of
    the reward score $R_x = \mathsf{RM}(x,Y)$ when $Y\sim\pi(\cdot\mid x)$;
    \item $h^\star_x : \Reals \to [0,1]$ is the score-conditioned success function
    $h^\star_x(r) = \Pr(V_x = 1 \mid R_x = r)$, where $V_x = \mathsf{Ver}(x,Y)$ when $Y\sim\pi(\cdot\mid x)$. We also assume that $\Pr(V_x=1)>0$.
\end{itemize}
\end{definition}

In practice, $\Dist_x$ is unknown and only accessible via sampling. Furthermore, the true probability that a candidate with score $r$ passes verification, denoted $h^\star_x(r)=\Pr(V_x=1\mid R_x=r)$, is also unknown. Therefore, active search is governed by these two unknown objects: the score distribution $\Dist_x$ and the conditional success function $h^\star_x$. Similar to PAC learning, we impose no assumptions on $\Dist_x$. However, we assume $h_x \in \mathcal{H}$ for a \emph{probabilistic concept class} $\mathcal{H} \subseteq [0,1]^{\Reals}$ \citep{kearns1994efficient}. This is not a strict calibration requirement on the reward model, but rather a necessary inductive bias that allows active search to succeed without exhaustively sampling the space (see \cref{sec:seperation}).

\begin{definition}\label{def:active-search}
Fix a probabilistic concept class $\mathcal H \subseteq [0,1]^{\mathbb R}$. Fix costs $\rewcost>0$ (generation+reward cost per sample) and $\vercost>0$ (verification cost per sample). A \emph{sound active search policy} is a randomized sequential procedure $\Alg$ that knows $\mathcal H,\rewcost,\vercost$ and interacts with an unknown pair $(\Dist,h^\star)$, where $\Dist\in\ProbMeasures{\mathbb R}$ and $h^\star\in\mathcal H$. {
The policy maintains a pool of generated but unverified candidates, represented
by pairs \((Y_i,R_i)\), where \(Y_i\) is the response and \(R_i\) is its reward
score. At each decision time, based on its history, \(\Alg\) chooses one of two
elementary actions: 

\begin{enumerate}[leftmargin=*]
    \item \textbf{Generate.} Draw one fresh response \(Y\sim\pi(\cdot\mid x)\),
    observe its reward score \(R=\mathsf{RM}(x,Y)\), pay cost \(\rewcost\), and
    add \((Y,R)\) to the unverified pool.

    \item \textbf{Verify.} Choose one pair \((Y_i,R_i)\) from the unverified pool,
    query its verifier label, pay cost \(\vercost\), and remove the pair from the
    pool. Conditional on \(R_i\), the revealed label has distribution
    \(V_i\sim\mathrm{Bernoulli}(h^\star(R_i))\). If \(V_i=1\), the policy stops
    and outputs \(Y_i\).
\end{enumerate}

Let \(N_{\rm rew}\) be the total number of generated-and-scored candidates, and
let \(N_{\rm ver}\) be the total number of verifier calls made before stopping.
The random total cost is
\[
    J(\Alg;h^\star,\Dist)
    :=
    \rewcost N_{\rm rew}+\vercost N_{\rm ver}.
\]
If \(\Alg\) never observes a positive verifier label, \(J(\Alg;h^\star,\Dist)=+\infty\). Finally note that the policy’s decisions are measurable with respect to the reward scores of unverified candidates, previously chosen verification indices, and observed verifier labels; it does not inspect the contents of unverified responses except through their reward scores.} 
\end{definition}

\section{Distribution-Aware Benchmark}
\label{sec:distribution-aware}

Before tackling the realistic setting where $\Dist$ and $h^\star$ are unknown, we first establish the fundamental limits of active search by analyzing an idealized algorithm with perfect knowledge of the environment. This distribution-aware optimum provides a strict lower bound on the expected cost and reveals a structural property of the optimal policy that will guide the design of our adaptive algorithm. 

\begin{definition}\label{def:distribution-aware}
A \emph{distribution-aware active search policy} is any valid active search
policy in the sense of \Cref{def:active-search} that is additionally given
perfect knowledge of \((\Dist,h^\star)\). Let \(\Pi_{\mathrm{DA}}(\Dist,h^\star)\) denote the
class of all such policies. The optimal distribution-aware expected cost is
\[
    J^\star(h^\star,\Dist):= 
    \inf_{\pi\in\Pi_{\mathrm{DA}}(\Dist,h^\star)}
    \EE
    \left[
        \rewcost\,N_{\mathrm{rew}}(\pi)
        +
        \vercost\,N_{\mathrm{ver}}(\pi)
    \right],
\]
where policies that never output a verified positive have cost \(+\infty\).
\end{definition}

We then present the main result whose proof can be found in \cref{sec:proof-distribution-aware} as \cref{lem:optimal-streaming-oracle,lem:distribution-aware-online-optimal}. 
\begin{theorem}
\label{thm:distribution-aware}
Fix a prompt \(x\), and suppress the subscript \(x\). Let \(R\sim \Dist\), and let
\(h^\star(r)=\Pr(V=1\mid R=r)\). Assume \(\Pr(V=1)>0\). Define \(\tau^\star\in(0,1)\) as the unique solution of
\[
    \vercost\,\mathbb E_{R\sim\Dist}[\max\{h^\star(R)-\tau^\star,0\}]
    =
    \tau^\star\,\rewcost .
\]
Then  the  optimal expected cost  with  knowledge of $(h^\star,\Dist)$ is \(J^\star(h^\star,\Dist)=\vercost/\tau^\star\). Moreover, an  optimal fully-sequential policy  is as follows: after generating a candidate with reward score \(r\), it verifies immediately if \(h^\star(r)>\tau^\star\), discards it if \(h^\star(r)<\tau^\star\), and breaks ties arbitrarily if $h^\star(r)=\tau^\star$.
\end{theorem}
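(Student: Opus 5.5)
Our plan is to cast the distribution-aware problem as a stationary optimal-stopping problem and to identify its value through a fixed-point (Bellman) equation. Because samples are i.i.d. and the verifier labels of distinct candidates are conditionally independent given their scores, the state after any history is just the multiset of unverified scores, together with the fact that no positive has yet been found. We argue in two steps: first an upper bound $J^\star\le \vercost/\tau^\star$ achieved by the stated threshold policy, then a matching lower bound valid for every policy, including those that hoard candidates and verify them later.

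\emph{Existence and uniqueness of $\tau^\star$.} Let $g(\tau)=\vercost\,\EE[\max\{h^\star(R)-\tau,0\}]-\tau\rewcost$. This function is continuous and strictly decreasing. We have $g(0)=\vercost\Pr(V=1)>0$ and $g(1)=-\rewcost<0$, so there is a unique root $\tau^\star\in(0,1)$.

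\emph{Upper bound.} Consider the threshold policy. Each round generates one candidate at cost $\rewcost$. With probability $q=\Pr(h^\star(R)>\tau^\star)$ it then pays $\vercost$ and succeeds with probability $h^\star(R)$; otherwise it discards the candidate. Rounds are i.i.d. renewals, so by Wald's identity
\[
J=\frac{\rewcost+\vercost q}{\EE[h^\star(R)\mathbb 1\{h^\star(R)>\tau^\star\}]}.
\]
Substituting the defining equation gives $\vercost\,\EE[h^\star\mathbb 1\{h^\star>\tau^\star\}]=\tau^\star\rewcost+\tau^\star\vercost q$, so $J=\vercost/\tau^\star$. Ties have no effect, since they contribute zero to the max-term identity.

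\emph{Lower bound (main obstacle).} The difficulty is ruling out policies that keep a pool of stored candidates and verify them out of order. We would use a potential-function (supermartingale) argument. Set $W=\vercost/\tau^\star$ and define the potential of a state as
\[
\Phi=\min\Bigl(W,\ \min_i \bigl[\vercost+(1-h^\star(R_i))\,\Phi_{-i}\bigr]\Bigr),
\]
or more simply take $\Phi(S)=W-\sum_{i\in S}\vercost\,(h^\star(R_i)/\tau^\star-1)_+$ (truncated), which measures the value of the stored options. We would show that for any policy, the process ``cost paid so far $+\,\Phi(\text{current pool})$'' is a submartingale until stopping. For a generate step, the expected increase in option value is at most $\vercost\,\EE[(h^\star/\tau^\star-1)_+]=\rewcost$ by the defining equation, which exactly offsets the cost paid. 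For a verify step on candidate $i$, we pay $\vercost$ and lose the option; with probability $h^\star(R_i)$ we stop, and otherwise the pool shrinks. We need $\vercost+(1-h_i)\Phi(S\setminus i)\ge \Phi(S)$, and we would check this inequality directly using $\Phi\le W$ and the linearity of the option term.

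If the linear potential fails, for example because the option term makes $\Phi$ negative, our fallback is the standard reduction that any stored candidate can be verified immediately without loss. This follows from an interchange argument, since postponing a verification never helps in a stationary i.i.d. problem with a stopping reward. That reduction brings us back to single-state policies, for which the Bellman equation $J=\rewcost+\EE[\min(J,\vercost+(1-h^\star(R))J)]$ forces $J=\vercost/\tau^\star$ after rearrangement. Optional stopping, with integrability handled by truncating at finite horizons and letting the horizon go to infinity, then yields $\EE[\text{cost}]\ge \Phi(\emptyset)=W$.
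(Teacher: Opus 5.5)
Your argument is correct, and your lower bound takes a genuinely different and shorter route than the paper's.

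\textbf{The paper's route.} The paper first solves the streaming problem via the Bellman equation. For pool-based policies it then builds the exact cost-to-go $W(P)$: verify the above-threshold candidates in decreasing order of $h^\star$, then fall back to $J^\star$. It proves $W(P)\le J^\star$, an insertion bound, and an interchange lemma, so that every elementary action has continuation cost at least $W(P)$. Only then does it run the supermartingale-and-limit step.

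\textbf{Your route.} You run the same final step with the linear option-value potential $\Phi(S)=W-\sum_{i\in S}(h_iW-\vercost)_+$. Here $W=\vercost/\tau^\star$, $h_i=h^\star(R_i)$, and the potential is set to $0$ upon success. This is exactly the lower bound on $W(P)$ obtained by iterating the paper's insertion bound from $W(\emptyset)=J^\star$. It suffices because it is tight at the empty pool.

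\textbf{Your hedging is unnecessary.} The generate step is an exact martingale step, because $\EE[(h^\star(R)W-\vercost)_+]=\rewcost$ is the defining equation divided by $\tau^\star$. For a verify step, set $\Psi=\Phi(S\setminus\{i\})\le W$. The needed inequality $\vercost+(1-h_i)\Psi\ge\Psi-(h_iW-\vercost)_+$ reduces to $\max\{\vercost,h_iW\}\ge h_i\Psi$. This holds whatever the sign of $\Psi$, since $h_i\Psi\le h_iW$.

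Negativity of $\Phi$ is also harmless in the limit, which only uses $\Phi\le W$ on $\{T>t\}$. Let $C_t$ be the cost after $t$ actions and $T$ the stopping time. Then $\EE[C_{t\wedge T}]\ge W\Pr(T\le t)\to W$ whenever $\EE[C_T]<\infty$. Integrability at each finite $t$ follows from $|\Phi_t|\le(1+t)W$.

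So you can drop three things:
\begin{itemize}
\item the recursive potential, which is just the value function;
\item the truncation;
\item the fallback interchange argument. Its premise, as stated, is false for below-threshold candidates, and making it rigorous would amount to redoing the paper's Steps 2--3.
\end{itemize}

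\textbf{What each route buys.} Yours is self-contained and needs no ordering or interchange argument and no separate streaming lemma. It pairs with a direct renewal computation for the upper bound, including the correct observation that ties leave the ratio $\vercost/\tau^\star$ unchanged. The paper's route yields strictly more: the optimal cost-to-go and an explicit optimal continuation policy from every pool state, which a one-sided potential cannot provide.
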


\cref{thm:distribution-aware} shows that the optimal distribution-aware policy is a simple threshold rule: verify a candidate if and only if its conditional success probability exceeds a single break-even threshold. The break-even threshold  is based on comparing the immediate chance of success and the continuation value of discarding it and drawing again.

\section{ADAP: Adaptive Policy}
\label{sec:non-dec-alg}

\cref{thm:distribution-aware} reveals the structural property of the optimal distribution-aware policy. However, in practice, the true functions governing the environment are unknown. To bridge this gap, we introduce \Adap, an 
adaptive policy that searches for the right threshold scale online. The policy does not require prior knowledge of $\mathcal{D}_x$ or $h^\star_x$; it only relies on the natural monotonicity assumption that higher reward scores are more likely to correspond to 
correct outputs. In \cref{app:reward-signal}, we empirically validate this assumption on both coding and math benchmarks, both in aggregate and at the per-prompt level.

\begin{assumption} \label{assumption:non-dec}
We assume $h^\star \in \conceptclass_{\text{non-dec}} $ where  $ \conceptclass_{\text{non-dec}} = \{h: \Reals \to [0,1] \mid \forall~r_1\leq r_2~ h(r_1)\leq h(r_2)\}$ is the family of non-decreasing functions.
\end{assumption}
This monotonicity serves as our primary inductive bias, ensuring that the reward model's rankings are informative. As we will formally prove in \cref{sec:seperation}, such structural assumptions are not technical conveniences, but fundamental requirements. Now, we are ready to formally state the performance guarantee of \Adap. After that we give an  overview of the proof (Formal proof given in \cref{sec:proof-adap})

\begin{algorithm}[h]
\caption{$\Alg_{\Adap}$: Shellwise Active Search}
\label{alg:expected-shellwise-search}
\begin{algorithmic}[1]
\Require costs $\vercost,\rewcost>0$ and prompt $x$
\Ensure A response $y$ with observed label $\mathsf{Ver}(x,y)=1$

\State $\cmin \gets \min\{\rewcost,\vercost\}$
\State $\Pool \gets \emptyset$ \Comment{pool of generated but unverified pairs}

\For{$s=0,1,2,\dots$}
    \State $\mathcal S_s \gets \left\{(a,b)\in \mathbb Z_{\ge 0}^2 :
    a\le b,\;
    2^s \cmin \le \rewcost\,2^b + \vercost\,2^{\,b-a} < 2^{s+1}\cmin
    \right\}$

    \If{$\mathcal S_s=\emptyset$}
        \State continue
    \EndIf

    \State $b_s^\star \gets \max\{b:(a,b)\in\mathcal S_s\}$ ,  $j_s^\star \gets \max\{b-a:(a,b)\in\mathcal S_s\}$

    \State $m_s \gets \left\lceil 2^{b_s^\star+1}\right\rceil$ , $k_s \gets \left\lceil 6\cdot 2^{j_s^\star}\right\rceil$

    \State Draw fresh responses $y_1,\dots,y_{m_s}\stackrel{\mathrm{i.i.d.}}{\sim}\pi(\cdot\mid x)$
    \State Compute reward scores $\{R_i=\mathsf{RM}(x,y_i)\}_{i \in [m_s]}$ and add $\Pool \gets \Pool \cup \{(y_i,R_i):i\in[m_s]\}$
    \State {Relabel the elements of $\Pool$ as
    $(\tilde y_1,\tilde R_1),\dots,(\tilde y_{|\Pool|},\tilde R_{|\Pool|})$
    so that
    \( \tilde R_1\ge \tilde R_2\ge \cdots\ge \tilde R_{|\Pool|} . \)}

    \For{$j=1,\dots,\min {\{k_s,|\Pool|\}}$}
        \State Query $\mathsf{Ver}(x,\tilde{y}_{(j)})$
        \State Remove $(\tilde y_j,\tilde R_j)$ from $\Pool$
        \If{$\mathsf{Ver}(x,
        \tilde{y}_{(j)})=1$}
            \State \Return $y_{(j)}$
        \EndIf
    \EndFor
\EndFor
\end{algorithmic}
\end{algorithm}

\begin{theorem}
\label{thm:expected-shellwise}
Under the setup of \cref{sec:problem-setup}, fix costs
\(\rewcost,\vercost>0\). Let \(\Alg_{\Adap}\) be the shellwise active search
policy in \cref{alg:expected-shellwise-search}. Then, for every feasible
prompt \(x\), every generator \(\pi(\cdot\mid x)\), and every induced pair
\((\Dist,h^\star)\) with \(h^\star\in\mathcal H_{\mathrm{non\text{-}dec}}\) (see \cref{def:active-search-instance}),
\(\Alg_{\Adap}\) is sound and satisfies
\[
    \mathbb E\!\left[J(\Alg_{\Adap};h^\star,\Dist)\right]
    \le
    400\,J^\star(h^\star,\Dist),
\]
where $J^\star(h^\star,\Dist)$ is the optimal cost of the distribution-aware policy.
\end{theorem}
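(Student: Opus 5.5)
Soundness is immediate. Every shell generates fresh i.i.d.\ samples and verifies at least one pooled candidate (nonempty $\mathcal S_s$ occurs for infinitely many $s$). Each verification of a fresh draw succeeds with probability at least $\Pr(V=1)>0$ in aggregate, so a positive label is found almost surely. For the cost bound I will compare \Adap to a single \emph{non-adaptive witness} built from the optimal threshold policy of \cref{thm:distribution-aware}.

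\textbf{Step 1: the witness scale.} Let $p:=\Pr_{R\sim\Dist}(h^\star(R)\ge\tau^\star)$ be the mass of the verify region and $q:=\EE[h^\star(R)\mathbf 1\{h^\star(R)\ge \tau^\star\}]$. From the fixed-point equation, $\vercost(q-\tau^\star p)=\tau^\star\rewcost$, so $J^\star=\vercost/\tau^\star=(\rewcost+\vercost p)/q$. By monotonicity (\cref{assumption:non-dec}), the region $\{h^\star\ge\tau^\star\}$ is an upper set of scores, i.e.\ a top quantile of mass $p$ (ties handled by randomization or treated as part of the region). Choose dyadic $b,a$ with $2^b\approx 1/q$ samples and $2^{b-a}\approx p/q$ verifications. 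Then $\rewcost 2^b+\vercost 2^{b-a}\lesssim J^\star$, so this pair lies in a shell $s^\circ$ with $2^{s^\circ}\cmin=\Theta(J^\star)$. Here I need $p\le 1$ and $q\le p$ to guarantee $a\le b$ (clamp if necessary).

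\textbf{Step 2: per-shell success probability.} In every shell $s\ge s^\circ$, the algorithm draws $m_s\ge 2^{b+1}$ fresh samples and verifies the top $k_s\ge 6\cdot 2^{b-a}$ of the pool. The top-$k_s$ of the pool dominates the top-$k_s$ of the fresh batch, since older pooled items are only added. I want to show that with constant probability $\ge c$, one of these verified candidates is positive. The plan is to use the number of fresh samples landing in the top-$p$ quantile, which is Binomial$(m_s,p)$ with mean $\gtrsim 2p/q$, and to bound it by $6p/q$ with probability $\ge 5/6$ by Markov. On that event, all of them are verified, or else the verified ones all have higher score and hence higher $h^\star$. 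The expected number of positives among the region samples is $m_s q\ge 2$, and a second-moment or independence argument gives at least one positive with probability $\ge 1-e^{-2}$.

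The main obstacle is the case where the region is not fully verified and older pooled items with high score but negative-leaning outcomes crowd the top. I would handle it by conditioning on the scores: labels are independent Bernoulli$(h^\star(R_i))$. Previously verified items are removed, and the remaining pool items ranked above region items have $h^\star\ge\tau^\star$, so replacing region items by higher-scored ones only raises success probability, by a coupling on independent Bernoullis. This should yield a uniform constant $c$, roughly $1/2$.

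\textbf{Step 3: summing.} The cost of shell $s$ is at most $\rewcost m_s+\vercost k_s\le 12\cdot 2^{s+1}\cmin$, a geometric sequence. Shells $s<s^\circ$ contribute $O(2^{s^\circ}\cmin)=O(J^\star)$ in total. Shell $s^\circ+t$ is reached with probability at most $(1-c)^t$ and costs $O(2^t J^\star)$, so the series converges provided $2(1-c)<1$, i.e.\ $c>1/2$. If $c$ turns out below $1/2$, I would redo Step 2 for shell $s^\circ+t$ using its larger $m_s,k_s$ (about $2^t$ times more). Then the failure probability decays like $e^{-\Omega(2^t)}$, which beats the $2^t$ cost growth. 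Tracking constants should give the factor $400$.
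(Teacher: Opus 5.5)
Your architecture matches the paper's: a dyadic pair $(a,b)$, its shell $s^\circ$, a per-shell failure bound, and a geometric sum. Working directly with the optimal upper set $U=\{r:h^\star(r)\ge\tau^\star\}$ via $J^\star=(\rewcost+\vercost p)/q$ is a fine simplification of the paper's route, which bounds the cost by $400J_t$ for every threshold $t$ and then takes an infimum; only the upper-set property of $U$ is used.

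But Step~2 has a genuine gap. You let $N\sim\mathrm{Bin}(m_s,p)$ count the region points among \emph{all} fresh draws of the shell, and you want $N\le 6p/q\le k_s$ by Markov. That requires an \emph{upper} bound $m_s\lesssim 2^{b+1}$, and you only have a lower bound. In fact $m_s=\lceil 2^{b_s^\star+1}\rceil$ is set by the largest $b'$ in the shell. The pair $(b',b')$ costs only $\rewcost 2^{b'}+\vercost$, so $m_s p$ can exceed $k_s$ by a large factor when $\vercost p\gg\rewcost$. Concretely, take $h^\star\equiv\epsilon<1$ and $\vercost=100\rewcost$. Then $p=1$, $q=\epsilon$, $a=0$ and $s^\circ=b+6$. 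Shell $s^\circ+u$ has $b^\star_{s^\circ+u}\ge b+6+u$ but $j^\star_{s^\circ+u}=b+u$. Hence $N=m_{s^\circ+u}\ge 2^{b+7+u}>6\cdot 2^{b+u}=k_{s^\circ+u}$ deterministically, so the event you condition on never occurs. (The algorithm does succeed on this instance, but not for the reason you give.)

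The missing idea, which is exactly what the paper does, is to analyze in shell $s^\circ+u$ only a fixed sub-batch of exactly $2^{b+u+1}$ of the fresh draws.
\begin{itemize}
\item On this sub-batch, the number $Z$ of region points with label $1$ is $\mathrm{Bin}(2^{b+u+1},q)$, so $\Pr(Z=0)\le e^{-2^{u+1}}$.
\item The region count has mean at most $2^{b-a+u+1}\le k_{s^\circ+u}/3$, so Chernoff (or Markov, for a constant-probability version) controls $\Pr(N>k_{s^\circ+u})$.
\item Success then transfers to the actual verification step by rank domination. For multisets $A\subseteq B$ and $k\le\ell$, the $j$-th largest score of $B$ is at least the $j$-th largest of $A$. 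Monotonicity of $h^\star$ then gives $\prod_{r\in B_\ell}(1-h^\star(r))\le\prod_{r\in A_k}(1-h^\star(r))$. Conditionally on the history, the unverified labels are independent Bernoulli$(h^\star(R_i))$.
\end{itemize}
Your coupling remark is this inequality, but you apply it only to older pooled items. It must also cover the passage from the sub-batch to the full fresh batch.

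With this repair your uniform-$c$ route does close: Markov now gives $\Pr(N>k)\le 1/3$, so $c\ge 2/3-e^{-2}>1/2$. However, a uniform $c$ yields a constant well above $400$. The stated $400$ needs your fallback, the doubly exponential failure bound at growing scales. That fallback uses $(a,b+u)\in\mathcal S_{s^\circ+u}$, which you assume rather than prove. It is immediate, since $\rewcost 2^{b+u}+\vercost 2^{b+u-a}=2^u(\rewcost 2^{b}+\vercost 2^{b-a})$.
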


\begin{remark}
The multiplicative nature of this guarantee is a critical feature of \Adap. Because the policy's cost scales proportionally with the distribution-aware optimum, ``easy'' prompts that admit a low optimal cost are answered cheaply. This directly operationalizes our core motivation that test-time compute varies significantly based on the difficulty of the specific prompt (see \cref{fig:mk-iso}).
\end{remark}

\subsection{Main ideas behind ADAP}  Fix a prompt \(x\). We will suppress the dependence of parameters on subscript \(x\) for brevity.
{The distribution-aware policy from \cref{sec:distribution-aware} shows that, when \(h^\star\) is known and non-decreasing, the right behavior is a threshold rule on the reward score (see \cref{lem:monotone-threshold-reduction}). This motivates comparing \Adap to an arbitrary threshold \(t\). Define}
$$ q_t=\Pr_{R\sim \Dist}(R\ge t) \quad\text{and}\quad s_t=\Pr(R\ge t,V=1)=\mathbb E_{R\sim \Dist}\!\left[h^\star(R)\mathbf 1\{R\ge t\}\right].$$ A threshold policy that generates candidates and verifies only
when \(R\ge t\) has expected cost
\[
    J_t=\frac{\rewcost+\vercost q_t}{s_t}.
\]
Indeed, each generated candidate costs \(\rewcost\), incurs an additional verification cost with
probability \(q_t\), and succeeds with probability \(s_t\). The unknown quantities \(q_t\) and \(s_t\) determine the relevant sample and
verification scales. If 
\[
    q_t\approx 2^{-a},
    \qquad
    s_t\approx 2^{-b},
    \qquad a\le b,
\] 
then \(2^b\) generated candidates have constant total success probability above threshold, while the expected number of candidates above threshold is $2^b q_t \approx 2^{b-a}$.
Thus the pair \((a,b)\) suggests the natural scales $m\asymp 2^b$ and $k\asymp 2^{b-a}$. The corresponding cost scale is $  B_{a,b}
    =
    \rewcost 2^b+\vercost 2^{b-a}.$

Since \Adap does not know which threshold, and hence which pair \((a,b)\), is appropriate, it searches over these dyadic possibilities by cost scale, similar to the general techniques in \citep{cesa1997use}. Shell \(s\) contains all pairs \((a,b)\) whose cost \(B_{a,b}\) is within a factor of two of \(2^s\cmin\) (see \cref{lem:expectation-dyadic}). At shell \(s\), the policy chooses the largest sample scale and the largest verification scale represented in that shell, generating \(m_s\) new candidates and allowing \(k_s\) verifier calls. Starting from small shells and moving geometrically upward ensures that the cost paid before reaching the right scale is only a constant-factor geometric prefix.

The implementation is pool-based. \Adap keeps a pool of generated but unverified responses. Each shell adds its newly generated candidates to this pool, ranks all candidates by reward, and verifies the top $k_s$. Every queried response is removed from the pool, regardless of the verifier outcome. Thus the pool always contains exactly the candidates whose labels are still unknown, and high-reward candidates discovered in earlier shells remain available to later shells if they were not yet queried.

The monotonicity assumption is what makes this ranking meaningful. Since $h^\star$ is non-decreasing, higher reward scores have weakly larger conditional probability of passing verification. Therefore, at any point, the best use of a limited verification budget is to spend it on the highest-reward unverified candidates. The proof below formalizes this intuition through a success-mass argument: at the correct dyadic scale, the pool's top-ranked candidates contain enough conditional success probability for the shell to succeed with constant probability, and later shells amplify this success probability rapidly.

\section{Experiments}
\label{sec:experiments}

\subsection{Setup}
\label{sec:exp-setup}

We evaluate \Adap \footnote{The implementation of ADAP is publicly available at \url{https://github.com/yhkalayci/efficient_query}} on two domains where verification is costly relative to sampling: mathematical reasoning and competitive programming.

\paragraph{Mathematical reasoning.}
We use the HMMT \citep{hmmt_dataset} February 2024 and 2025 contests ($60$ problems total). Solutions are sampled from \texttt{Qwen2.5-Math-7B} (base) \citep{qwen25} at temperature $0.7$, top-$p$ $0.95$, with $N=512$ samples per problem. Each candidate is scored by \texttt{Qwen2.5-Math-PRM-7B} \citep{qwen25-prm}; we use the last-step score as the per-sample reward. Verification is exact answer-string matching against the published numerical answer indicated as \texttt{boxed\{\}}. After filtering to problems with at least one correct sample, $22$ problems remain. Results with two alternative generators (\texttt{Qwen2.5-14B} \cite{qwen25} and \texttt{DeepSeek-R1-Distill-Qwen-7B} \cite{deepseek-r1}) appear in Appendix~\ref{app:other-math-models}.

\paragraph{Coding.}
We use medium- and hard-difficulty problems from LiveCodeBench (LeetCode, AtCoder, Codeforces). Solutions are sampled from \texttt{Qwen2.5-Coder-3B} (base) \citep{qwen25-coder} under identical settings. Each candidate is scored by \texttt{CodeScaler-8B} \citep{codescaler}. Verification runs the candidate against the full hidden test suite in a subprocess; a sample is correct only if every test passes. After excluding problems without any correct solution, $83$ problems remain.

\paragraph{Cost model and trial protocol.}
We set $c_{\mathrm{rew}}=1$ and $c_{\mathrm{ver}}=10$, so the cost of $N_\text{rew}$ draws and $N_\text{ver}$ verifications is $N_\text{rew} \cdot c_{\mathrm{rew}} + N_\text{ver} \cdot  c_{\mathrm{ver}}$. A trial succeeds if at least one verified sample is correct. Each (problem, policy) pair is evaluated over $10$ random permutations of sample order, with the same permutation used across policies to enable paired comparison. In Appendix~\ref{app:reward-signal}, we demonstrated that the reward model reliably ranks correct samples above incorrect ones with higher probability as we assumed in our theoretical model. To demonstrate cost-ratio robustness across $c_{\mathrm{ver}}/c_{\mathrm{rew}} \in \{1,10,20,30\}$, we presented results in Appendix~\ref{app:cost-sweep}.

\subsection{Baselines}
\label{sec:exp-algorithms}

\paragraph{Sample-aware lower bound $\SAP$.}

For each individual trial, $\SAP$ retroactively selects the cheapest $(N_{\text{rew}},N_{\text{ver}})$ such that drawing $N_{\text{rew}}$ samples and verifying the top-$N_{\text{ver}}$ by reward would yield at least one correct answer on that exact trial. Because it observes correctness labels at decision time, it is strictly more powerful than any realizable policy, and we use its mean cost as a per-trial lower bound.

\vspace{-1em}
\paragraph{Difficulty-adaptive policy $\DAP_k$.}
$\DAP_k$ exploits knowledge of each problem's difficulty, measured by the empirical pass rate $\hat{r}_p$ defined as number of correct generations divided by total generations. Problems are sorted by $\hat{r}_p$ and partitioned into $k$ contiguous difficulty classes $G_1,\ldots,G_k$; within class $G_i$ the policy commits to a class-specific fixed pair $((N_{\text{rew}})_i,(N_{\text{ver}})_i)$ that achieves $100\%$ success on every problem in $G_i$ at minimum average cost. The partition and all $(((N_{\text{rew}})_i^\star,(N_{\text{ver}})_i^\star))$ are selected jointly by dynamic programming to minimize total mean cost. $\DAP_k$ is an oracle baseline which requires knowing per-problem pass rates in advance. However, it models the difficulty-aware strategies commonly used in practice.

The degenerate case $k{=}1$ applies $(N_{\text{rew}},N_{\text{ver}})$ uniformly to all problems. As a cost-matched reference, we define $\Uni_{C_{\Adap}}$ as the best uniform pair whose cost does not exceed $\Adap$'s mean cost. This is a special case of $\DAP_1$ that matches $\Adap$'s budget but does not guarantee $100\%$ success.

\vspace{-0.5em}
\subsection{Results}
\vspace{-0.5em}
Table~\ref{tab:summary} reports mean generation count $\overline{N_{\text{rew}}}$, verification count $\overline{N_{\text{ver}}}$, mean cost, success rate, and cost ratio relative to $\Adap$ for all policies on both tasks.

\begin{figure}[t]
    \centering
    \begin{subfigure}[b]{0.495\linewidth}
        \includegraphics[width=\linewidth]{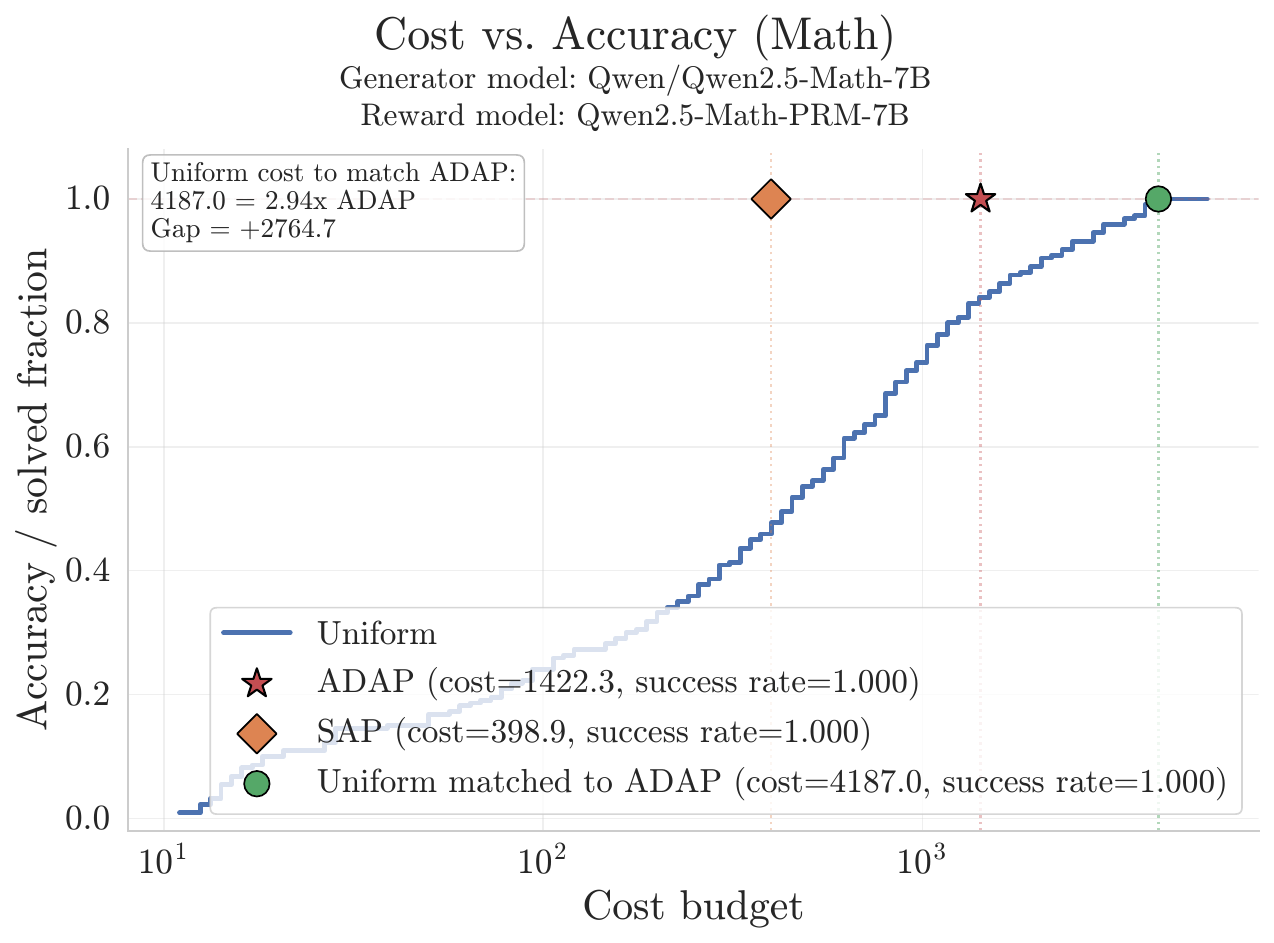}
        \caption{Math: Cost vs.\ Accuracy.}
    \end{subfigure}
    \hfill
    \begin{subfigure}[b]{0.495\linewidth}
        \includegraphics[width=\linewidth]{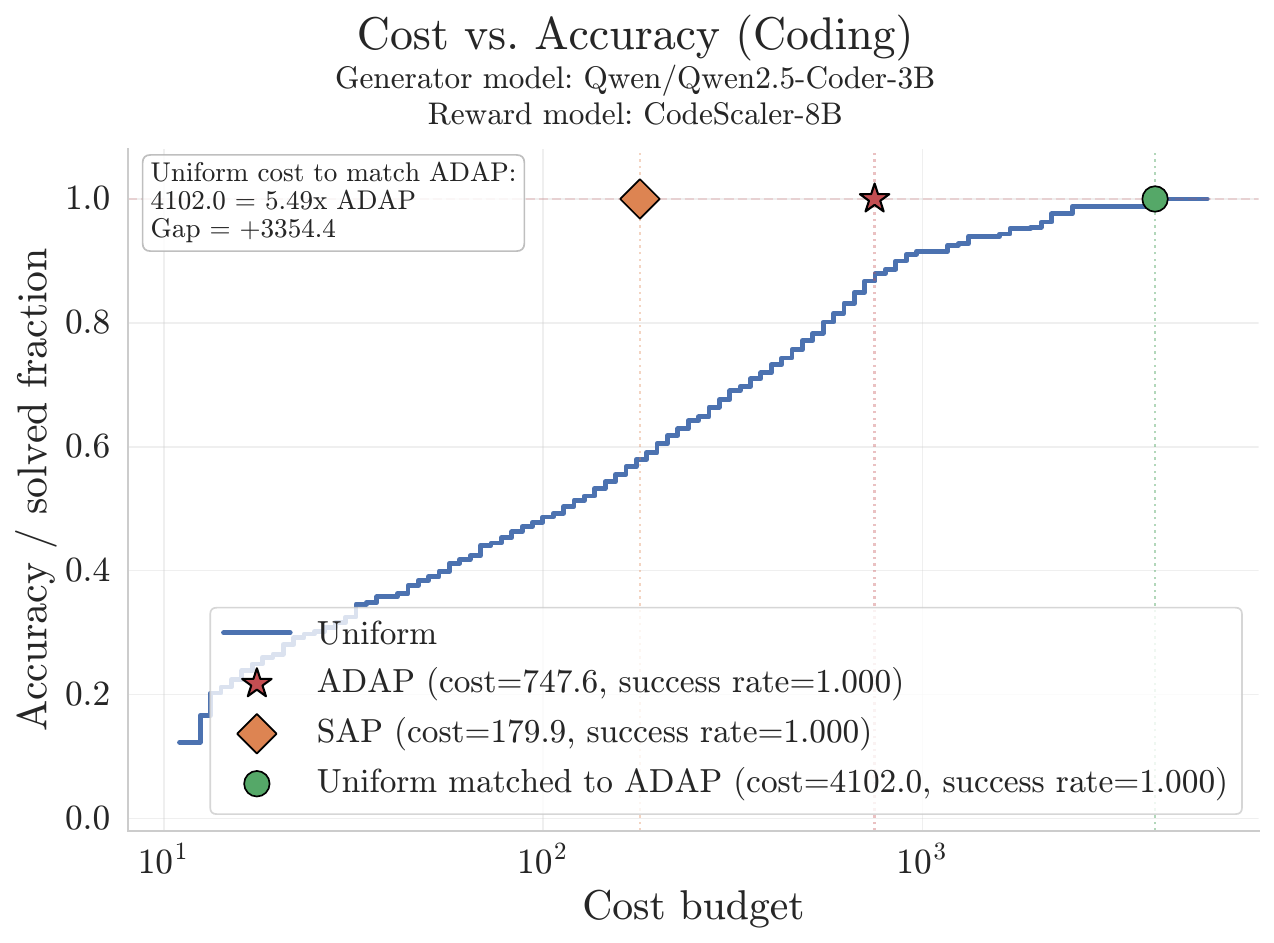}
        \caption{Coding: Cost vs.\ Accuracy.}
    \end{subfigure}\\[0.3em]
    \begin{subfigure}[b]{0.495\linewidth}
        \includegraphics[width=\linewidth]{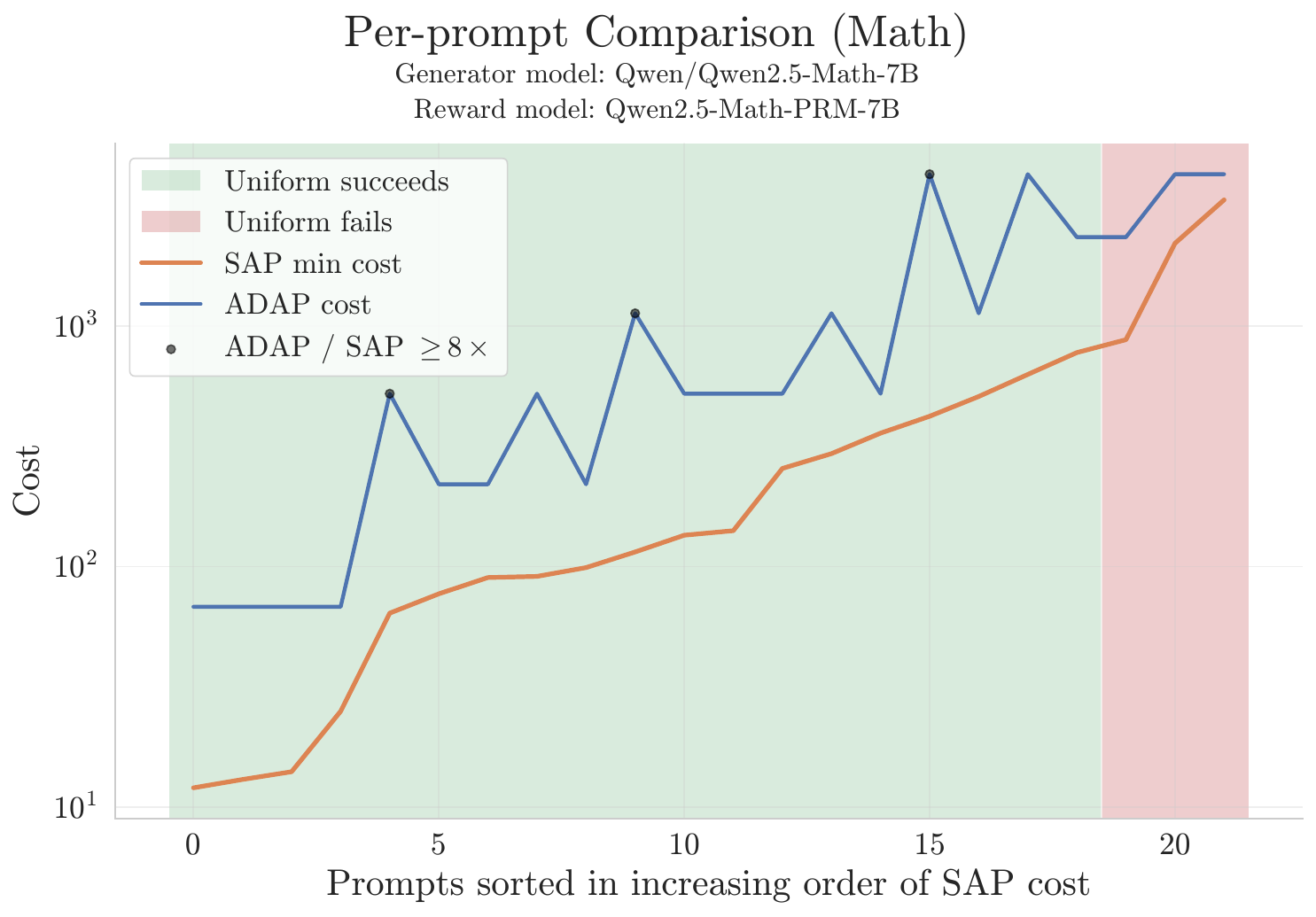}
        \caption{Math: Per-prompt comparison across three policies.}
    \end{subfigure}
    \hfill
    \begin{subfigure}[b]{0.495\linewidth}
        \includegraphics[width=\linewidth]{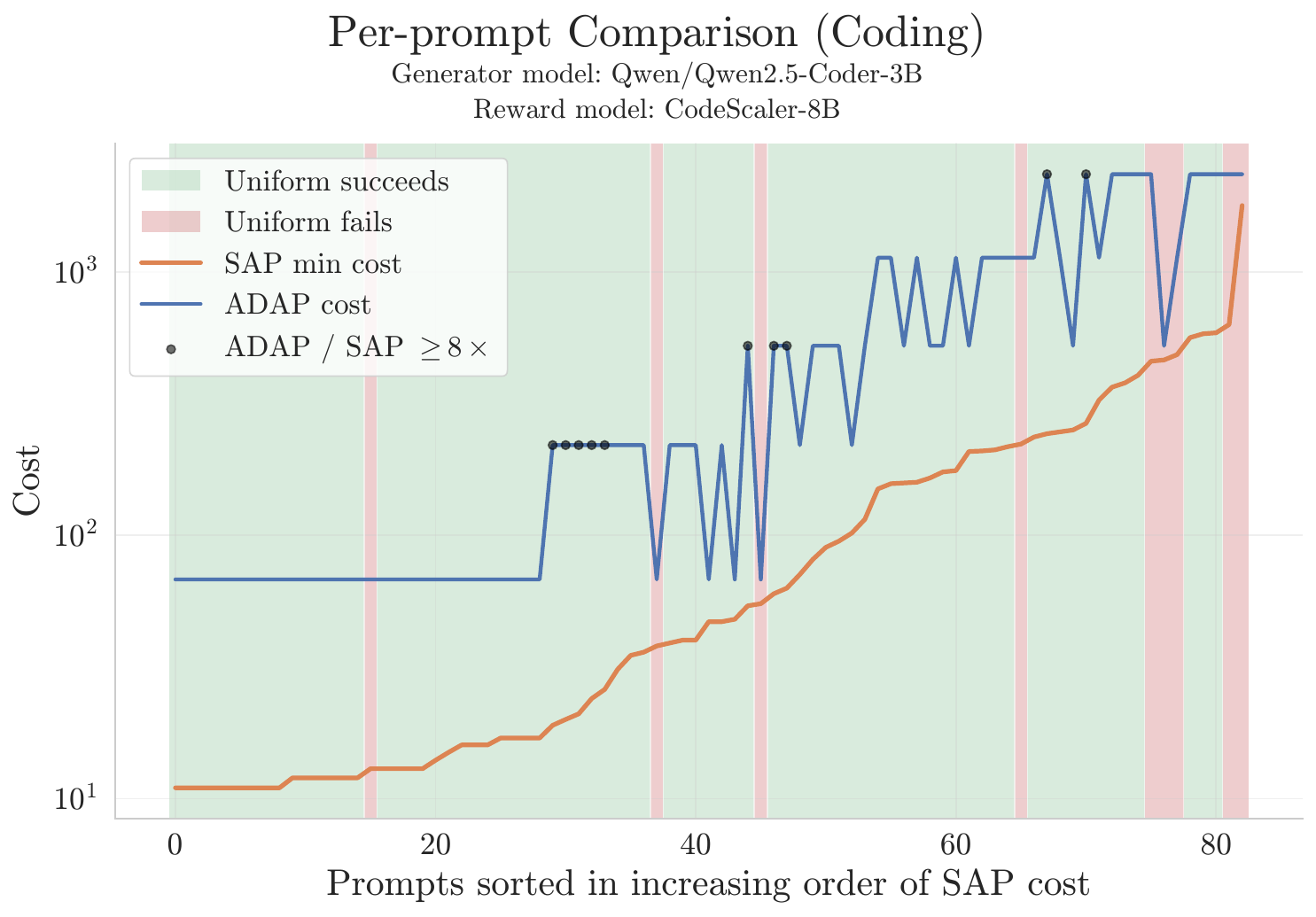}
        \caption{Coding: Per-prompt comparison across three policies.}
    \end{subfigure}
    \caption{\textbf{Top:} success rate vs.\ cost budget for the best Uniform strategy (blue step curve), with \SAP (orange diamond) and \Adap (red star) marked at their mean costs. \textbf{Bottom:} per-prompt costs of \Adap (blue) and \SAP (orange), sorted by \SAP cost; background shading indicates whether the cost-matched Uniform strategy succeeds (green) or fails (red) on each trial.}
    \label{fig:results}
\end{figure}

\begin{table}[h]
    \small
    \renewcommand{\arraystretch}{0.6}
    \caption{Summary of all policies on Math (HMMT, $22$ problems, $10$ permutations) and Coding (LiveCodeBench, $83$ problems, $10$ permutations). Cost ratio is relative to $\Adap$.}
    \label{tab:summary}
    \centering
    \begin{tabular}{l ccccc ccccc}
        \toprule
        & \multicolumn{5}{c}{Math (HMMT)} & \multicolumn{5}{c}{Coding (LiveCodeBench)} \\
        \cmidrule(lr){2-6} \cmidrule(lr){7-11}
        Policy
          & $\overline{N_{\text{rew}}}$ & $\overline{N_{\text{ver}}}$ & Avg.\ cost & Ratio & Succ.\
          & $\overline{N_{\text{rew}}}$ & $\overline{N_{\text{ver}}}$ & Avg.\ cost & Ratio & Succ.\\
        \midrule
        $\SAP$
          & $103.7$ & $29.5$  & $398.9$  & $\times$0.28 & $1.000$
          &  $92.4$ &  $8.7$  & $179.9$  & $\times$0.24 & $1.000$ \\
        $\Adap$
          & $219.6$ & $120.2$ & $1422.3$ & $\times$1.00 & $1.000$
          & $140.4$ & $60.5$  & $745.3$  & $\times$1.00 & $1.000$ \\
        $\Uni_{C_{\Adap}}$
          & $182.0$ & $124.0$ & $1422.0$ & $\times$1.00 & $0.841$
          & $414.0$ & $33.0$  & $744.0$  & $\times$1.00 & $0.878$ \\
        \midrule
        $\DAP_1$
          & $497.0$ & $369.0$ & $4187.0$ & $\times$2.94 & $1.000$
          & $512.0$ & $359.0$ & $4102.0$ & $\times$5.50 & $1.000$ \\
        $\DAP_2$
          & $-$ & $-$ & $2164.8$ & $\times$1.52 & $1.000$
          & $-$ & $-$ & $2162.1$ & $\times$2.90 & $1.000$ \\
        $\DAP_3$
          & $-$ & $-$ & $1792.7$ & $\times$1.26 & $1.000$
          & $-$ & $-$ & $1582.1$ & $\times$2.12 & $1.000$ \\
        $\DAP_5$
          & $-$ & $-$ & $1477.3$ & $\times$1.04 & $1.000$
          & $-$ & $-$ & $1194.3$ & $\times$1.60 & $1.000$ \\
        $\DAP_{10}$
          & $-$ & $-$ & $1169.4$ & $\times$0.82 & $1.000$
          & $-$ & $-$ & $786.9$  & $\times$1.06 & $1.000$ \\
        \bottomrule
    \end{tabular}
\end{table}

\paragraph{Math (HMMT).}
$\Adap$ achieves $100\%$ success at mean cost $1422$, while $\SAP$'s mean cost is $399$ ($0.28\times$). As shown in Figure~\ref{fig:results}, $\Adap$'s per-trial cost tracks $\SAP$'s closely across the difficulty spectrum. At the same budget, $\Uni_{C_{\Adap}}$ succeeds on only $84\%$ of trials, a notable gap that confirms a uniform fixed strategy cannot replicate $\Adap$'s reliability at equal cost. Among difficulty-stratified baselines, $\DAP_1$ costs $2.94\times$ more than $\Adap$, and the gap narrows as $k$ grows: $\DAP_5$ still spends $1.04\times$ more, while $\DAP_{10}$ reaches $0.82\times$. With only $22$ problems, ten difficulty classes place on average just two problems per class, making $\DAP_{10}$ nearly a per-problem oracle and thus approaching $\SAP$.

\paragraph{Coding (LiveCodeBench).}
$\Adap$ achieves $100\%$ success at mean cost $745$, with $\SAP$ costing $180$ ($0.24\times$). Figure~\ref{fig:results} again shows $\Adap$'s per-trial cost following $\SAP$'s closely. $\Uni_{C_{\Adap}}$ falls short at $87.8\%$ success, a clear failure mode of uniform strategies on a heterogeneous task. Unlike the math setting, $\DAP_k$ never beats $\Adap$ even at $N_{\text{ver}}{=}10$ ($1.06\times$), showing that the coding task's difficulty structure is harder to exploit with fixed per-class strategies and that $\Adap$'s online adaptation is particularly valuable here.

\section{Learning-Theoretic View of Active Search}
\label{sec:seperation}

In \cref{sec:non-dec-alg}, monotonicity of the score-conditioned success function gave \Adap a constant-factor guarantee relative to the distribution-aware optimum. We now show that some such structure is necessary: for unstructured $\mathcal{H}$, no sound active-search policy can achieve a uniform constant-factor guarantee. The obstruction is that a positive response may be hidden among many locations that are indistinguishable without verification, while a distribution-aware oracle can focus on the right one immediately. We formalize this using the \emph{centered star number}, a variant of the classical star number of \citep{balcan2010true,hanneke2015minimax} adapted to finding a single positive witness. We first prove the lower bound in the binary setting, and then extend it to probabilistic concept classes in \cref{sec:start-num-probalistic}. We also show the seperation and connections between active search and active learning \citep{balcan2006agnostic,balcan2016active} in \cref{sec:activelearn-search}.

\begin{definition} \label{def:starnum}
Let $\mathcal{Z}$ be an arbitrary input space. Let $\conceptclass \subseteq \{0,1\}^\mathcal{Z}$ be a binary concept class over $\mathcal{Z}$ such that it includes the all zero function which is denoted by $h_0$. Then, define 
\[
\starnum_0(\conceptclass) \;=\; \max\Big\{\, m \in \mathbb{N} \;:\; \exists\, z_1, \dots, z_m \in \mathcal{Z},\; h_1, \dots, h_m \in \conceptclass \text{ with } h_i(z_j) = \indic{i = j} \,\Big\}.
\]
with $\starnum_0(\conceptclass) = \infty$ if no finite maximum exists.
\end{definition}

Equipped with this definition, we state an impossibility result.

\begin{theorem}
\label{thm:starnum}
Let $\mathcal{Z}$ be an arbitrary input space. Fix costs \(\rewcost,\vercost>0\). Let
\(\conceptclass\subseteq\{0,1\}^{\mathcal{Z}}\) be a class
containing the all-zero function, and let \(\starnum_0(\conceptclass)\) be its centered star
number. For every sound active search policy
\(\Alg\) from \cref{def:active-search}, there exists \(h^\star\in\conceptclass\) and
\(\Dist\in\ProbMeasures{\mathcal{Z}}\) such that 
$$\displaystyle
    {\mathbb E[J(\Alg;h^\star,\Dist)]}
    \ge
   \frac{1}{4}
    \min\left\{
        \starnum_0(\conceptclass),\frac{\vercost}{\rewcost}
    \right\} J^\star(h^\star,\Dist),
$$
where $J^\star(h^\star,\Dist)$ is the distribution-aware optimum cost.
\end{theorem}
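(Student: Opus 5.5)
Let $m=\min\{\starnum_0(\conceptclass),\lfloor \vercost/\rewcost\rfloor\}$ (or a finite value of $\starnum_0$ truncated at this level; if $m\le 4$ the bound is trivial since $J(\Alg)\ge J^\star$ for every sound policy). By \cref{def:starnum}, fix witnesses $z_1,\dots,z_m$ and $h_1,\dots,h_m\in\conceptclass$ with $h_i(z_j)=\indic{i=j}$. The plan is a Yao-style argument. Take $\Dist$ uniform on $\{z_1,\dots,z_m\}$, draw $I$ uniformly from $[m]$, and set $h^\star=h_I$. The scores are then generated from a distribution that does not depend on $I$, so every generate action reveals nothing about $I$. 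The only information comes from verifier labels. If we show that the Bayes-averaged expected cost of any policy is at least $\frac14 m\,J^\star(h_I,\Dist)$, then some fixed $i$ attains at least the average, which proves the claim.

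\textbf{Upper bound on the benchmark.} Since the distribution-aware policy knows $I$, it generates until it sees $z_I$ and then verifies once. Its cost is $\rewcost m+\vercost$, with expected number of draws equal to $m$. So $J^\star\le \rewcost m+\vercost\le 2\vercost$, using $m\le \vercost/\rewcost$. One could also check this against \cref{thm:distribution-aware}, but the explicit policy is enough.

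\textbf{Lower bound for any policy.} The policy's view is a sequence of i.i.d.\ uniform points in $\{z_1,\dots,z_m\}$. A verification of a point $z_j$ is positive iff $j=I$, and under $h_i$ the labels are deterministic. So each verification of a new location $z_j$ simply tests whether $I=j$; re-verifying a location already found negative is wasted. Any sound policy must therefore test locations one at a time until it hits $I$. Conditional on the (label-independent) generation process and on the order in which the policy chooses to test distinct locations, $I$ is uniform and independent of that order. The number of distinct verifications $N_{\rm ver}$ is the position of $I$ in this order, so $\EE[N_{\rm ver}]\ge (m+1)/2$. This holds for any adaptive order, because the only feedback received before success is ``negative at the tested locations'', which leaves $I$ uniform on the remaining locations. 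Hence $\EE[J]\ge \vercost(m+1)/2\ge \frac{m}{4}\cdot 2\vercost\ge \frac m4 J^\star$.

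\textbf{Main obstacle.} The delicate step is making the Yao/adversary argument rigorous for randomized, adaptive policies that interleave generation and verification and may verify repeated copies of the same $z_j$. I would handle this by coupling: fix the policy's internal randomness and the generation sequence, which are independent of $I$. The policy's behaviour is then a deterministic decision tree whose branches split only on negative labels. This tree induces a fixed order of first tests of each location, and uniformity of $I$ gives the $(m+1)/2$ bound on the number of verifications.

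Two further points need care. First, when $\starnum_0=\infty$ or $\vercost/\rewcost<1$, the min must be handled with floors, possibly losing a factor absorbed by the constant $\frac14$. Second, if $\vercost/\rewcost$ is the binding term, one chooses $m$ as above so that $J^\star\le 2\vercost$ still holds.
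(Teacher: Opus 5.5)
Your proposal is correct and follows essentially the paper's route: the uniform distribution on a centered star, a uniform prior on the hidden index, the oracle bound $J^\star\le\rewcost m+\vercost$, and the $(m+1)/2$ lower bound on expected verifier calls. Your coupling, which fixes the draw sequence and the internal randomness, is a clean way to make the paper's posterior-uniformity step rigorous. The one real difference is the last step: you truncate to $m\le\lfloor\vercost/\rewcost\rfloor$ so that $J^\star\le 2\vercost$, whereas the paper uses the inequality $\frac{(\vercost/2)m}{\rewcost m+\vercost}\ge\frac14\min\{m,\vercost/\rewcost\}$.

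One small fix is needed. The constant $\frac14$ has no slack to absorb the floor, so keep the $+1$ and conclude $\EE[J]\ge\frac{m+1}{4}J^\star\ge\frac14\min\{\starnum_0,\vercost/\rewcost\}J^\star$. This also makes your aside that $m\le 4$ is trivial unnecessary; that aside actually fails when $4<\vercost/\rewcost<5\le\starnum_0$. Only $m=0$ needs the trivial bound $\EE[J]\ge J^\star$.
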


Next we discuss the implication of this result: If \(\starnum_0=\infty\), then no sound active search policy can have
a uniform constant-factor guarantee against the distribution-aware oracle. In
particular, in the regime \(\vercost\gg \rewcost\), the worst-case ratio can be
as large as \(\Omega(\vercost/\rewcost)\). This result shows that without assuming structure on $h$, which corresponds to the case that \(\starnum_0=\infty\), it is impossible to develop a policy that satisfies a guarantee analogous to \cref{thm:expected-shellwise}.

\subsection{Matching Upper Bound}

In this part, we complement our lower bound in \cref{thm:starnum} with an algorithm, \(\Alg_{\rm CS}\) in \cref{alg:centered-star}, that achieves a competitive ratio $O\left( \min\left\{
        \starnum_0(\conceptclass),\frac{\vercost}{\rewcost}
    \right\}\right)$ compared to the distribution-aware benchmark. Note we do
not claim \(\Alg_{\rm CS}\) is computationally efficient for arbitrary \(\mathcal H\).

\begin{theorem}
\label{thm:centered-star-upper}
Let \(\conceptclass\subseteq\{0,1\}^{\mathcal Z}\) contain the all zero function. Assume \(\vercost\ge\rewcost\). Then for every feasible binary
instance \((h^\star,\Dist)\) with \(h^\star\in\conceptclass\), \(\Alg_{\rm CS}\) in \cref{alg:centered-star} is sound and satisfies
\[
    \EE[J(\Alg_{\rm CS};h^\star,\Dist)]
    \le
    6\min\left\{\starnum_0(\conceptclass),\frac{\vercost}{\rewcost}\right\}
    J^\star(h^\star,\Dist).
\]
\end{theorem}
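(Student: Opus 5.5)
The plan is to build \(\Alg_{\rm CS}\) as a choice between two sound sub-policies, selected using only the known quantities \(\starnum_0(\conceptclass)\) and \(\vercost/\rewcost\). First we compute the benchmark in the binary case. Write \(p=\Pr(V=1)=\Pr_{z\sim\Dist}(h^\star(z)=1)>0\). Since \(h^\star\in\{0,1\}\), the threshold rule of \cref{thm:distribution-aware} verifies exactly the samples with \(h^\star(z)=1\). Each draw then costs \(\rewcost\), and the first verification succeeds, so
\[
J^\star(h^\star,\Dist)=\frac{\rewcost}{p}+\vercost .
\]
Equivalently, \(\tau^\star\) solves \(\vercost(1-\tau^\star)p=\tau^\star\rewcost\), giving \(\vercost/\tau^\star=\rewcost/p+\vercost\). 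Both of the bounds below are stated against this expression.

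\emph{Sub-policy 1 (verify everything).} Generate one sample and verify it immediately, repeating until success. This policy is sound, and its expected cost is \((\rewcost+\vercost)/p\). Because \(\vercost\ge\rewcost\), we have
\[
\frac{\vercost}{p}\le\frac{\vercost}{\rewcost}\cdot\frac{\rewcost}{p},
\]
so the ratio to \(J^\star\) is at most \(1+\vercost/\rewcost\le 2\vercost/\rewcost\). This handles the regime in which \(\vercost/\rewcost\) is the minimum.

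\emph{Sub-policy 2 (centered-star search).} This policy runs in epochs \(e=0,1,2,\dots\). Epoch \(e\) performs rounds \(i=0,\dots,e\), and round \(i\) does the following.
\begin{enumerate}
\item Draw a fresh pool \(P\) of \(2^i\) samples.
\item Call \(z\in P\) \emph{isolated} if some \(h\in\conceptclass\) satisfies \(h(z)=1\) and \(h(z')=0\) for all \(z'\in P\setminus\{z\}\).
\item Verify every isolated point of \(P\), stopping at the first positive.
\end{enumerate}

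The key combinatorial step is that \(P\) has at most \(\starnum_0(\conceptclass)\) isolated points. Indeed, the isolated points \(z_1,\dots,z_k\), together with their witnesses \(h_1,\dots,h_k\), satisfy \(h_i(z_j)=\indic{i=j}\), which is exactly the pattern in \cref{def:starnum}. Hence round \(i\) costs at most \(\rewcost 2^i+\vercost\starnum_0\).

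Success comes from the following observation. If \(h^\star\) is positive on exactly one point of \(P\), then \(h^\star\) itself witnesses that this point is isolated, so it is verified and the policy succeeds. Soundness is immediate because the policy stops only on an observed positive.

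Next we bound the success probability at the right scale. Let \(i^\star\) be the index with \(2^{i^\star}\in[1/(2p),1/p]\); if \(p>1/2\), take \(i^\star=0\). In round \(i^\star\), the number of positives in \(P\) is \(\mathrm{Bin}(2^{i^\star},p)\). The probability that this count equals exactly one is at least a constant \(c>0\) (around \(1/4\)). So every epoch \(e\ge i^\star\) succeeds with probability at least \(c\), independently across epochs.

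The cost of epoch \(e\) is at most
\[
\rewcost 2^{e+1}+(e+1)\vercost\starnum_0 .
\]
Summing over epochs up to \(i^\star\), and then over the geometric tail beyond \(i^\star\) weighted by \((1-c)^{e-i^\star}\), gives the expected total cost.

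This accounting is the main obstacle. The sample term is harmless: with the tail weight \((1-c)^{e-i^\star}\) it sums to a constant times \(\rewcost 2^{i^\star}\lesssim\rewcost/p\), which requires \(2(1-c)<1\) and is satisfied by \(c\approx 1/4\). The verification term is the problem: the \((e+1)\) factor produces an extra \(\log(1/p)\) multiplying \(\vercost\starnum_0\), which is not bounded by \(\starnum_0\cdot\rewcost/p\) uniformly.

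My first attempt at a fix is to charge verification calls only in rounds whose pool is large enough. Round \(i\) costs \(\rewcost 2^i+\vercost\min\{\starnum_0,2^i\}\), since a pool of \(2^i\) points has at most \(2^i\) isolated points. A second option is to cap verifications so that each round spends comparable amounts on generation and verification, running round \(i\) only with \(2^i\ge\starnum_0\vercost/\rewcost\) and otherwise padding \(P\). In that variant the verification cost per round is dominated by its sampling cost, up to a factor \(\starnum_0\), and the epoch sums become geometric. I expect the resulting bound to be of the form \(O(\starnum_0)(\rewcost/p+\vercost)\). Getting the exact constant \(6\) will require tuning the epoch schedule.

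Finally, \(\Alg_{\rm CS}\) runs sub-policy 1 when \(\vercost/\rewcost\le\starnum_0\) and sub-policy 2 otherwise. This yields the stated \(\min\{\starnum_0,\vercost/\rewcost\}\) factor, with constants to be tuned so that the total is at most \(6\).
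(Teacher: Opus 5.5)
Your computation $J^\star=\rewcost/p+\vercost$ and your sub-policy 1 agree with the paper's verify-all branch. The gap is in sub-policy 2, and it is not a matter of tuning constants.

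\textbf{Why sub-policy 2 fails.} Verifying only isolated points guarantees success only when the pool contains \emph{exactly one} positive. That forces a search over pool sizes near $1/p$, and each scale can charge $\vercost$ per isolated point. This overhead is genuinely unbounded. Take thresholds $\conceptclass=\{h_0\}\cup\{r\mapsto\indic{r\ge t}:t\in\Reals\}$ with continuous $\Dist$. Here $\starnum_0=1$, and the only isolated point of a pool is its maximum. Choose $1/p=\vercost/\rewcost$, so that $J^\star=2\vercost$. With constant probability, every round at scale $2^i\le 1/(4p)$ verifies a negative maximum. You therefore pay $\Omega(\log(\vercost/\rewcost))\,\vercost$ against the target $6J^\star$, and your nested epochs make this worse.

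Your padding fix breaks the success mechanism itself. Take $\conceptclass=\{h_0,\mathbf 1_A\}$ (again $\starnum_0=1$), $\vercost\ge 2\rewcost$, and $\Dist(A)=p$ near $1$. A pool of size $n\ge 2$ contains an isolated point only with probability $np(1-p)^{n-1}\to 0$. So the cost diverges while $J^\star\to\rewcost+\vercost$. Separately, $2(1-c)<1$ requires $c>1/2$, so ``$c\approx 1/4$'' does not make your tail sum converge as stated.

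\textbf{The missing idea.} Verify a minimum hitting set $I$ of the nonempty traces $\{A_h(S):h\in\conceptclass,\ A_h(S)\neq\emptyset\}$ instead of the isolated points. The isolated points are exactly the singleton traces, so they lie in every hitting set.

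Minimality gives each $i\in I$ a witness $h_i$ with $A_{h_i}(S)\cap I=\{i\}$. Privacy relative to $I$, rather than to the whole pool, is all that is needed for $h_{i_a}(Z_{i_b})=\indic{a=b}$. Hence $|I|\le\starnum_0$, which is \cref{lem:centered-star-hitting-set}.

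Since $A_{h^\star}(S)$ is one of the traces, $I$ contains a positive whenever the batch contains \emph{any} positive. So no scale search is needed. A single batch size $n=\lceil\starnum_0\vercost/\rewcost\rceil$ gives:
\begin{itemize}
\item per-batch cost $B\le\rewcost n+\vercost\starnum_0\le 3\vercost\starnum_0$, with $B/n\le 2\rewcost$;
\item per-batch success probability $\alpha=1-(1-p)^n\ge\frac12\min\{1,np\}$;
\item $\EE[J]\le B/\alpha$, which is at most $4\rewcost/p$ if $np\le 1$ and at most $6\vercost\starnum_0$ if $np>1$.
\end{itemize}
Both bounds are at most $6\starnum_0J^\star$.
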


We first explain the main observation that leads to the matching upper bound. The idea is to use the centered star number to certify an entire generated batch with only a few verifier calls. Given a finite batch \(S \in \mathcal{Z}^{\star}\), each \(h\in\conceptclass\) induces a positive trace
\(A_h(S):=\{z\in S:h(z)=1\}\). A hitting set \(Q\subseteq S\) for the nonempty traces has the property that, if the true target has any positive point in \(S\), then it has one in \(Q\). Thus verifying \(Q\) either finds a positive or certifies that the whole batch can be discarded. The centered star number bounds the size of such certificates, as shown by \cref{lem:centered-star-hitting-set}.

\begin{algorithm}[t]
\caption{\(\Alg_{\rm CS}\): Centered-Star Active Search}
\label{alg:centered-star}
\begin{algorithmic}[1]
\Require concept class \(\conceptclass \in \{0,1\}^\mathcal{Z}\), costs \(\rewcost,\vercost>0\)
\Ensure a generated point with observed verifier label \(1\)
\If{ \(\starnum_0> \frac{\vercost}{\rewcost}\)}
    \While{true}
        \State Generate \(Z\sim\Dist\) and pay \(\rewcost\) and  Verify \(Z\) and pay \(\vercost\)
        \If{the verifier returns \(1\)}
            \State \Return \(Z\)
        \EndIf
    \EndWhile
\Else
    \State \(n\gets \lceil \frac{\vercost}{\rewcost} \starnum_0\rceil\)
    \While{true}
        \State Generate \((Z_1,\dots,Z_n)\sim\Dist^{\otimes n}\) and pay \(n\rewcost\)
        \State Let \(S=(Z_1,\dots,Z_n)\)
        \State For every \(h\in\conceptclass\), construct \(A_h(S):=\{i\in[n]:h(Z_i)=1\}\)
        \State Choose a \emph{minimum-cardinality hitting set} \(I\subseteq[n]\) for
        \(\{A_h(S):h\in\conceptclass,\ A_h(S)\neq\emptyset\}\)
        \For{\(i\in I\)} \Comment{see \cref{lem:centered-star-hitting-set}}
            \State Verify \(Z_i\) and pay \(\vercost\)
            \If{the verifier returns \(1\)}
                \State \Return \(Z_i\)
            \EndIf
        \EndFor
    \EndWhile
\EndIf
\end{algorithmic}
\end{algorithm}

\begin{remark}
The characterization of the active search in the binary case  in \cref{thm:starnum,thm:centered-star-upper} is based on a centered version of the usual star number. The usual star number, that governs distribution-free \emph{active learning} \citep{hanneke2014theory,hanneke2015minimax}, allows the center concept to be arbitrary. Our \(\starnum_0\) fixes the center to be the all-zero concept, so
$
    \starnum_0(\conceptclass)\le \starnum(\conceptclass).
$
This is the right obstruction for active search since the learner does not need to identify the whole target concept, only to find one verified positive. Thus \(\starnum_0\) captures how many different locations can hide the unique positive against an all-negative baseline. Active learning can be harder because it must distinguish all nearby alternatives around arbitrary centers. In \cref{sec:activelearn-search}, we formally show per-instance active searching is easier than active learning.
\end{remark}

\section{Conclusion and Limitations}
We formulated generate--rank--verify inference as a cost-sensitive active search problem with cheap reward scores and costly final verification, and proposed \Adap, an online policy that adapts generation and verification effort per prompt.
Under monotone score-conditioned success probabilities, \Adap achieves a constant-factor guarantee relative to the distribution-aware benchmark; our lower bound shows that some such structure is necessary.
{
The work rests on several simplifying assumptions: the guarantee requires monotonicity, the evaluation covers only feasible prompts, costs are treated as fixed scalars rather than varying with response length, and decisions are based on reward scores alone while in practice verification can exploit response content directly (e.g., detecting syntactic invalidity without invoking a costly checker).
Extensions to variable costs, multiple reward models, parallel generation/verification, and richer inductive biases are natural future directions.
}

\section*{Acknowledgments}
The authors thank Vincent Cohen-Addad for inspiring discussions in the early stages of this project, and Avrim Blum for pointing out connections to the active learning literature. The authors acknowledge the Center for Advanced Research Computing (CARC) at the University of Southern California for providing computing resources that have contributed to the research results reported within this publication. URL: \url{https://carc.usc.edu}.

\printbibliography

\newpage

\appendix

\section*{Contents}
\setcounter{tocdepth}{2}   %
\makeatletter
\@starttoc{toc}
\makeatother

\newpage

\section{Additional Experiments} \label{sec:appx-expriments}

\subsection{Computational Resources} 
\label{app:compute}

All experiments run on a single node with two NVIDIA L40S GPUs (48\,GB each). Generation and math reward scoring use vLLM \cite{kwon2023efficient} with TP$=$2 (Tensor Parallelism); coding reward scoring uses CodeScaler-8B via PyTorch on a single GPU; coding verification is CPU-only with 64 parallel workers and a 15-second per-test timeout. All figures below are estimates; exact per-stage timing was not logged.

\begin{table}[h]
\centering\small
\begin{tabular}{lp{6cm}l}
\toprule
Stage & Details & Estimated cost \\
\midrule
Math generation & $3$ models $\times$ $60$ problems $\times$ $512$ samples; $8$--$10$\,h/model on $2\times$L40S & $\sim$48--60 GPU-hour \\
Math reward scoring & Qwen2.5-Math-PRM-7B via vLLM, $92{,}160$ samples & $\sim$4--6 GPU-hour \\
Coding generation & Qwen2.5-Coder-3B, TP$=$2, $329\times512=168{,}448$ samples, ${\sim}750$ tok/sample & $\sim$26--40 GPU-hour \\
Coding reward scoring & CodeScaler-8B, single GPU, $42{,}496$ samples, batch size 32 & $\sim$2--3 GPU-hour \\
\midrule
\textbf{Total GPU} & & $\mathbf{\sim}$\textbf{85--110 GPU-hour} \\ 
\midrule
Coding verification & $42{,}496$ programs, 64 workers, ${\sim}20$--$30$\,CPU-s/sample (wall time ${\approx}$4--6\,h) & $\sim$250--400 CPU-hour \\
\bottomrule
\end{tabular}
\end{table}

\subsection{Reward Signal Validation}
\label{app:reward-signal}

The adaptive policy assumes that the reward model is more likely to assign higher scores to correct samples than to incorrect ones. We verify this directly for coding; math diagnostics for alternative generators appear in Appendix~\ref{app:other-math-models}.

\paragraph{Rank vs.\ correctness.}
For each problem we sort samples in decreasing reward order and record correctness at each rank position. Figure~\ref{fig:rank-correct} plots the empirical correctness probability averaged across the $83$ coding problems and Figure~\ref{fig:rank-correct-math} plots the same result averaged across the $22$ math problems. The curve is monotonically decreasing: the top-ranked sample is several times more likely to be correct than a uniformly random sample. Our policy requires only this rank monotonicity, not calibrated reward probabilities.

\paragraph{Top-$k$ coverage.}
Figure~\ref{fig:topk} plots, for each $k$, the probability that the top-$k$-by-reward set contains at least one correct sample, alongside the random-$k$ baseline (hypergeometric distribution). Using the reward model substantially raises coverage even at $k=1$, and the gap remains positive throughout the small-$k$ regime where the adaptive policy operates.

\begin{figure}[ht]
    \centering
    \begin{subfigure}[b]{0.495\linewidth}
        \includegraphics[width=\linewidth]{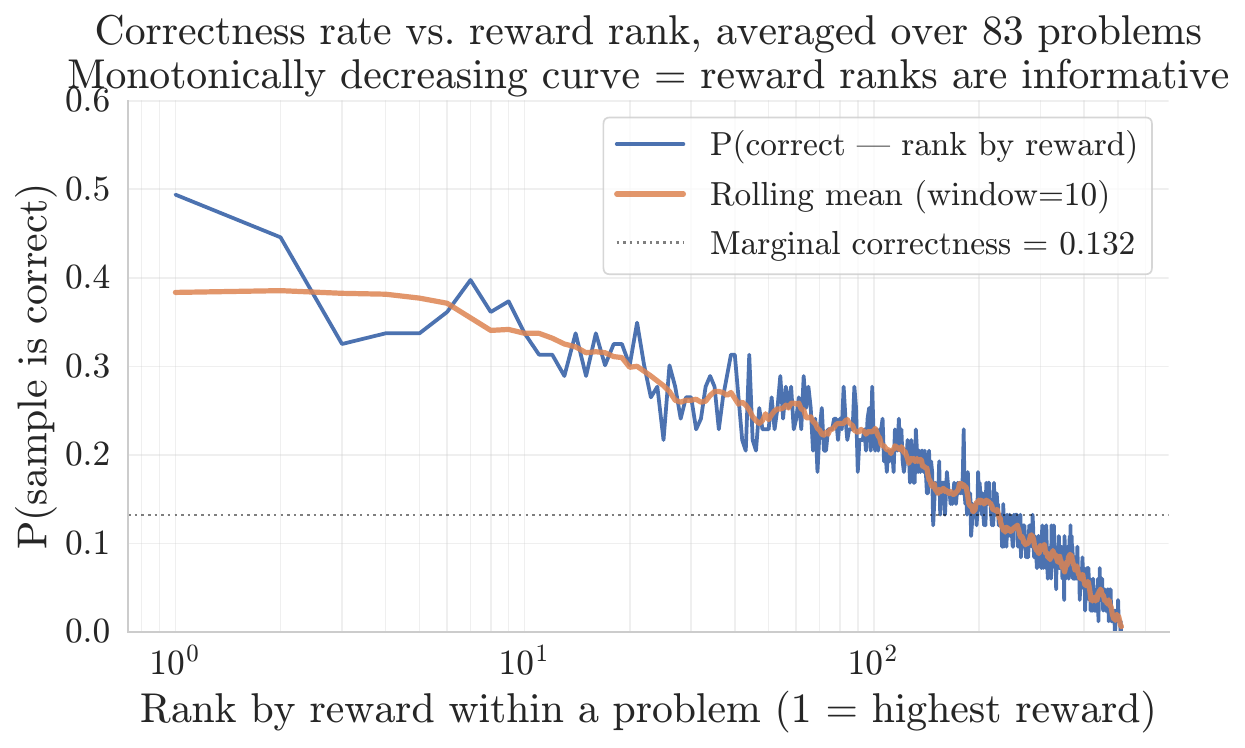}
        \caption{Correctness probability at each reward rank.}
        \label{fig:rank-correct}
    \end{subfigure}
    \hfill
    \begin{subfigure}[b]{0.495\linewidth}
        \includegraphics[width=\linewidth]{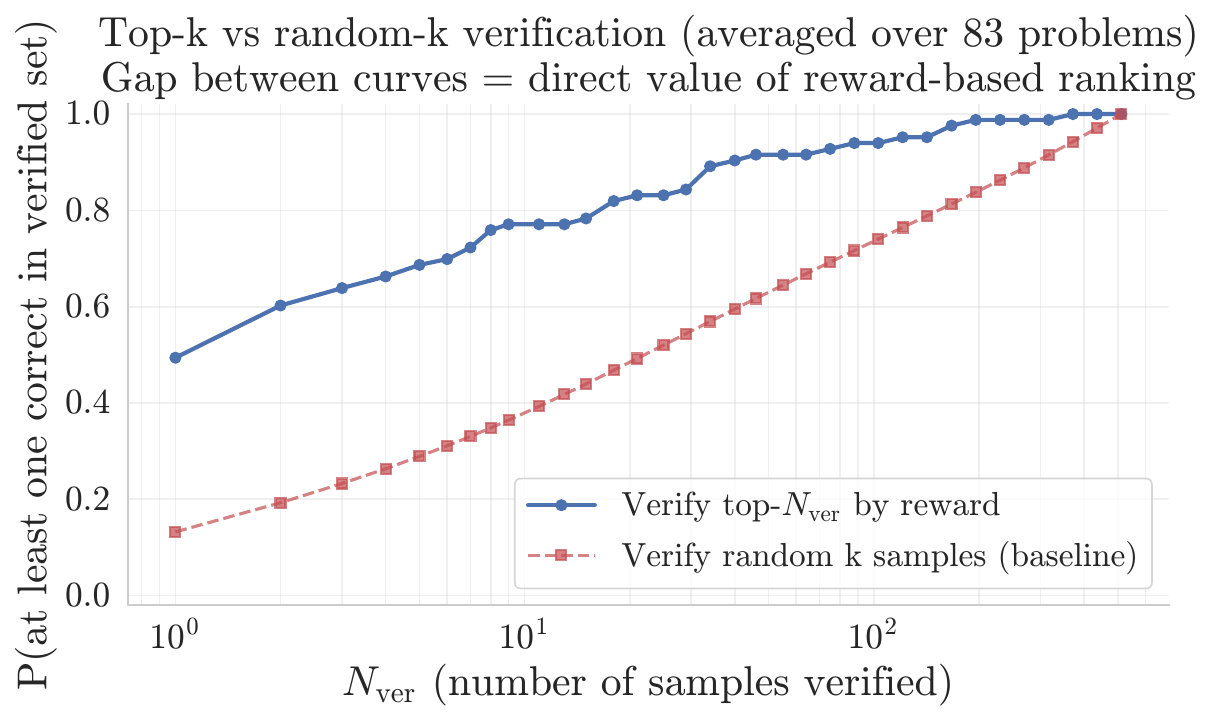}
        \caption{Top-$k$ vs.\ random-$k$ coverage.}
        \label{fig:topk}
    \end{subfigure}
    \caption{Reward signal validation on coding. Left: empirical correctness probability at each reward rank, averaged over $83$ problems (orange: rolling mean). Right: probability that the top-$k$ by reward contains a correct sample (blue) vs.\ $k$ uniform random samples (red dashed).}
    \label{fig:reward-signal}
\end{figure}

\begin{figure}[ht]
    \centering
    \begin{subfigure}[b]{0.495\linewidth}
        \includegraphics[width=\linewidth]{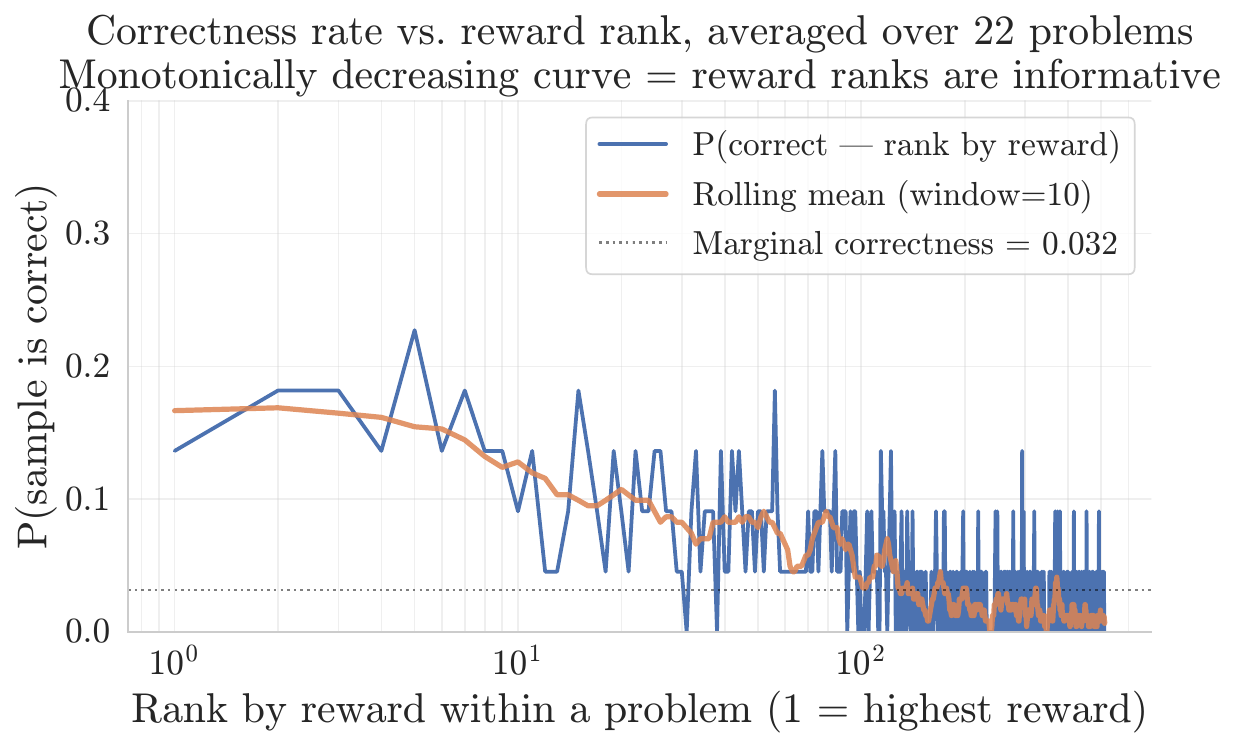}
        \caption{Correctness probability at each reward rank.}
        \label{fig:rank-correct-math}
    \end{subfigure}
    \hfill
    \begin{subfigure}[b]{0.495\linewidth}
        \includegraphics[width=\linewidth]{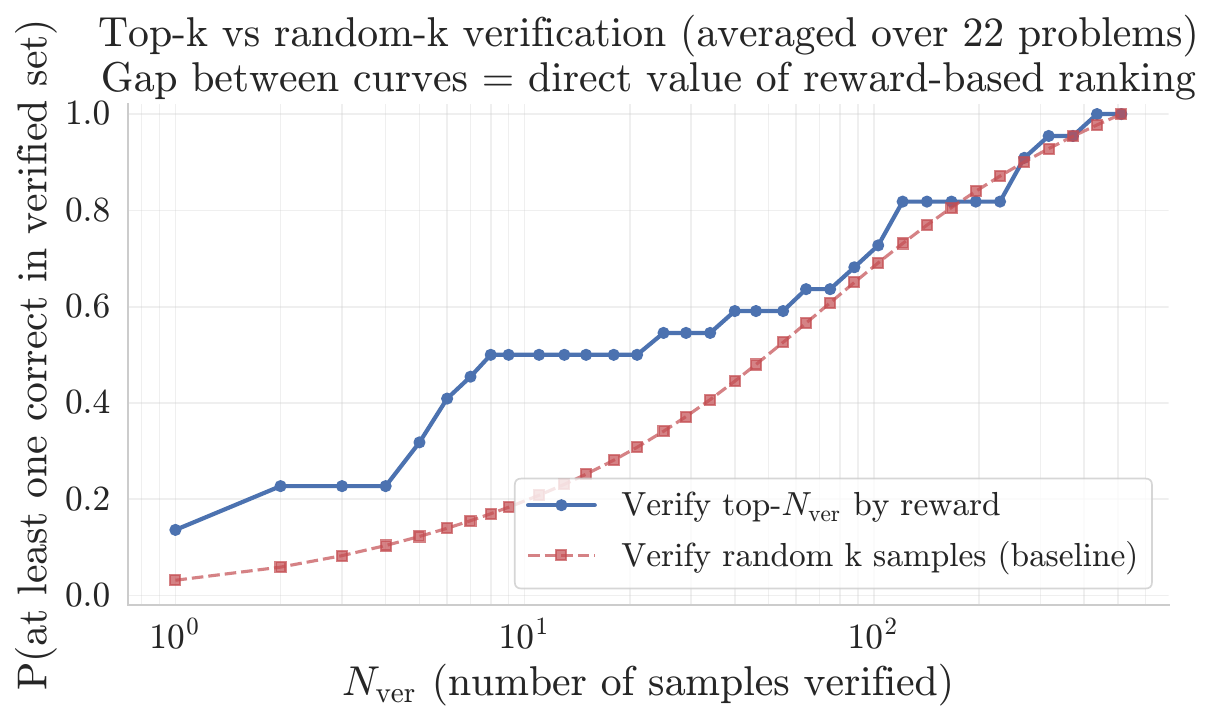}
        \caption{Top-$k$ vs.\ random-$k$ coverage.}
        \label{fig:topk-math}
    \end{subfigure}
    \caption{Reward signal validation on math. Left: empirical correctness probability at each reward rank, averaged over $22$ problems (orange: rolling mean). Right: probability that the top-$k$ by reward contains a correct sample (blue) vs.\ $k$ uniform random samples (red dashed).}
    \label{fig:reward-signal-math}
\end{figure}

\paragraph{Per-problem AUC distribution.}
To assess whether the reward signal is consistent across problems, we compute a per-problem AUC:
\[
  \mathrm{AUC}_p = \Pr\!\bigl(R > R' \;\big|\; V=1,\, V'=0\bigr),
\]
where $(R,V)$ and $(R',V')$ are independent draws from the joint distribution of prompt $p$ (i.e.\ $R=\mathsf{RM}(p,Y)$, $V=\mathsf{Ver}(p,Y)$ with $Y\sim\pi(\cdot\mid p)$, and likewise for the primed copy).

For \textbf{coding}, the distribution is strongly right-skewed (mean: $0.865$, median: $0.915$, IQR: $[0.779, 0.980]$, $96\%$ above chance), confirming that CodeScaler-8B reliably ranks correct samples higher on the vast majority of problems.

For \textbf{math}, the signal is weaker (mean: $0.610$, median: $0.577$, IQR: $[0.500, 0.738]$, $73\%$ above chance), yet the mean AUC remains meaningfully above chance and the top-$k$ coverage curve (Figure~\ref{fig:reward-signal-math}) still lies above the random baseline throughout, confirming that even a modest reward signal suffices to guide verification.

\begin{figure}[ht]
    \centering
    \begin{subfigure}[b]{0.495\linewidth}
        \includegraphics[width=\linewidth]{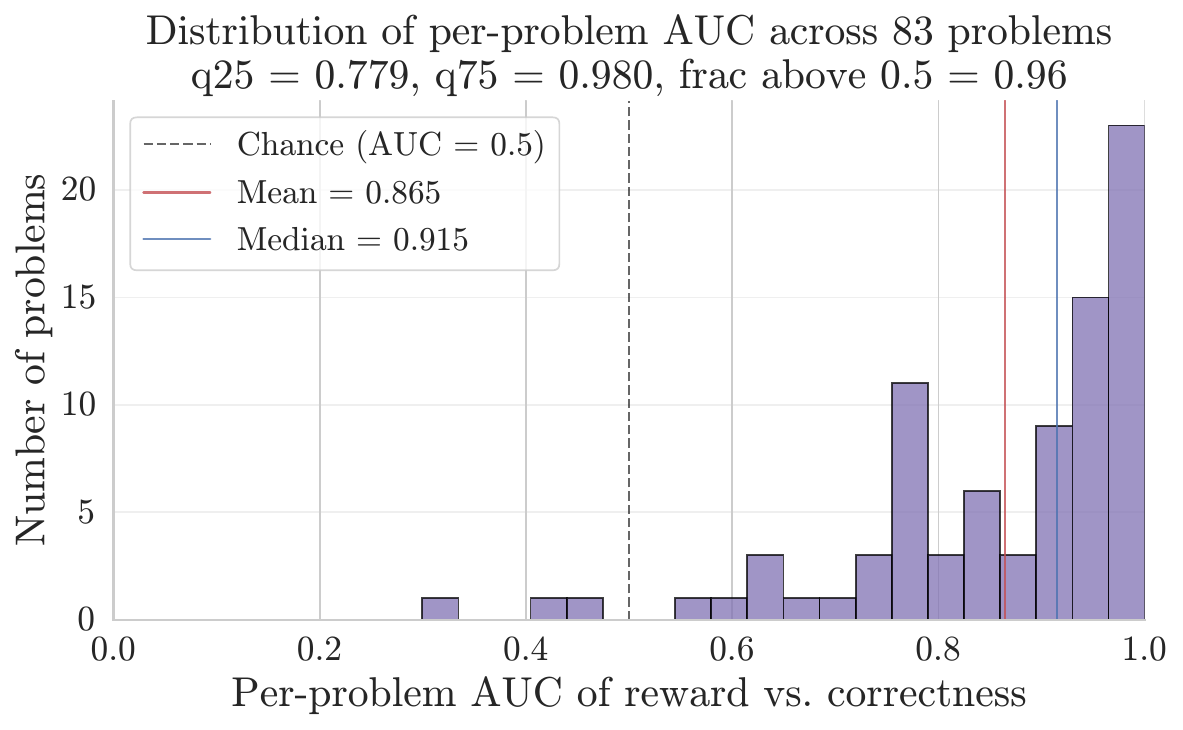}
        \caption{Coding (83 problems).}
    \end{subfigure}
    \hfill
    \begin{subfigure}[b]{0.495\linewidth}
        \includegraphics[width=\linewidth]{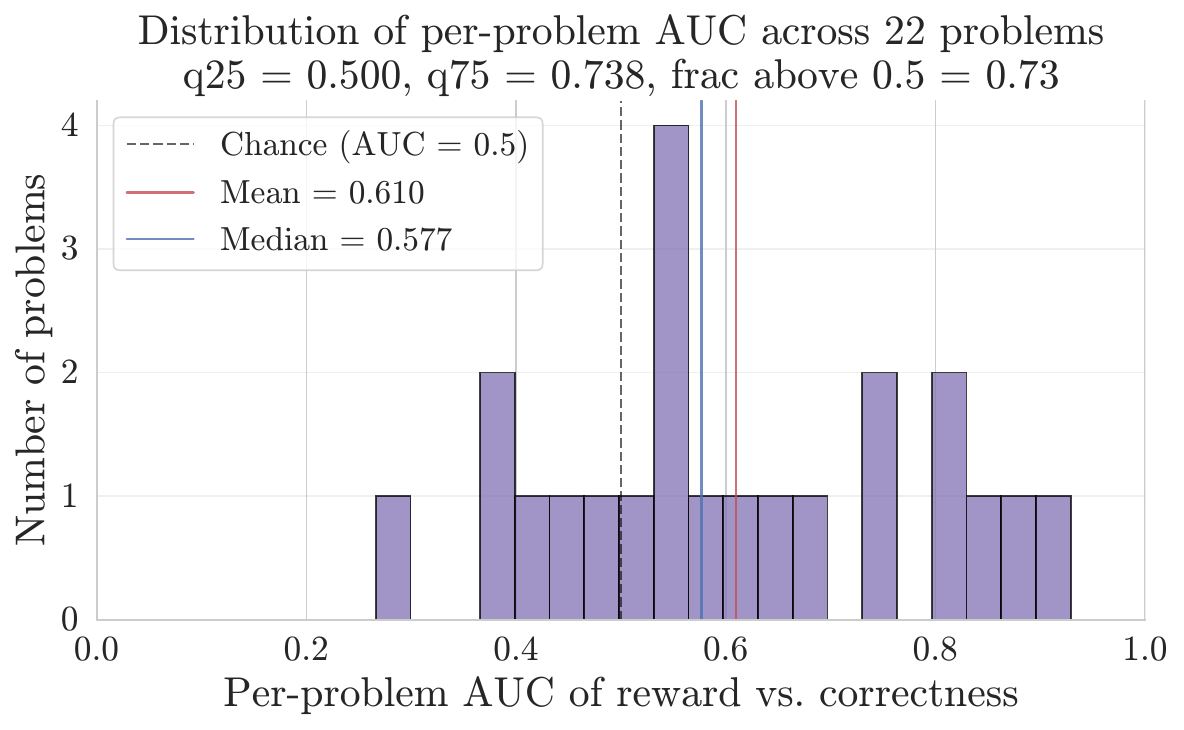}
        \caption{Math (22 problems).}
    \end{subfigure}
    \caption{Per-problem AUC of reward vs.\ correctness.
    Each bar counts the number of problems whose reward model achieves that AUC when ranking samples by score.
    The dashed vertical line marks chance ($\mathrm{AUC}=0.5$); red and blue verticals mark the mean and median.
    Coding rewards are strongly discriminative (mean $0.865$, $96\%$ above chance); math rewards are weaker and more variable (mean $0.610$, $73\%$ above chance).}
    \label{fig:auc-hist}
\end{figure}

\subsection{Ablations}
\label{app:ablation}

\subsubsection{Alternative math generators}
\label{app:other-math-models}

The main body reports math results with \texttt{Qwen2.5-Math-7B} as the generator. We additionally evaluate \texttt{Qwen2.5-14B} (a stronger, non-math-specialized base model) and \texttt{DeepSeek-R1-Distill-Qwen-7B} model. Both use identical sampling settings ($N=512$, temperature $0.7$, top-$p$ $0.95$), the same reward model (\texttt{Qwen2.5-Math-PRM-7B}), and exact answer-string match as the verifier.

Table~\ref{tab:ablation-models} reports the number of solvable problems and mean cost of each policy per generator. The qualitative picture is unchanged across generators: rank-order monotonicity of the reward signal holds in all cases, and \Adap achieves $100\%$ success at substantially 
{lower cost than $\DAP_1$, demonstrating that the approach is not specific to the primary generator and generalizes reliably across models of varying size and specialization.}

\begin{table}[ht]
    \small
    \caption{$c_{\mathrm{ver}}/c_{\mathrm{rew}} = 10$. Per-generator results on HMMT. Mean cost averaged over solvable problems and $10$ permutations each. The ratio column shows cost relative to \Adap for the same generator.}
    \label{tab:ablation-models}
    \centering
    \begin{tabular}{lrrrrrr}
        \toprule
         & \multicolumn{2}{c}{\texttt{Qwen2.5-Math-7B}} & \multicolumn{2}{c}{\texttt{Qwen2.5-14B}} & \multicolumn{2}{c}{\texttt{DeepSeek-R1-7B}} \\
        \cmidrule(lr){2-3}\cmidrule(lr){4-5}\cmidrule(lr){6-7}
         & Cost & Ratio & Cost & Ratio & Cost & Ratio \\
        \midrule
        \# Solvable                        & 22 & & 21 & & 21 & \\
        $\SAP$ cost                        & 398.9 & $\times{0.28}$ & 606.4 & $\times{0.36}$ & 427.8 & $\times{0.32}$ \\
        $\Adap$ cost                       & 1422.3 & & 1704.6 & & 1328.7 & \\
        \midrule
        $\Uni_{C_{\Adap}}$ success rate    & 0.84 & & 0.74 & & 0.86 & \\
        \midrule
        $\DAP_1$ cost                      & 4187.0 & $\times{2.94}$ & 4591.0 & $\times{2.69}$ & 5246.0 & $\times{3.95}$ \\
        $\DAP_2$ cost                      & 2164.7 & $\times{1.52}$ & 2585.8 & $\times{1.52}$ & 2082.5 & $\times{1.57}$ \\
        $\DAP_3$ cost                      & 1792.7 & $\times{1.26}$ & 2260.1 & $\times{1.33}$ & 1819.5 & $\times{1.37}$ \\
        $\DAP_5$ cost                      & 1477.3 & $\times{1.04}$ & 1891.7 & $\times{1.11}$ & 1285.5 & $\times{0.97}$ \\
        $\DAP_{10}$ cost                   & 1169.4 & $\times{0.82}$ & 1463.2 & $\times{0.86}$ & 957.3  & $\times{0.72}$ \\
        \bottomrule
    \end{tabular}
\end{table}

\subsubsection{Cost-ratio robustness}
\label{app:cost-sweep}

The main body fixes $c_{\mathrm{ver}}/c_{\mathrm{rew}} = 10$. We repeat the comparison for $c_{\mathrm{ver}}/c_{\mathrm{rew}} \in \{1, 10, 20, 30\}$ on both tasks to verify that \Adap's advantage is not specific to this calibration. Tables~\ref{tab:cost-sweep} and~\ref{tab:cost-sweep-code} summarize the results.

\Adap consistently achieves lower cost than $\DAP_1$ at every ratio on both tasks, and the advantage grows as verification becomes more expensive: on coding, $\DAP_1$ costs $3.6\times$ more than \Adap at $c_{\mathrm{ver}}/c_{\mathrm{rew}}=1$, rising to $7.2\times$ at ratio $30$.
The $\Uni_{C_{\Adap}}$ rows confirm that a fixed policy given the same total budget as \Adap achieves only $79$--$90\%$ success, underscoring that the savings come from genuine adaptivity rather than simply spending less.
Notably, \Adap matches or outperforms $\DAP_{10}$ on coding at all ratios and approaches it on math, without requiring any offline difficulty estimation or problem partitioning.

\begin{table}[ht]
    \small
    \caption{Cost-ratio ablation on math (HMMT, $22$ problems, $10$ permutations each). The ratio column shows cost relative to \Adap at the same configuration.}
    \label{tab:cost-sweep}
    \centering
    \begin{tabular}{lrrrrrrrr}
        \toprule
         & \multicolumn{2}{c}{$c_{\mathrm{ver}}/c_{\mathrm{rew}}=1$} & \multicolumn{2}{c}{$=10$} & \multicolumn{2}{c}{$=20$} & \multicolumn{2}{c}{$=30$} \\
        \cmidrule(lr){2-3}\cmidrule(lr){4-5}\cmidrule(lr){6-7}\cmidrule(lr){8-9}
         & Cost & Ratio & Cost & Ratio & Cost & Ratio & Cost & Ratio \\
        \midrule
        $\SAP$ cost                & 122.1 & $\times{0.50}$ & 398.9 & $\times{0.28}$ & 684.1 & $\times{0.26}$ & 965.4 & $\times{0.23}$ \\
        $\Adap$ cost               & 245.2 & & 1422.3 & & 2627.4 & & 4143.0 & \\
        \midrule
        $\Uni_{C_{\Adap}}$ success rate & 0.80 & & 0.84 & & 0.84 & & 0.85 & \\
        \midrule
        $\DAP_1$ cost      & 866.0  & $\times{3.53}$ & 4187.0  & $\times{2.94}$ & 7877.0  & $\times{3.00}$ & 11567.0 & $\times{2.79}$ \\
        $\DAP_2$ cost      & 530.0  & $\times{2.16}$ & 2164.7  & $\times{1.52}$ & 3868.8  & $\times{1.47}$ & 5572.9  & $\times{1.35}$ \\
        $\DAP_3$ cost      & 480.9  & $\times{1.96}$ & 1792.7  & $\times{1.26}$ & 3098.6  & $\times{1.18}$ & 4404.5  & $\times{1.06}$ \\
        $\DAP_5$ cost      & 420.9  & $\times{1.72}$ & 1477.3  & $\times{1.04}$ & 2528.6  & $\times{0.96}$ & 3578.2  & $\times{0.86}$ \\
        $\DAP_{10}$ cost   & 357.0  & $\times{1.46}$ & 1169.4  & $\times{0.82}$ & 1945.5  & $\times{0.74}$ & 2718.2  & $\times{0.66}$ \\
        \bottomrule
    \end{tabular}
\end{table}

\begin{table}[ht]
    \small
    \caption{Cost-ratio ablation on coding (LiveCodeBench, $83$ problems, $10$ permutations each). The ratio column shows cost relative to \Adap at the same configuration.}
    \label{tab:cost-sweep-code}
    \centering
    \begin{tabular}{lrrrrrrrr}
        \toprule
         & \multicolumn{2}{c}{$c_{\mathrm{ver}}/c_{\mathrm{rew}}=1$} & \multicolumn{2}{c}{$=10$} & \multicolumn{2}{c}{$=20$} & \multicolumn{2}{c}{$=30$} \\
        \cmidrule(lr){2-3}\cmidrule(lr){4-5}\cmidrule(lr){6-7}\cmidrule(lr){8-9}
         & Cost & Ratio & Cost & Ratio & Cost & Ratio & Cost & Ratio \\
        \midrule
        $\SAP$ cost                & 97.6   & $\times{0.40}$ & 179.9  & $\times{0.24}$ & 265.8  & $\times{0.25}$ & 349.6   & $\times{0.22}$ \\
        $\Adap$ cost               & 241.4  & & 747.6  & & 1075.8 & & 1564.1 & \\
        \midrule
        $\Uni_{C_{\Adap}}$ success rate & 0.79 & & 0.88 & & 0.88 & & 0.90 & \\
        \midrule
        $\DAP_1$ cost      & 871.0   & $\times{3.61}$ & 4102.0  & $\times{5.49}$ & 7692.0  & $\times{7.15}$ & 11282.0 & $\times{7.21}$ \\
        $\DAP_2$ cost      & 506.8   & $\times{2.10}$ & 2162.1  & $\times{2.89}$ & 3889.7  & $\times{3.62}$ & 5617.3  & $\times{3.59}$ \\
        $\DAP_3$ cost      & 416.2   & $\times{1.72}$ & 1580.6  & $\times{2.11}$ & 2762.1  & $\times{2.57}$ & 3943.7  & $\times{2.52}$ \\
        $\DAP_5$ cost      & 346.0   & $\times{1.43}$ & 1194.3  & $\times{1.60}$ & 2057.7  & $\times{1.91}$ & 2892.7  & $\times{1.85}$ \\
        $\DAP_{10}$ cost   & 285.0   & $\times{1.18}$ & 787.3   & $\times{1.05}$ & 1276.1  & $\times{1.19}$ & 1742.9  & $\times{1.11}$ \\
        \bottomrule
    \end{tabular}
\end{table}

\section{Appendix of \cref{sec:distribution-aware}} \label{sec:proof-distribution-aware}
\begin{lemma} \label{lem:optimal-streaming-oracle}
Fix an active search instance \((h^\star,\Dist)\) from \cref{def:active-search-instance} with \(R\sim \Dist\). Consider the class \(\Pi_{\rm str}(\Dist,h^\star)\) of streaming policies defined as follows. At each round \(i\), the policy draws a fresh score \(R_i\sim \Dist\), pays
\(\rewcost\), observes \(R_i\), and then chooses an action $ A_i\in\{\textsf{discard},\textsf{verify}\}$ as a possibly randomized function of the past history and the current score
\(R_i\). If \(A_i=\textsf{discard}\), the candidate is discarded permanently. If
\(A_i=\textsf{verify}\), the policy pays \(\vercost\) and observes $
    V_i\mid R_i \sim \mathrm{Bernoulli}(h^\star(R_i)).
$
The policy stops only when it verifies a candidate with \(V_i=1\). Then there exists a unique \(\tau^\star\in(0,1)\) satisfying
\[
    \vercost\,\mathbb E_{R\sim D}\!\left[(h^\star(R)-\tau^\star)_+\right]
    =
    \tau^\star \rewcost .
\]
Moreover,
\[
    J_{\rm str}^\star(h,D)=\frac{\vercost}{\tau^\star}.
\]
An optimal streaming policy is the threshold rule
\[
    A^\star(r)
    =
    \begin{cases}
        \textsf{verify}, & h(r)\geq\tau^\star,\\
        \textsf{discard}, & h(r)<\tau^\star.
    \end{cases}
\]

\end{lemma}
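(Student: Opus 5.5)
The proof has three parts: existence and uniqueness of \(\tau^\star\), an upper bound for the threshold rule, and a matching lower bound for every streaming policy. The lower bound uses the stationarity of the problem, since each round begins in the same state: a fresh draw, with no information about \(h^\star\) or \(\Dist\) carried over from the past.

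\textbf{Step 1 (the threshold).} Define \(\phi(\tau)=\vercost\,\EE[(h^\star(R)-\tau)_+]-\tau\rewcost\) on \([0,1]\). This function is continuous and strictly decreasing, because \(\tau\mapsto\EE[(h^\star(R)-\tau)_+]\) is non-increasing and \(\tau\rewcost\) is strictly increasing. At the endpoints, \(\phi(0)=\vercost\Pr(V=1)>0\) by feasibility, and \(\phi(1)=-\rewcost<0\). A unique root \(\tau^\star\in(0,1)\) therefore exists.

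\textbf{Step 2 (the threshold rule achieves \(\vercost/\tau^\star\)).} Let \(A=\{r:h^\star(r)\ge\tau^\star\}\), \(q=\Pr(R\in A)\) and \(s=\EE[h^\star(R)\mathbf 1\{R\in A\}]\). Each round is i.i.d. and succeeds with probability \(s\). By Wald's identity, the expected cost is \((\rewcost+\vercost q)/s\). We have \(s>0\): otherwise \(\EE[(h^\star-\tau^\star)_+]=0\), which forces \(\tau^\star=0\), a contradiction. The fixed-point equation gives \(\vercost(s-\tau^\star q)=\tau^\star\rewcost\), i.e. \(\vercost s=\tau^\star(\rewcost+\vercost q)\). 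Hence the expected cost equals \(\vercost/\tau^\star\).

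\textbf{Step 3 (no streaming policy beats \(\vercost/\tau^\star\)).} This is the main obstacle, because a policy may be history-dependent and randomized. I would handle it with a potential, or martingale, argument.

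Set \(J=\vercost/\tau^\star\) and fix any round. Let \(p(r)\in[0,1]\) be the policy's conditional verification probability given the history and \(R_i=r\). By the tower property, \(V_i\) is Bernoulli\((h^\star(R_i))\) independently of the past. The expected one-round cost plus continuation value, charging \(J\) for continuing, is
\[
\rewcost+\EE\big[p(R)(\vercost+(1-h^\star(R))J)+(1-p(R))J\big].
\]
This expression is at least
\[
\rewcost+J-\EE\big[\max\{0,\;h^\star(R)J-\vercost\}\big]=\rewcost+J-\tfrac{\vercost}{\tau^\star}\EE[(h^\star(R)-\tau^\star)_+]=J,
\]
where the last equality uses the fixed-point equation from Step 1.

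So the process \(M_n=\text{(cost paid through round }n)+J\cdot\mathbf 1\{\text{not stopped by }n\}\) is a submartingale with \(M_0=J\).

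For a policy with finite expected cost, \(\EE[\text{cost}_n]\) is bounded, and \(J\Pr(\text{not stopped by }n)\to0\). Monotone convergence then gives \(\EE[\text{total cost}]=\lim\EE[M_n]\ge J\). Policies with infinite expected cost satisfy the bound trivially. This gives \(J^\star_{\rm str}=\vercost/\tau^\star\), and Step 2 shows the threshold rule attains it, so it is optimal. The tie case \(h^\star(r)=\tau^\star\) makes the inner maximum zero either way, which is why ties may be broken arbitrarily.
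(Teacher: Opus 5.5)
Your proof is correct, but it takes a different route from the paper's.

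\textbf{The paper's route.} The paper argues by dynamic programming. It takes the optimal value $J^\star$ as given and compares discarding (continuation $J^\star$) with verifying (continuation $\vercost+(1-h(r))J^\star$), which gives the break-even rule $h(r)\ge \vercost/J^\star$. It then writes the Bellman identity $J^\star=\rewcost+\EE[\min\{J^\star,\vercost+(1-h(R))J^\star\}]$ and rearranges it into the fixed-point equation for $\tau^\star=\vercost/J^\star$. Existence and uniqueness of the root are handled at the end, essentially as in your Step 1. This derivation shows where the threshold comes from. However, it presupposes that $J^\star$ is finite and satisfies the Bellman equation over history-dependent randomized policies, and it never computes the cost of the threshold rule directly.

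\textbf{Your route.} Your guess-and-verify argument avoids both of those gaps. Step 2 computes the rule's cost with Wald's identity and the identity $\vercost s=\tau^\star(\rewcost+\vercost q)$; this is the same computation the paper later carries out in \cref{lem:monotone-threshold-reduction}. Step 3 proves the lower bound for arbitrary adaptive, randomized policies via the submartingale $C_n+J\,\mathbf 1\{T>n\}$. That is the streaming special case of the potential argument in Step 4 of \cref{lem:distribution-aware-online-optimal}.

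\textbf{Trade-off.} You give up the paper's heuristic explanation of the threshold. In exchange you get a self-contained verification theorem that does not rely on an unproved principle of optimality.
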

\begin{proof}
Let $(R_i,V_i)_{i\ge 1}$ be i.i.d.\ copies of $(R,V)$.
A (possibly randomized, history-dependent) policy $\pi$ chooses at each round $i$ an action
$A_i\in\{\textsf{discard},\textsf{verify}\}$ after observing $R_i$ (and the past history).
Let
\[
T:=\inf\{i\ge 1:\ A_i=\textsf{verify}\ \text{and}\ V_i=1\}
\]
be the stopping round, and define the total cost
\[
\mathrm{Cost}(\pi):=\sum_{i=1}^{T}\Bigl(\rewcost+\vercost\mathbf{1}\{A_i=\textsf{verify}\}\Bigr),
\qquad
J(\pi):=\mathbb{E}[\mathrm{Cost}(\pi)],
\qquad
J^*:=\inf_{\pi} J(\pi).
\]

Fix a round, and condition on having already paid $\rewcost$ and observed $R=r$.

\begin{itemize}
\item If we \textsf{discard}, we move to the start of the next round, whose optimal expected remaining
cost is $J^*$.

\item If we \textsf{verify}, we pay $\vercost$ and observe $V$. With probability $h(r)$ we stop
immediately; with probability $1-h(r)$ we fail and return to a fresh round with optimal expected remaining
cost $J^*$. Thus the continuation cost is
\[
\vercost+(1-h(r))J^*.
\]
\end{itemize}

Therefore, verifying is optimal iff
\[
\vercost+(1-h(r))J^*\le J^*
\quad\Longleftrightarrow\quad
h(r)\ge \frac{\vercost}{J^*}.
\]
Define
\begin{equation}\label{eq:tau-def}
\tau^*:=\frac{\vercost}{J^*}\in(0,1].
\end{equation}
This shows there exists an optimal policy that verifies exactly when $h(r)\ge \tau^*$.

To characterize $\tau^*$, we use Bellman equation. At the start of a fresh round, we pay $\rewcost$, observe $R$, and then choose the smaller of
discarding (continuation cost $J^*$) and verifying (continuation cost $\vercost+(1-h(R))J^*$).
Thus $J^*$ satisfies the Bellman identity
\begin{equation}\label{eq:bellman}
J^*
=
\rewcost
+
\mathbb{E}\Bigl[\min\{J^*,\ \vercost+(1-h(R))J^*\}\Bigr].
\end{equation}

For any $J>0$ and $x\in[0,1]$,
\[
\min\{J,\ \vercost+(1-x)J\}
=
J-(xJ-\vercost)_+,
\]
where $ (u)_+:=\max\{u,0\}$. Apply this with $x=h(R)$ and $J=J^*$ in \eqref{eq:bellman}:
\[
J^*
=
\rewcost+\mathbb{E}\bigl[J^*-(h(R)J^*-\vercost)_+\bigr]
\quad\Longleftrightarrow\quad
\rewcost=\mathbb{E}\bigl[(h(R)J^*-\vercost)_+\bigr].
\]
Using $\tau^*=\vercost/J^*$ from \eqref{eq:tau-def}, we rewrite
\[
(h(R)J^*-\vercost)_+ = J^*(h(R)-\tau^*)_+,
\]
and hence
\[
\rewcost
=
J^*\,\mathbb{E}\bigl[(h(R)-\tau^*)_+\bigr]
=
\frac{\vercost}{\tau^*}\,\mathbb{E}\bigl[(h(R)-\tau^*)_+\bigr].
\]
Multiplying both sides by $\tau^*$ gives
\begin{equation}\label{eq:fp-expectation}
\tau^* \rewcost = \vercost\,\mathbb{E}\bigl[(h(R)-\tau^*)_+\bigr].
\end{equation}
Also, \eqref{eq:tau-def} yields
\[
J^*=\frac{\vercost}{\tau^*}.
\]

Then, we show that $\tau^\star$ is unique and exists. Define for $\tau\in[0,1]$,
\[
g(\tau):=
\vercost \int_{\{r:\,h(r)\ge \tau\}} (h(r)-\tau) \text{d}\Dist(r)
-\tau \rewcost
=
\vercost\mathbb{E}\bigl[(h(R)-\tau)_+\bigr]-\tau \rewcost.
\]
Then $g$ is continuous. Moreover, for $0\le \tau_1<\tau_2\le 1$,
\[
g(\tau_2)-g(\tau_1)
&=
\vercost\Bigl(\mathbb{E}[(h(R)-\tau_2)_+]-\mathbb{E}[(h(R)-\tau_1)_+]\Bigr)
-(\tau_2-\tau_1)\rewcost \\
&\leq -(\tau_2-\tau_1)\rewcost<0,
\]
so $g$ is strictly decreasing. Also,
\[
g(0)=\vercost\EE[h(R)]=\vercost\Pr(V=1)>0,
\qquad
g(1)=0-\rewcost<0.
\]
By the intermediate value theorem there exists a root $\tau^*\in(0,1]$, and by strict monotonicity
it is unique.

\end{proof}

\begin{lemma} \label{lem:distribution-aware-online-optimal}
Fix an active search instance \((h,\Dist)\) from
\cref{def:active-search-instance}. Among all distribution-aware pool-based
policies in the sense of \cref{def:active-search}, there exists an optimal
policy that never stores candidates for later use. In particular, the
streaming threshold policy of \cref{lem:optimal-streaming-oracle} is optimal
within this broader class.
\end{lemma}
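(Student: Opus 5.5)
We will show that no pool-based distribution-aware policy can have expected cost below $\vercost/\tau^\star$, where $\tau^\star$ is the threshold of \cref{lem:optimal-streaming-oracle}. Since the streaming threshold policy attains exactly this value and is itself a pool-based policy that never stores candidates, the lemma follows. The proof is a potential-function (supermartingale) argument rather than a direct comparison of policies.

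\textbf{Potential.} For a candidate with score $r$, let
\[
\phi(r):=\vercost\,\frac{(h(r)-\tau^\star)_+}{\tau^\star}\ge 0 ,
\]
and for a pool $\Pool$ set $\Phi(\Pool):=\sum_{(y,r)\in\Pool}\phi(r)$. Because the policy knows $(\Dist,h)$, its decisions depend only on scores and past labels. Hence, conditional on the history, the unverified candidates have independent labels $V\sim\mathrm{Bernoulli}(h(R))$. (Here we use that the labels of unverified items are independent of past actions given their scores.) Let $C_t$ be the cost paid by time $t$. We will show that
\[
M_t:=C_t+\mathbf 1\{\text{not stopped}\}\bigl(\vercost/\tau^\star-\Phi(\Pool_t)\bigr)
\]
is a submartingale, i.e.\ each action increases it in conditional expectation.

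\textbf{Generate.} The cost rises by $\rewcost$ and the potential rises by $\EE[\phi(R)]$. By the fixed-point identity of \cref{lem:optimal-streaming-oracle}, $\EE[\phi(R)]=\vercost\,\EE[(h(R)-\tau^\star)_+]/\tau^\star=\rewcost$. So the change in $M_t$ is exactly $0$.

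\textbf{Verify} an item with score $r$. The cost rises by $\vercost$ and $\phi(r)$ leaves the pool. With probability $h(r)$ we stop and lose the term $\vercost/\tau^\star-\Phi(\Pool)$; we bound this loss using $\Phi(\Pool\setminus\{r\})\ge 0$. The expected change is then at least
\[
\vercost+\phi(r)-h(r)\bigl(\vercost/\tau^\star-\phi(r)\bigr)
\;\ge\;\vercost+\phi(r)-h(r)\,\vercost/\tau^\star ,
\]
where the inequality uses $h(r)\phi(r)\ge 0$. It remains to show the right-hand side is nonnegative. If $h(r)\le\tau^\star$, then $\phi(r)=0$ and the right-hand side is $\vercost(1-h(r)/\tau^\star)\ge 0$. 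If $h(r)>\tau^\star$, then $\phi(r)=\vercost(h(r)-\tau^\star)/\tau^\star$ and the right-hand side is $\vercost+\vercost h(r)/\tau^\star-\vercost-h(r)\vercost/\tau^\star=0$. Thus $M_t$ never decreases in expectation.

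\textbf{Conclusion.} At time $0$, $M_0=\vercost/\tau^\star$. If a policy has finite expected cost, it stops almost surely with finite expected cost. Optional stopping then gives $\EE[C_T]=\EE[M_T]\ge M_0=\vercost/\tau^\star$; policies with infinite expected cost satisfy the bound trivially. Combined with \cref{lem:optimal-streaming-oracle}, this proves optimality of the streaming threshold policy among all pool-based policies.

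\textbf{Main obstacle.} The delicate step is justifying optional stopping for an unbounded stopping time. I would first apply optional stopping at the bounded times $T\wedge n$, which gives
\[
\EE[C_{T\wedge n}]\;\ge\;\vercost/\tau^\star+\EE\bigl[\mathbf 1\{T>n\}\,\Phi(\Pool_n)\bigr]-\EE\bigl[\mathbf 1\{T>n\}\bigr]\,\vercost/\tau^\star .
\]
The middle term is nonnegative and can be dropped. The last term tends to $0$ because $T<\infty$ almost surely. Finally, $C_{T\wedge n}\uparrow C_T$, so monotone convergence lets $n\to\infty$ and yields $\EE[C_T]\ge\vercost/\tau^\star$.
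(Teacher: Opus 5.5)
Your proof is correct, and it takes a genuinely different route from the paper's. The paper builds the exact continuation value $W(P)$ of the policy that verifies the above-threshold candidates in the pool in decreasing order of $h$ and then restarts at $J^\star=\vercost/\tau^\star$. It then proves four things: $W(P)\le J^\star$; an insertion bound $W(P)-W(P\cup\{r\})\le (h(r)J^\star-\vercost)_+$; an exchange argument fixing the verification order; and that no elementary action improves on $W$. Finally it passes to arbitrary policies by induction and a limit.

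You instead use the \emph{linear} potential $J^\star-\Phi(\Pool)$ with $\phi(r)=(h(r)J^\star-\vercost)_+$. This $\phi(r)$ is exactly the per-candidate term of the paper's insertion bound. Iterating that bound from the empty pool gives $W(\Pool)\ge J^\star-\Phi(\Pool)$, with equality at $\Pool=\emptyset$. So your potential relaxes the true value but is tight at the starting state, which is all you need.

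Linearity makes the generate step an exact identity (the fixed-point equation). It makes the verify step a one-line consequence of $(x)_+\ge x$ and $\Phi\ge 0$. You therefore need neither the insertion bound nor the ordering argument. Your truncation at $T\wedge n$ plus monotone convergence plays the role of the paper's Step~4.

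The paper's route gives more structure: the optimal value from every nonempty pool, and optimality of greedy decreasing-$h$ verification from any state. Yours is shorter. Since streaming policies are a special case of pool-based ones, it also gives a self-contained proof of the lower bound in \cref{lem:optimal-streaming-oracle} without appealing to the Bellman equation. Attainment then follows from the direct cost computation for the threshold rule, as in the proof of \cref{lem:monotone-threshold-reduction}.

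One small slip to fix in the verify step. The intermediate expression $\vercost+\phi(r)-h(r)\bigl(\vercost/\tau^\star-\phi(r)\bigr)$ counts $\phi(r)$ twice on the stopping branch. It is not a valid lower bound in general; take $\Pool=\{r\}$ with $h(r)>\tau^\star$. The exact expected increment is $\vercost+\phi(r)-h(r)\vercost/\tau^\star+h(r)\Phi(\Pool\setminus\{r\})$. Dropping the last, nonnegative term gives precisely the bound $\vercost+\phi(r)-h(r)\vercost/\tau^\star$ that you go on to use, so the conclusion is unaffected.
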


\begin{proof}
Let \(J^\star=\vercost/\tau^\star\) be the optimal streaming value from
\cref{lem:optimal-streaming-oracle}. Verifying a sequence of candidates is
equivalent to verifying them one at a time, so it suffices to compare the two
elementary actions: (i)~generating a finite batch, and (ii)~verifying a single
stored candidate.

\paragraph{The candidate continuation value.}
Given a finite pool \(P\), let \(r_{(1)},\ldots,r_{(k)}\) enumerate its
above-threshold elements in decreasing order of \(h\):
\[
    h(r_{(1)})\ge\cdots\ge h(r_{(k)})>\tau^\star,
\]
with \(k=0\) if no such elements exist. Define
\begin{equation}\label{eq:W-def}
    W(P) :=
    \vercost \sum_{i=1}^{k} \prod_{\ell<i}\bigl(1-h(r_{(\ell)})\bigr)
    \;+\;
    J^\star \prod_{i=1}^{k}\bigl(1-h(r_{(i)})\bigr).
\end{equation}
This is the expected cost of the policy that verifies the above-threshold
candidates in decreasing order of \(h\), discards the rest, and falls back to a
fresh start (value \(J^\star\)) if all verifications fail. We show that
\(W(P)\) is the optimal continuation value from \(P\) by establishing two
structural properties of \(W\) and then comparing the elementary actions.

\paragraph{Step 1: \(W(P)\le J^\star\).}
Set \(p_i=h(r_{(i)})\) and define \(U_{k+1}=J^\star\),
\(U_i=\vercost+(1-p_i)U_{i+1}\), so that \(W(P)=U_1\). Since
\(p_i>\tau^\star=\vercost/J^\star\), backward induction on \(i\) gives
\[
    U_i \le \vercost + (1-p_i)J^\star = J^\star - (p_iJ^\star-\vercost) < J^\star,
\]
and hence \(W(P)\le J^\star\).

\paragraph{Step 2: Insertion bound.}
For every deterministic score \(r\),
\begin{equation}\label{eq:insertion}
    W(P) - W(P\cup\{r\}) \le \bigl(h(r)J^\star - \vercost\bigr)_+.
\end{equation}
If \(h(r)\le\tau^\star\), then \(r\) does not appear in
\eqref{eq:W-def}, so \(W(P\cup\{r\})=W(P)\) and the bound is immediate.
Suppose instead that \(p:=h(r)>\tau^\star\), and insert \(r\) into the ordered
above-threshold list. Let \(S\le1\) denote the probability that all candidates
preceding \(r\) fail, and let \(U\) denote the tail value at the insertion
point before adding \(r\). A direct calculation from \eqref{eq:W-def} yields
\[
    W(P) - W(P\cup\{r\}) = S\bigl(pU-\vercost\bigr).
\]
The candidates following \(r\) all have success probability at most \(p\), and
the fall-back value satisfies \(J^\star\ge \vercost/p\); a backward induction
along the tail then gives \(\vercost/p\le U\le J^\star\). Combined with
\(S\le1\), this yields
\(0\le S(pU-\vercost)\le pJ^\star-\vercost\), proving \eqref{eq:insertion}.

\paragraph{Step 3: No first action improves on \(W(P)\).}
We bound the expected cost of each elementary action from \(P\).

\emph{Generate.} Suppose the policy generates a batch
\(R_1,\ldots,R_m\sim\Dist\). Iterating \eqref{eq:insertion},
\[
    W(P\cup\{R_1,\ldots,R_m\}) \ge W(P) - \sum_{i=1}^{m}\bigl(h(R_i)J^\star-\vercost\bigr)_+.
\]
The fixed-point identity from \cref{lem:optimal-streaming-oracle},
\(\tau^\star\rewcost=\vercost\,\EE[(h(R)-\tau^\star)_+]\), is equivalent to
\(\rewcost=\EE\bigl[(h(R)J^\star-\vercost)_+\bigr]\) after multiplying by
\(J^\star\). Therefore, taking expectation,
\[
    m\rewcost + \EE\bigl[W(P\cup\{R_1,\ldots,R_m\})\bigr]
    \;\ge\;
    W(P) + m\rewcost - m\,\EE\bigl[(h(R)J^\star-\vercost)_+\bigr]
    \;=\;
    W(P).
\]
Hence generating first cannot improve on \(W(P)\).

\emph{Verify a below-threshold candidate.} If \(h(r)\le\tau^\star\), then
\(W(P\setminus\{r\})=W(P)\), and the expected cost of verifying \(r\) is
\[
    \vercost + (1-h(r))W(P)
    \;=\; W(P) + \vercost - h(r)W(P)
    \;\ge\; W(P),
\]
where the last inequality uses \(W(P)\le J^\star\) and
\(h(r)\le\tau^\star=\vercost/J^\star\), so that \(h(r)W(P)\le\vercost\).

\emph{Verify an above-threshold candidate.} Among such candidates, verifying
in decreasing order of \(h\) is optimal. Indeed, for \(p\ge q\), verifying
\(p\) before \(q\) costs \(\vercost+(1-p)\bigl(\vercost+(1-q)U\bigr)\),
whereas the reverse order costs \(\vercost+(1-q)\bigl(\vercost+(1-p)U\bigr)\);
the difference is \(\vercost(q-p)\le 0\). The decreasing-\(h\) order attains
exactly \(W(P)\) by construction.

\paragraph{Step 4: Optimality of \(W(P)\).}
Steps~1–3 show that every elementary action has expected continuation cost at
least \(W(P)\). To extend this one-step comparison to arbitrary policies, fix
any pool-based policy \(\pi\) starting from \(P\). Let \(C_t\) denote the cost
accumulated after \(t\) decisions, \(P_t\) the pool before the \(t\)-th
decision, and \(T\) the first successful verification time. By induction on
\(t\),
\[
    W(P) \;\le\; \EE_\pi\!\Bigl[ C_t + W(P_t)\,\mathbf 1\{T>t\} \Bigr]
    \qquad \forall t\ge1.
\]
If \(\EE_\pi[C_T]=\infty\), there is nothing to prove. Otherwise \(T<\infty\)
almost surely, since each action costs at least
\(\min\{\rewcost,\vercost\}>0\). Letting \(t\to\infty\) and using
\(0\le W(P_t)\le J^\star\) gives \(W(P)\le \EE_\pi[C_T]\). Thus no pool-based
policy beats \(W(P)\), and the policy defining \(W(P)\) attains it.

\paragraph{Conclusion.}
For \(P=\emptyset\), \eqref{eq:W-def} reduces to \(W(\emptyset)=J^\star\).
Hence an optimal pool-based policy can be implemented by generating a single
fresh score \(r\), verifying it when \(h(r)>\tau^\star\), discarding it when
\(h(r)<\tau^\star\), and breaking ties arbitrarily at \(h(r)=\tau^\star\).
This is exactly the streaming policy of \cref{lem:optimal-streaming-oracle}.
\end{proof}

\section{Appendix of \cref{sec:non-dec-alg}} \label{sec:proof-adap}

\begin{proof}[Proof of \cref{thm:expected-shellwise}]
Fix a feasible prompt \(x\), and suppress the dependence on \(x\). Let
\((h^\star,\Dist)\) be the induced active-search instance. Soundness is
immediate from the definition of \(\Alg\): the algorithm returns only after
querying the verifier and observing label \(1\).

We now prove the expected-cost bound.

\paragraph{Step 1: Reduce the oracle to reward thresholds.}
Since \(h^\star\) is non-decreasing, \cref{lem:monotone-threshold-reduction}
implies that the distribution-aware optimum is exactly the infimum over
reward-threshold policies:
\[
    J^\star(h^\star,\Dist)
    =
    \inf_{t:\,s_t>0} J_t .
\]

\paragraph{Step 2: Compare ADAP to any fixed reward threshold.}
By \cref{lem:expectation-fixed-threshold}, for every threshold \(t\) with
\(s_t>0\),
\[
    \mathbb E[J(\Alg;h^\star,\Dist)]
    \le
    400J_t .
\]

\paragraph{Step 3: Optimize over the threshold.}
Since the previous inequality holds for every threshold \(t\) with \(s_t>0\),
we may take the infimum over such thresholds:
\[
    \mathbb E[J(\Alg;h^\star,\Dist)]
    \le
    400\inf_{t:\,s_t>0}J_t .
\]
Using \cref{lem:monotone-threshold-reduction} again,
\[
    \mathbb E[J(\Alg;h^\star,\Dist)]
    \le
    400J^\star(h^\star,\Dist).
\]
This proves the theorem.
\end{proof}

\begin{lemma}[Monotone oracle reduces to reward thresholds]
\label{lem:monotone-threshold-reduction}
Fix an active search instance \((h^\star,\Dist)\) with \(R\sim\Dist\) and
\(\Pr(V=1)>0\). Assume that \(h^\star\) is non-decreasing. For each threshold
\(t\in\mathbb R\), define
\[
    q_t := \Pr_{R\sim\Dist}(R\ge t),
    \qquad
    s_t :=
    \mathbb E_{R\sim\Dist}
    \left[
        h^\star(R)\mathbf 1\{R\ge t\}
    \right],
\]
and, whenever \(s_t>0\),
\[
    J_t := \frac{\rewcost+\vercost q_t}{s_t}.
\]
Then the distribution-aware optimum is the infimum over reward-threshold
policies:
\[
    J^\star(h^\star,\Dist)
    =
    \inf_{t:\,s_t>0} J_t .
\]
\end{lemma}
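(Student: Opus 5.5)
We prove the two inequalities separately. Throughout, \(J^\star=\vercost/\tau^\star\) by \cref{thm:distribution-aware} (equivalently \cref{lem:optimal-streaming-oracle,lem:distribution-aware-online-optimal}).

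\emph{Upper bound \(J^\star\le\inf_{t:\,s_t>0}J_t\).} Fix \(t\) with \(s_t>0\) and consider the streaming policy that verifies a fresh candidate iff \(R\ge t\). This is a valid distribution-aware policy. Its cost is a renewal process with i.i.d.\ rounds. Each round costs \(\rewcost+\vercost\mathbf 1\{R\ge t\}\), with expectation \(\rewcost+\vercost q_t\). Each round succeeds independently with probability \(\Pr(R\ge t,V=1)=s_t>0\). By Wald's identity the expected cost is \((\rewcost+\vercost q_t)\cdot \EE[T]\), where \(T\) is geometric with mean \(1/s_t\). This gives exactly \(J_t\), so \(J^\star\le J_t\).

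\emph{Lower bound \(J^\star\ge\inf_t J_t\).} We show the optimal streaming threshold rule coincides with some reward-threshold rule whose cost is \(J^\star\). Under monotonicity, the set \(\{r:h^\star(r)>\tau^\star\}\) is an up-set of \(\Reals\), hence an interval \((t_0,\infty)\) or \([t_0,\infty)\). Likewise \(\{r: h^\star(r)\ge\tau^\star\}\) is an up-set containing it. By \cref{thm:distribution-aware}, any rule that verifies when \(h^\star>\tau^\star\), discards when \(h^\star<\tau^\star\), and acts arbitrarily on ties is optimal. Set \(t_0:=\inf\{r:h^\star(r)>\tau^\star\}\).
\begin{itemize}
\item If the up-set is closed at \(t_0\), or \(h^\star(t_0)=\tau^\star\), the rule ``verify iff \(R\ge t_0\)'' agrees with \(\mathbf 1\{h^\star>\tau^\star\}\) except on tie points, so it is optimal.
\item If \(h^\star(t_0)<\tau^\star\), the set is \((t_0,\infty)\), which is not of the form \(\{R\ge t\}\). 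In this case approximate it by thresholds \(t_n\downarrow t_0\).
\end{itemize}

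For the approximation, \(q_{t_n}\to\Pr(R>t_0)\) and \(s_{t_n}\to \EE[h^\star(R)\mathbf 1\{R>t_0\}]\) by continuity of measure. Hence \(J_{t_n}\) converges to the cost of the ``verify iff \(R>t_0\)'' rule, which equals \(J^\star\). We must check that this limiting success mass is positive. It is at least \(\tau^\star\Pr(h^\star(R)>\tau^\star)\), which is positive because the fixed-point equation of \cref{lem:optimal-streaming-oracle}, \(\vercost\EE[(h^\star(R)-\tau^\star)_+]=\tau^\star\rewcost>0\), forces \(\Pr(h^\star(R)>\tau^\star)>0\). The same fact gives \(s_{t_n}>0\) for every \(n\), since \(\{R\ge t_n\}\supseteq\{R>t_0\}\) once \(t_n>t_0\).

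It remains to confirm that a streaming rule ``verify iff \(R\in A\)'' with \(\Pr(R\in A,V=1)>0\) costs \((\rewcost+\vercost\Pr(A))/\Pr(A,V=1)\). This is the same renewal computation as in the upper bound. Combining the two cases gives \(\inf_{t:\,s_t>0}J_t\le J^\star\), and together with the upper bound, equality.

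The main obstacle is the boundary case \(h^\star(t_0)<\tau^\star\), where the optimal set \(\{h^\star>\tau^\star\}\) is an open ray and is not attained by any \(\{R\ge t\}\). This is handled by the limiting argument above. This case is why the statement uses an infimum rather than a minimum.
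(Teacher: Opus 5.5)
Your argument is essentially the paper's. Both proofs use that \(\{h^\star>\tau^\star\}\) is an up-set, realize or approximate it by reward thresholds via continuity of measure, and get the reverse inequality because every threshold rule is a valid distribution-aware policy. The paper approximates uniformly, with no case split, and checks \(J(U^\star)=\vercost/\tau^\star\) directly from the break-even identity rather than citing optimality of the tie-broken rule.

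Two slips need fixing.

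\emph{Reversed inclusion.} For \(t_n>t_0\) you have \(\{R\ge t_n\}\subseteq\{R>t_0\}\), not \(\supseteq\). So \(s_{t_n}>0\) can fail for small \(n\), e.g.\ if \(t_1\) lies above the support of \(\Dist\). What is true, and sufficient, is that \(s_{t_n}>0\) for all large \(n\), because \(s_{t_n}\to s_\star>0\).

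\emph{Missing case \(t_0=-\infty\).} Your case split assumes \(t_0\) is finite, so it omits \(\{h^\star>\tau^\star\}=\Reals\). Take \(h^\star\equiv c\) and \(\Dist\) Gaussian: then \(\tau^\star=c\vercost/(\vercost+\rewcost)<c\), and \(J_t=\rewcost/(c\,q_t)+\vercost/c\). This strictly exceeds \(J^\star=(\rewcost+\vercost)/c\) for every finite \(t\), since \(q_t<1\). Here you need \(t_n=-n\) together with the same limiting argument. This example also shows that the open-ray case is not the only reason the statement needs an infimum rather than a minimum.
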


\begin{proof}
By \cref{lem:distribution-aware-online-optimal,lem:optimal-streaming-oracle}, an optimal
distribution-aware policy is a (streaming) threshold rule in the success probability: there is a
break-even $\tau^\star$ such that the policy verifies a candidate of score $r$ iff $h^\star(r)>\tau^\star$,
with arbitrary tie-breaking when $h^\star(r)=\tau^\star$. Since $h^\star$ is non-decreasing, the set
$U^\star := \{r\in\mathbb{R}: h^\star(r)>\tau^\star\}$ is an \emph{upper set}.

It is convenient to analyze policies of the following form. For any measurable $U\subseteq\mathbb{R}$,
consider the policy that repeatedly generates $R\sim \Dist$, pays $\rewcost$, verifies iff $R\in U$
(paying $\vercost$), and stops upon the first positive verifier label. Let
$q(U):=\Pr(R\in U)$ and $s(U):=\EE[h^\star(R)\mathbf{1}\{R\in U\}]$. Writing $J(U)$ for the expected
cost of this policy, we have the one-step recursion
\[
J(U)=\rewcost+\vercost\,q(U) + (1-s(U))\,J(U),
\]
since each round incurs expected cost $\rewcost+\vercost q(U)$ and succeeds with probability $s(U)$.
Solving gives
\[
J(U)=\frac{\rewcost+\vercost\,q(U)}{s(U)} \qquad\text{whenever } s(U)>0.
\]

Now specialize to $U^\star$. Define $q_\star:=q(U^\star)=\Pr(R\in U^\star)$ and
$s_\star:=s(U^\star)=\EE[h^\star(R)\mathbf{1}\{R\in U^\star\}]$. By \cref{lem:optimal-streaming-oracle},
$\tau^\star$ satisfies the break-even identity
$\vercost\,\EE[(h^\star(R)-\tau^\star)_+]=\tau^\star \rewcost$ and the optimal value is
$J^\star(h^\star,\Dist)=\vercost/\tau^\star$. Moreover, since $U^\star=\{r:h^\star(r)>\tau^\star\}$,
we can rewrite
\[
(h^\star(R)-\tau^\star)_+ = (h^\star(R)-\tau^\star)\mathbf{1}\{R\in U^\star\},
\]
so $\EE[(h^\star(R)-\tau^\star)_+] = s_\star-\tau^\star q_\star$. Plugging into the break-even
identity yields $\vercost(s_\star-\tau^\star q_\star)=\tau^\star\rewcost$, i.e.,
\[
\vercost s_\star=\tau^\star(\rewcost+\vercost q_\star).
\]
Dividing by $\tau^\star s_\star$ gives
\[
\frac{\rewcost+\vercost q_\star}{s_\star}=\frac{\vercost}{\tau^\star}=J^\star(h^\star,\Dist).
\]
In other words, the upper-set policy associated with $U^\star$ attains the distribution-aware optimum.

It remains to relate this upper-set policy to \emph{ordinary reward thresholds}. Since $U^\star$ is an
upper subset of $\mathbb{R}$, there exists a sequence of thresholds $(t_n)_{n\ge 1}$ such that
$\{r:r\ge t_n\}\uparrow U^\star$ (e.g., if $U^\star=(t_0,\infty)$ take $t_n=t_0+1/n$, and if
$U^\star=[t_0,\infty)$ take $t_n=t_0$). Let $q_n:=\Pr(R\ge t_n)$ and
$s_n:=\EE[h^\star(R)\mathbf{1}\{R\ge t_n\}]$. By monotone convergence, $q_n\to q_\star$ and
$s_n\to s_\star$. Also $s_\star>0$: otherwise $\EE[(h^\star(R)-\tau^\star)_+]=0$, contradicting
$\tau^\star\rewcost>0$. Hence $s_n>0$ for all large $n$, and for those $n$,
\[
J_{t_n}=\frac{\rewcost+\vercost q_n}{s_n}\longrightarrow \frac{\rewcost+\vercost q_\star}{s_\star}
=J^\star(h^\star,\Dist).
\]
Therefore $\inf_{t:\,s_t>0} J_t \le J^\star(h^\star,\Dist)$.

For the reverse inequality, fix any threshold $t$ with $s_t>0$. The policy that generates i.i.d. and
verifies iff $R\ge t$ is a valid distribution-aware policy with expected cost $J_t$, so by definition
of $J^\star$ we have $J^\star(h^\star,\Dist)\le J_t$ for every such $t$. Taking the infimum over $t$
gives $J^\star(h^\star,\Dist)\le \inf_{t:\,s_t>0}J_t$. Combining both directions completes the proof.
\end{proof}

\begin{restatable}{lemma}{expectationDyadicLemma}
\label{lem:expectation-dyadic}
Let $q_t=\Pr\left(R\geq t\right)$ and $s_t = \Pr(R \ge t, V=1)$. For every threshold $t \in \Reals$ with \(q_t>0\) and \(s_t>0\), there are integers \(a_t,b_t\ge0\) such that
\[
    2^{-a_t-1}<q_t\le 2^{-a_t},\qquad
    2^{-b_t}\le s_t<2^{-b_t+1}.
\]
Moreover \(a_t\le b_t\), and \(J_t\le B_{a_t,b_t}\le 4J_t\) where $B_{a_t,b_t}=\rewcost 2^{b_t}+\vercost 2^{b_t-a_t}$.
\end{restatable}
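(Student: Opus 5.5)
The argument is elementary; I would define the exponents explicitly and then compare $B_{a_t,b_t}$ with $J_t$ term by term.

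\emph{Existence of the exponents.} Since $0<q_t\le 1$, set $a_t:=\lfloor \log_2(1/q_t)\rfloor\ge 0$. Then $a_t\le \log_2(1/q_t)<a_t+1$, which is exactly $2^{-a_t-1}<q_t\le 2^{-a_t}$. Since $0<s_t\le q_t\le 1$, set $b_t:=\lceil \log_2(1/s_t)\rceil\ge 0$. Then $b_t-1<\log_2(1/s_t)\le b_t$, which gives $2^{-b_t}\le s_t<2^{-b_t+1}$.

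\emph{The order $a_t\le b_t$.} Here I use $s_t=\EE[h^\star(R)\mathbf 1\{R\ge t\}]\le q_t$, because $h^\star\le 1$. This gives $\log_2(1/s_t)\ge \log_2(1/q_t)\ge a_t$. Taking the ceiling preserves the inequality since $a_t$ is an integer, so $b_t\ge a_t$.

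\emph{Comparison of costs.} Write $J_t=\rewcost/s_t+\vercost q_t/s_t$ and $B_{a_t,b_t}=\rewcost 2^{b_t}+\vercost 2^{b_t-a_t}$, and compare the two summands separately.
\begin{itemize}
\item For the generation term, $2^{-b_t}\le s_t<2^{-b_t+1}$ gives $2^{b_t-1}<1/s_t\le 2^{b_t}$, so $\rewcost/s_t\le \rewcost 2^{b_t}<2\rewcost/s_t$.
\item For the verification term, $q_t\le 2^{-a_t}$ together with $1/s_t\le 2^{b_t}$ gives $q_t/s_t\le 2^{b_t-a_t}$.
\item In the other direction, $q_t>2^{-a_t-1}$ together with $1/s_t>2^{b_t-1}$ gives $q_t/s_t>2^{b_t-a_t-2}$, i.e.\ $2^{b_t-a_t}<4q_t/s_t$.
\end{itemize}
Summing the two terms yields $J_t\le B_{a_t,b_t}<2\rewcost/s_t+4\vercost q_t/s_t\le 4J_t$.

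There is no real obstacle. The only points that need care are the direction of rounding for each exponent (floor for $a_t$, ceiling for $b_t$, matching the strict and non-strict inequalities in the statement) and the observation $s_t\le q_t$, which is what yields $a_t\le b_t$.
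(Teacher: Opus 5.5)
Your proposal is correct and follows essentially the same argument as the paper: you make the dyadic exponents explicit via floor and ceiling (which the paper leaves as ``by construction''), derive $a_t\le b_t$ from $s_t\le q_t$, and compare $B_{a_t,b_t}$ with $J_t$ using the same two dyadic inequalities. The only difference is cosmetic: you bound the two summands separately, whereas the paper factors $B_{a_t,b_t}=(\rewcost+\vercost 2^{-a_t})2^{b_t}$ first.
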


\begin{proof}
The dyadic indices exist by construction. Since \(s_t\le q_t\), we must have \(a_t\le b_t\). For the first inequality, \(s_t\ge 2^{-b_t}\) and \(q_t\le 2^{-a_t}\) imply
$
    J_t=\frac{\rewcost+\vercost q_t}{s_t}
    \le
    (\rewcost+\vercost 2^{-a_t})2^{b_t}
    =
    B_{a_t,b_t}.
$
For the reverse inequality, \(s_t<2^{-b_t+1}\) gives \(2^{b_t}<2/s_t\), while \(q_t>2^{-a_t-1}\) gives \(2^{-a_t}<2q_t\). Therefore
$
    B_{a_t,b_t}
    =
    (\rewcost+\vercost 2^{-a_t})2^{b_t}
    <
    (\rewcost+2\vercost q_t)\frac{2}{s_t}
    \le
    4J_t .
$
\end{proof}

\begin{restatable}{lemma}{topRankedMonotonicityLemma}
\label{lem:top-ranked-monotonicity}
Assume \(h^\star\) is non-decreasing. Let \(A\subseteq B\) be finite multisets of reward scores. Fix integers \(k,\ell\) such that $k\le |A|$, and $k\le \ell\le |B|$.
Let \(A_k\) be the multiset of the \(k\) largest elements of \(A\), and let \(B_\ell\) be the multiset of the \(\ell\) largest elements of \(B\). Then
\[
    1-\prod_{r\in B_\ell}\bigl(1-h^\star(r)\bigr)
    \ge
    1-\prod_{r\in A_k}\bigl(1-h^\star(r)\bigr).
\]
\end{restatable}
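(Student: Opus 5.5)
The claim is equivalent to
\[
\prod_{r\in B_\ell}\bigl(1-h^\star(r)\bigr)\le\prod_{r\in A_k}\bigl(1-h^\star(r)\bigr),
\]
and we will prove this by exhibiting an injection from \(A_k\) into \(B_\ell\) that never decreases scores. Each factor \(1-h^\star(r)\) lies in \([0,1]\) and is non-increasing in \(r\), by \cref{assumption:non-dec}. Hence \(x\mapsto 1-h^\star(x)\) is a non-increasing map into \([0,1]\), so replacing scores by larger ones, or appending extra factors, can only shrink a product.

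\textbf{Step 1 (the injection).} Sort \(A_k\) as \(a_1\ge\cdots\ge a_k\) and \(B\) as \(b_1\ge b_2\ge\cdots\). We claim \(b_i\ge a_i\) for every \(i\le k\). Since \(A\subseteq B\) as multisets, \(B\) contains at least \(i\) elements (counted with multiplicity) that are \(\ge a_i\), namely \(a_1,\dots,a_i\). Therefore the \(i\)-th largest element of \(B\) satisfies \(b_i\ge a_i\). Because \(k\le\ell\), the elements \(b_1,\dots,b_k\) all belong to \(B_\ell=\{b_1,\dots,b_\ell\}\). Ties are harmless here, since we work with multisets and positional order statistics.

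\textbf{Step 2 (the comparison).} We now write
\[
\prod_{r\in B_\ell}(1-h^\star(r))=\prod_{i=1}^{k}(1-h^\star(b_i))\cdot\prod_{i=k+1}^{\ell}(1-h^\star(b_i)).
\]
The second product is at most \(1\), because every factor lies in \([0,1]\). For the first product, monotonicity gives \(1-h^\star(b_i)\le 1-h^\star(a_i)\) for each \(i\), and since all factors are nonnegative, multiplying these inequalities termwise preserves them. Together these yield the displayed inequality, and subtracting both sides from \(1\) gives the lemma.

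The only step needing care is the order-statistic claim \(b_i\ge a_i\) under multiset inclusion. It is elementary, so we do not expect a real obstacle.
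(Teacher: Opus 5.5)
Your proposal is correct and follows essentially the same route as the paper: compare order statistics $b_{(j)}\ge a_{(j)}$ for $j\le k$, use monotonicity of $h^\star$ termwise, and note that the extra factors $1-h^\star(b_{(j)})$ for $k<j\le\ell$ lie in $[0,1]$. You also justify the order-statistic inequality explicitly: $B$ contains $a_1,\dots,a_i$ with multiplicity, hence at least $i$ elements $\ge a_i$. The paper only asserts this step, so your version is slightly more complete.
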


\begin{proof}
Write the elements of \(A\) and \(B\) in non-increasing order:
\[
    a_{(1)}\ge a_{(2)}\ge \cdots \ge a_{(|A|)},
    \qquad
    b_{(1)}\ge b_{(2)}\ge \cdots \ge b_{(|B|)}.
\]
Since \(A\subseteq B\), for every \(j\le |A|\), we have  $b_{(j)}\ge a_{(j)}.$
In particular, this holds for every \(j\le k\). Since \(h^\star\) is non-decreasing, $h^\star(b_{(j)})\ge h^\star(a_{(j)})$ for all $j \le k$.
Therefore,
\(1-h^\star(b_{(j)}) \le 1-h^\star(a_{(j)})\) for all $j \le k$.
Multiplying over \(j=1,\dots,k\), we get
\[
    \prod_{j=1}^{k}\bigl(1-h^\star(b_{(j)})\bigr)
    \le
    \prod_{j=1}^{k}\bigl(1-h^\star(a_{(j)})\bigr).
\]
Because \(k\le \ell\), the product over the top \(\ell\) elements of \(B\)
contains these first \(k\) factors and additional factors in \([0,1]\). Hence
\[
    \prod_{j=1}^{\ell}\bigl(1-h^\star(b_{(j)})\bigr)
    \le
    \prod_{j=1}^{k}\bigl(1-h^\star(b_{(j)})\bigr)
    \le
    \prod_{j=1}^{k}\bigl(1-h^\star(a_{(j)})\bigr).
\]
Taking complements concludes the claim.
\end{proof}

\begin{restatable}{lemma}{expectationLaterShellsLemma}
\label{lem:expectation-later-shells}
Fix a threshold \(t\) with dyadic pair \((a,b)\), and let \(s_\star\) be the shell such that \((a,b)\in S_{s_\star}\). Then for every integer \(u\ge0\), \((a,b+u)\in S_{s_\star+u}\). Consequently, the stage in shell \(s_\star+u\) has
\[
    m_{s_\star+u}\ge \lceil 2^{b+u+1}\rceil,\qquad
    k_{s_\star+u}\ge \lceil 6\cdot 2^{b-a+u}\rceil.
\]
\end{restatable}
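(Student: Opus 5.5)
The plan is to verify membership of $(a,b+u)$ in shell $s_\star+u$ straight from the definition of $\mathcal S_s$ in \cref{alg:expected-shellwise-search}. Then I read off the bounds on $m$ and $k$ from the maximality in the definitions of $b_s^\star$ and $j_s^\star$.

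\emph{Step 1 (shell membership).} Since $(a,b)\in\mathcal S_{s_\star}$, we have $a\le b$ and
\[
2^{s_\star}\cmin\le \rewcost 2^b+\vercost 2^{b-a}<2^{s_\star+1}\cmin .
\]
For $u\ge 0$, the pair $(a,b+u)$ consists of nonnegative integers with $a\le b\le b+u$. Its cost is
\[
B_{a,b+u}=\rewcost 2^{b+u}+\vercost 2^{b+u-a}=2^u B_{a,b}.
\]
Multiplying the two-sided inequality by $2^u$ gives $2^{s_\star+u}\cmin\le B_{a,b+u}<2^{s_\star+u+1}\cmin$. Hence $(a,b+u)\in\mathcal S_{s_\star+u}$. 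In particular this shell is nonempty, so the algorithm does not skip it, and its stage is actually executed (provided the algorithm has not already returned).

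\emph{Step 2 (scale bounds).} Because $(a,b+u)\in\mathcal S_{s_\star+u}$ and $b^\star_{s}$ is the maximum of $b$ over $\mathcal S_{s}$, we get $b^\star_{s_\star+u}\ge b+u$. Similarly, $j^\star_{s_\star+u}\ge (b+u)-a$. The maps $x\mapsto\lceil 2^{x+1}\rceil$ and $x\mapsto\lceil 6\cdot 2^{x}\rceil$ are non-decreasing. Therefore
\[
m_{s_\star+u}\ge\lceil 2^{b+u+1}\rceil
\quad\text{and}\quad
k_{s_\star+u}\ge\lceil 6\cdot 2^{b-a+u}\rceil .
\]

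\emph{Main obstacle.} There is no real obstacle; the argument only uses homogeneity of the cost $B_{a,b}$ under the shift $b\mapsto b+u$. The one point to check is that the shell $s_\star$ exists for the dyadic pair of $t$. This holds because $B_{a,b}\ge\cmin$: indeed $\rewcost 2^b\ge\rewcost\ge\cmin$. So $s_\star=\lfloor\log_2(B_{a,b}/\cmin)\rfloor\ge 0$ is well defined, and the shells partition the possible cost values dyadically, so $s_\star$ is unique.
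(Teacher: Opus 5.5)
Your proposal is correct and matches the paper's proof essentially step for step: the homogeneity $B_{a,b+u}=2^uB_{a,b}$ carries shell membership from $s_\star$ to $s_\star+u$, and the maximality of $b^\star_s$ and $j^\star_s$ then gives the bounds on $m_{s_\star+u}$ and $k_{s_\star+u}$. Your extra check that $s_\star$ is well defined, using $B_{a,b}\ge\cmin$, is a harmless addition that the paper leaves implicit.
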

\begin{proof}
We have
\[
    B_{a,b+u}
    =
    \rewcost 2^{b+u}+\vercost 2^{b+u-a}
    =
    2^uB_{a,b}.
\]
Since \((a,b)\in S_{s_\star}\), \(2^{s_\star}c_{\min}\le B_{a,b}<2^{s_\star+1}c_{\min}\). Multiplying by \(2^u\) gives \(2^{s_\star+u}c_{\min}\le B_{a,b+u}<2^{s_\star+u+1}c_{\min}\). Hence \((a,b+u)\in S_{s_\star+u}\). Since this pair is in the shell, the definitions of \(b_{s_\star+u}^\star\) and \(j_{s_\star+u}^\star\) imply \(b_{s_\star+u}^\star\ge b+u\) and \(j_{s_\star+u}^\star\ge b-a+u\). The bounds on \(m_{s_\star+u}\) and \(k_{s_\star+u}\) follow.
\end{proof}

\begin{restatable}{lemma}{expectationShellFailureLemma}
\label{lem:expectation-shell-failure}
Fix a threshold \(t\) with dyadic pair \((a,b)\), and let \(s_\star\) be the matching shell. For every \(u\ge0\), conditional on the algorithm reaching shell \(s_\star+u\), the probability that shell \(s_\star+u\) fails to output a verified positive is at most
\[
    \rho_u=e^{-2^{u+1}}+e^{-2^u}.
\]
\end{restatable}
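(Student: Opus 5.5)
We analyze shell $s_\star+u$ through the $m:=m_{s_\star+u}$ fresh candidates it generates, ignoring (for a lower bound on success) whatever remains in the pool from earlier shells. By \cref{lem:expectation-later-shells}, $m\ge 2^{b+u+1}$ and the verification budget is $k:=k_{s_\star+u}\ge 6\cdot 2^{b-a+u}$. The fresh scores $R_1,\dots,R_m$ are i.i.d.\ from $\Dist$ and independent of the history, so conditioning on reaching the shell does not change their law. The shell fails only if all of its (at most $k$) verifications return $0$. We bound this failure probability by the sum of two bad events: a \emph{count} event $E_1$, where too many fresh candidates land above $t$, and a \emph{success-mass} event $E_2$, where the fresh above-threshold candidates are all negative.

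For the count event, let $N:=|\{i\le m: R_i\ge t\}|\sim\mathrm{Bin}(m,q_t)$. Its mean satisfies $\mu\le m2^{-a}$; using $m=\lceil 2^{b^\star_{s_\star+u}+1}\rceil$, we should arrange $\mu\lesssim 2\cdot 2^{b+u-a}$ or handle larger $b^\star$ separately (see below). A multiplicative Chernoff bound, $\Pr(N\ge 3\mu)\le e^{-\mu}$ or similar, then gives $\Pr(N>k)\le e^{-2^u}$, since $k\ge 6\cdot 2^{b-a+u}$ leaves a factor $3$ of slack above the mean and $2^{b-a}\ge1$. On the complement of $E_1$, every fresh above-threshold candidate is within the top $k$ of the pool after the pool is sorted, or else the top $k$ consists entirely of scores $\ge t$. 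By \cref{lem:top-ranked-monotonicity}, with $A$ the fresh above-threshold multiset and $B$ the whole pool, the probability that all top-$k$ verifications fail is at most $\prod_{i:R_i\ge t}(1-h^\star(R_i))$. This is also the right comparison when the top $k$ are all $\ge t$ but exclude some fresh ones, since then those $k$ have $h^\star\ge h^\star$ of the fresh elements they displace. Averaging over the i.i.d.\ fresh draws, with labels independent given scores, gives $\EE\prod_{i}(1-h^\star(R_i)\mathbf 1\{R_i\ge t\})=(1-s_t)^m\le \exp(-2^{b+u+1}2^{-b})=e^{-2^{u+1}}$. A union bound yields $\rho_u$.

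The main obstacle is that $m$ and $k$ are set from the \emph{maximal} $b^\star$ and $j^\star$ in the shell, which can come from different pairs. So $m$ may be much larger than $2^{b+u+1}$, and the expected count $mq_t$ can exceed $k$. I would avoid this by not using all $m$ fresh candidates: restrict attention to the first $m_0:=\lceil 2^{b+u+1}\rceil$ fresh draws. Then $\Pr(\text{more than }k\text{ of those are}\ge t)\le e^{-2^u}$ by Chernoff, since the mean is at most $2\cdot2^{b+u-a}\le k/3$. The mass bound becomes $(1-s_t)^{m_0}\le e^{-2^{u+1}}$. The monotonicity lemma, applied with $A$ equal to these $m_0$ draws above $t$, still controls the top-$k$ of the full pool, because adding more elements to $B$ only helps. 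The remaining care is the Chernoff constant, which needs to be checked explicitly: taking $\Pr(N\ge k)$ with $k\ge 3\mu$ and $\mu\ge 2^{u}$, for instance via $\Pr(N\ge k)\le (e\mu/k)^k\le (e/3)^{k}$ together with $k\ge 6\cdot 2^u$, gives $e^{-0.59\cdot 2^u}$. If that is too weak, I would tighten it using $k\ge 6\cdot 2^u$ directly against the bound $e^{-2^u}$.
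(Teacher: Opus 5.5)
Your proposal follows the paper's proof essentially step for step. The paper also restricts to a fixed batch of $\lceil 2^{b+u+1}\rceil$ fresh draws, precisely to avoid the mismatch between $b^\star_s$ and $j^\star_s$ that you identified. It then uses \cref{lem:top-ranked-monotonicity} to pass from that batch to the actual top-$k$ pool verification, and union-bounds $(1-s_t)^{m_0}\le e^{-2^{u+1}}$ against a Chernoff tail on the above-threshold count.

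The one step that does not go through as written is your final constant check. The bound $(e\mu/k)^k$ discards the factor $e^{-\mu}$ and only yields $e^{-0.59\cdot 2^u}$, which is too weak for the stated $\rho_u$. The multiplicative bound you quoted earlier, $\Pr(N\ge 3\bar\mu)\le e^{-\bar\mu}$, closes this. Apply it with the mean upper bound $\bar\mu=2\cdot 2^{b-a+u}\ge \mathbb{E}[N]$; Chernoff remains valid for any upper bound on the mean. Since $k\ge 3\bar\mu$, this gives $\Pr(N>k)\le e^{-2^{u+1}}\le e^{-2^u}$. The paper instead takes $\mu=3\cdot 2^{b-a+u}$ and $k\ge 2\mu$, obtaining $e^{-\mu/3}=e^{-2^{b-a+u}}\le e^{-2^u}$.
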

\begin{proof}
{
By \cref{lem:expectation-later-shells}, shell \(s_\star+u\) generates at least \(m_u:=\left\lceil 2^{b+u+1}\right\rceil \) fresh samples and has verification budget at least \(k_u:=\left\lceil 6\cdot 2^{b-a+u}\right\rceil\).
Condition on the history at the start of shell \(s_\star+u\). The fresh samples
generated in this shell are independent of this history. We analyze any fixed
subset of \(m_u\) fresh samples generated in the shell.

Let 
$$N_t:=\sum_{i=1}^{m_u}\mathbf 1\{R_i\ge t\},\quad\text{and}\quad Z_t:=\sum_{i=1}^{m_u}\mathbf 1\{R_i\ge t,V_i=1\}.$$
If \(Z_t\ge1\) and \(N_t\le k_u\), then the top \(k_u\) rewards among these
\(m_u\) fresh samples contain a verified-positive candidate. Since these fresh
samples are part of the pool and since the actual verification budget in the
shell is at least \(k_u\), \cref{lem:top-ranked-monotonicity} implies that the
conditional success probability of the actual pool-verification step is at least
the conditional probability of success from these \(m_u\) fresh samples alone.

}

{
It remains to bound the probability that the event \(Z_t\ge1\) and \(N_t\le k_u\) fails. Since \(Z_t\sim\mathrm{Bin}(m_u,s_t)\) and \(s_t\ge2^{-b}\), we have \(m_us_t\ge 2^{b+u+1}2^{-b}=2^{u+1}\). Thus
}
\[
    \Pr(Z_t=0)\le e^{-m_us_t}\le e^{-2^{u+1}}.
\]
Next, \(N_t\sim\mathrm{Bin}(m_u,q_t)\) and \(q_t\le2^{-a}\). Hence
\[
    \mathbb E[N_t]\le (2^{b+u+1}+1)2^{-a}\le 3\cdot 2^{b-a+u},
\]
where the last inequality uses \(b\ge a\). Let \(\mu:=3\cdot 2^{b-a+u}\). Since \(k_u\ge 6\cdot 2^{b-a+u}=2\mu\), the Chernoff bound gives
\[
    \Pr(N_t>k_u)\le e^{-\mu/3}=e^{-2^{b-a+u}}\le e^{-2^u}.
\]
A union bound gives the claim.
\end{proof}

\begin{restatable}{lemma}{expectationOneShellCostLemma}
\label{lem:expectation-one-shell-cost}
For every nonempty shell \(S_s\), the deterministic cost \(\Lambda_s:=\rewcost m_s+\vercost k_s\) of shell \(s\) satisfies \(\Lambda_s\le 20\cdot 2^s c_{\min}\).
\end{restatable}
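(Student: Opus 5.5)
The plan is to bound $\rewcost m_s$ and $\vercost k_s$ separately, each by a constant multiple of $2^s\cmin$, using only the defining inequality of the shell. Fix a nonempty shell $\mathcal S_s$. Let $(a_1,b_s^\star)\in\mathcal S_s$ attain the maximal $b$, and $(a_2,b_2)\in\mathcal S_s$ attain the maximal $j_s^\star=b_2-a_2$. Membership in $\mathcal S_s$ gives the upper bounds
\[
\rewcost 2^{b_s^\star}\le \rewcost 2^{b_s^\star}+\vercost 2^{b_s^\star-a_1}<2^{s+1}\cmin
\]
and
\[
\vercost 2^{j_s^\star}\le \rewcost 2^{b_2}+\vercost 2^{b_2-a_2}<2^{s+1}\cmin .
\]
Both summands are nonnegative, so dropping one term is legitimate.

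Next we remove the ceilings. Since $b_s^\star\ge0$, the quantity $2^{b_s^\star+1}$ is a positive integer, so $m_s=2^{b_s^\star+1}$ exactly. This gives $\rewcost m_s=2\rewcost 2^{b_s^\star}<4\cdot 2^s\cmin$.

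For $k_s$, note that $j_s^\star\ge0$ because $a\le b$. Hence $6\cdot 2^{j_s^\star}$ is also an integer and $k_s=6\cdot 2^{j_s^\star}$. This gives $\vercost k_s<12\cdot 2^s\cmin$.

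Summing the two pieces yields $\Lambda_s<16\cdot 2^s\cmin\le 20\cdot 2^s\cmin$.

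The only point needing care is justifying that the ceilings are harmless. This is where the integrality and nonnegativity of $b_s^\star$ and $j_s^\star$ (from $a\le b$, $a,b\in\mathbb Z_{\ge0}$) are used. Even without integrality, one could bound $\lceil x\rceil\le x+1$ and absorb the extra $\rewcost+\vercost\le 2\cdot 2^{s+1}\cmin$. That route is available because every shell-pair cost is at least $\rewcost+\vercost$, so nonemptiness forces $\rewcost,\vercost<2^{s+1}\cmin$. This crude version may exceed $20$ slightly, so I would rely on the exact integrality argument.
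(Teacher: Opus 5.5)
Your proof is correct and follows the paper's argument: bound $\rewcost 2^{b_s^\star}$ and $\vercost 2^{j_s^\star}$ each by $2^{s+1}\cmin$, using the pair in $\mathcal S_s$ that attains the respective maximum. The only difference is the ceilings: the paper uses $\lceil x\rceil\le x+1$ together with $2^{b_s^\star},2^{j_s^\star}\ge 1$ to reach $3\rewcost 2^{b_s^\star}+7\vercost 2^{j_s^\star}$, which gives exactly the constant $20$, whereas you note the ceilings are vacuous and get $16$. So your closing worry that the crude $\lceil x\rceil\le x+1$ route might exceed $20$ is unfounded.
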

\begin{proof}
Using \(\lceil x\rceil\le x+1\), we have
\[
    \Lambda_s
    \le
    \rewcost(2^{b_s^\star+1}+1)+\vercost(6\cdot2^{j_s^\star}+1)
    \le
    3\rewcost2^{b_s^\star}+7\vercost2^{j_s^\star}.
\]
Since \(b_s^\star\) is attained by some pair in \(S_s\), \(\rewcost2^{b_s^\star}<2^{s+1}c_{\min}\). Similarly, since \(j_s^\star\) is attained by some pair in \(S_s\), \(\vercost2^{j_s^\star}<2^{s+1}c_{\min}\). Therefore \(\Lambda_s<3\cdot2^{s+1}c_{\min}+7\cdot2^{s+1}c_{\min}=20\cdot2^s c_{\min}\).
\end{proof}

\begin{restatable}{lemma}{expectationFixedThresholdLemma}
\label{lem:expectation-fixed-threshold}
For every threshold \(t\) with \(s_t>0\), the expected cost of the shellwise algorithm is at most \(400J_t\).
\end{restatable}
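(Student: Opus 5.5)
Fix a threshold \(t\) with \(s_t>0\) (so also \(q_t\ge s_t>0\)). Let \((a,b)=(a_t,b_t)\) be its dyadic pair from \cref{lem:expectation-dyadic}, and let \(s_\star\) be the unique shell with \((a,b)\in\mathcal S_{s_\star}\). Since the \(\mathcal S_s\) partition pairs by the value of \(B_{a,b}\), we have \(2^{s_\star}\cmin\le B_{a,b}\le 4J_t\). The total cost of the algorithm is at most the sum of the deterministic shell costs \(\Lambda_s\) over all shells reached (\(\Lambda_s=0\) for empty shells, where nothing is spent). So I will write
\[
\mathbb E[J]\le \sum_{s\ge 0}\Lambda_s\Pr(\text{shell } s \text{ is reached})
\]
and split the sum at \(s_\star\).

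\emph{Prefix.} For \(s\le s_\star\) I bound the reach probability by \(1\) and use \cref{lem:expectation-one-shell-cost}. This gives
\[
\sum_{s\le s_\star}\Lambda_s\le 20\cmin\sum_{s\le s_\star}2^s<40\cdot 2^{s_\star}\cmin\le 40B_{a,b}.
\]

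\emph{Tail.} Shell \(s_\star+u\) with \(u\ge1\) is reached only if shells \(s_\star,\dots,s_\star+u-1\) all fail. Chaining the conditional bounds of \cref{lem:expectation-shell-failure} gives
\[
\Pr(\text{reach } s_\star+u)\le\prod_{v=0}^{u-1}\rho_v.
\]
Hence the tail contributes at most
\[
20\cdot 2^{s_\star}\cmin\sum_{u\ge1}2^u\prod_{v<u}\rho_v .
\]
For the numerics:
\begin{itemize}
\item \(\rho_0=e^{-2}+e^{-1}\approx0.503\), \(\rho_1=e^{-4}+e^{-2}\approx0.154\), \(\rho_2\approx 0.019\), and \(\rho_v\) decays doubly exponentially after that.
\item The terms of \(\sum_{u\ge1}2^u\prod_{v<u}\rho_v\) are about \(1.006\), \(0.31\), \(0.025\), and then negligible, so the sum is at most about \(1.35\).
\end{itemize}
The tail is therefore at most roughly \(27\cdot 2^{s_\star}\cmin\le 27B_{a,b}\).

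\emph{Combining.} This gives \(\mathbb E[J]\le (40+27)B_{a,b}\le 67\cdot 4J_t<400J_t\), with ample slack for cruder estimates. If an explicit constant is wanted without fine numerics, one can crudely bound \(\rho_v\le 2e^{-2^v}\) for \(v\ge1\) and keep \(\rho_0\le 0.51\); the series still converges to a small constant.

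The main obstacle is justifying the product of conditional failure probabilities. \cref{lem:expectation-shell-failure} is stated conditional on reaching the shell, i.e. on an arbitrary history: the pool may contain leftover high-reward candidates whose labels are correlated with past failures. I will rely on the lemma's proof, which lower-bounds success using only fresh samples of the shell together with \cref{lem:top-ranked-monotonicity}. That bound holds conditionally on every history, so the tower property gives the product bound.

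I also need to handle the boundary case where \(|\Pool|<k_s\). In that case the algorithm verifies the entire pool, which only helps success. Likewise, stopping mid-shell only reduces the cost below \(\Lambda_s\).
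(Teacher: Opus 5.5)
Your proof is correct and takes the same route as the paper: a geometric prefix bounded via \cref{lem:expectation-one-shell-cost}, plus a tail in which shell $s_\star+u$ is weighted by its reach probability from \cref{lem:expectation-shell-failure}. The one difference is that you bound the reach probability by the full product $\prod_{v<u}\rho_v$, whereas the paper keeps only $\rho_{u-1}$ and invokes \cref{lem:expectation-tail}, so you end with roughly $67\cdot 2^{s_\star}\cmin$ instead of $100\cdot 2^{s_\star}\cmin$. The history-conditioning you flag as the main obstacle is not needed for the product: reaching shell $s+1$ is exactly the event of reaching and failing shell $s$, so the chain rule applied to the lemma as stated already gives it.
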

\begin{proof}
Let \((a,b)\) be the dyadic pair for \(t\), and let \(s_\star\) be the shell with \((a,b)\in S_{s_\star}\). By Lemma~\ref{lem:expectation-dyadic}, \(2^{s_\star}c_{\min}\le B_{a,b}\le4J_t\).

First consider the deterministic cost before shell \(s_\star\). By Lemma~\ref{lem:expectation-one-shell-cost},
\[
    \sum_{s=0}^{s_\star-1}\Lambda_s
    \le
    20c_{\min}\sum_{s=0}^{s_\star-1}2^s
    <
    20\cdot2^{s_\star}c_{\min}.
\]
Now consider the cost from shell \(s_\star\) onward. The algorithm reaches shell \(s_\star\) with probability at most one. For \(u\ge1\), in order to reach shell \(s_\star+u\), the algorithm must have reached and failed shell \(s_\star+u-1\). By Lemma~\ref{lem:expectation-shell-failure}, this conditional failure probability is at most \(\rho_{u-1}\). Therefore
\[
    \mathbb E[\text{cost from shell }s_\star\text{ onward}]
    \le
    20\cdot2^{s_\star}c_{\min}
    \left(1+\sum_{u\ge1}2^u\rho_{u-1}\right).
\]
Since \(\rho_{u-1}=e^{-2^u}+e^{-2^{u-1}}\), Lemma~\ref{lem:expectation-tail} gives
\[
    \mathbb E[\text{cost from shell }s_\star\text{ onward}]
    \le
    80\cdot2^{s_\star}c_{\min}.
\]
Combining the two parts,
\[
    \mathbb E[J(\Alg;h,D)]
    \le
    100\cdot2^{s_\star}c_{\min}
    \le
    100B_{a,b}
    \le
    400J_t.
\]
\end{proof}

\begin{lemma}
\label{lem:expectation-tail}
Let
$
    \Gamma=1+\sum_{u\ge1}2^u(e^{-2^u}+e^{-2^{u-1}}).
$
Then \(\Gamma\le4\).
\end{lemma}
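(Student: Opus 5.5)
We need to bound $\Gamma=1+\sum_{u\ge1}2^u(e^{-2^u}+e^{-2^{u-1}})$. The plan is to split the series into its two pieces, $\Gamma = 1 + A + B$ with $A=\sum_{u\ge1}2^u e^{-2^u}$ and $B=\sum_{u\ge1}2^u e^{-2^{u-1}}$. Reindexing $B$ with $v=u-1$ gives $B=2\sum_{v\ge0}2^v e^{-2^v}$, so
\[
B = 2\bigl(e^{-1} + A\bigr),
\]
and hence $\Gamma = 1 + 2e^{-1} + 3A$. Everything then reduces to bounding the single series $A=\sum_{u\ge1} x_u e^{-x_u}$ with $x_u=2^u$.

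To bound $A$, I would compute the first few terms explicitly and control the tail geometrically. The first terms are:
\begin{itemize}
\item $u=1$: $2e^{-2}\approx 0.2707$;
\item $u=2$: $4e^{-4}\approx 0.0733$;
\item $u=3$: $8e^{-8}\approx 0.00268$.
\end{itemize}
For the tail, the ratio of consecutive terms is $2e^{-2^u}\le 2e^{-8}$ for $u\ge3$, so the terms from $u=3$ on are dominated by a geometric series with ratio at most $2e^{-8}<10^{-3}$. Their total is therefore at most $8e^{-8}/(1-2e^{-8})\approx 0.0027$. This gives $A\le 0.35$.

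Plugging in, $\Gamma \le 1 + 0.736 + 1.05 \approx 2.79 \le 4$. If I want to avoid decimals, I would use the cruder bounds $e^{-2}\le 1/7$, $e^{-4}\le 1/50$, and the tail $\le 1/100$. These give $A\le 2/7+4/50+1/100<0.38$ and $2e^{-1}\le 0.75$, so $\Gamma\le 1+0.75+1.14<4$.

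There is no real obstacle here. The only care needed is the reindexing of the $e^{-2^{u-1}}$ series, which produces the extra $v=0$ term $e^{-1}$ and the factor $2$, and keeping the numeric constants safely rigorous, for instance via $e\ge 2.7$, which gives $e^{-2}\le 0.138$ and $e^{-4}\le 0.019$.
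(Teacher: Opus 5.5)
Your proof is correct, but it takes a different route from the paper. The paper computes no individual terms. Since $\{2^u\}_{u\ge1}$ and $\{2^{u-1}\}_{u\ge1}$ are subsets of $\mathbb{N}$ and all terms are nonnegative, it dominates $\sum_{u\ge1}2^u e^{-2^u}$ by $\sum_{n\ge1} n e^{-n}=e^{-1}/(1-e^{-1})^2<1$. Writing $2^u e^{-2^{u-1}}=2n_u e^{-n_u}$ with $n_u=2^{u-1}$, it dominates the second piece by twice that same series, so $\Gamma<1+1+2$.

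You instead use the exact reindexing identity $\sum_{u\ge1}2^u e^{-2^{u-1}}=2(e^{-1}+A)$, which reduces everything to $\Gamma=1+2e^{-1}+3A$. You then bound $A$ by its first two terms plus a tail controlled by the consecutive-term ratio $2e^{-2^u}\le 2e^{-8}$.

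The paper's comparison is shorter and needs only one numerical fact, namely $e/(e-1)^2<1$, i.e.\ $e>(3+\sqrt5)/2$. Yours needs several explicit bounds on powers of $e$. Your $e^2\ge 7$, $e^4\ge 50$ and $e\ge 8/3$ are all valid, and the crude tail bound $1/100$ follows from $e^8=(e^4)^2\ge 2500$.

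In exchange you get the much sharper value $\Gamma<2.8$, compared with the paper's implicit $\Gamma<1+3e/(e-1)^2\approx 3.76$. In the proof of \cref{lem:expectation-fixed-threshold} the final constant is $80(1+\Gamma)$, so your bound would improve the $400$ in \cref{thm:expected-shellwise} to about $304$.
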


\begin{proof}
We use \(\sum_{n\ge1}ne^{-n}=e^{-1}/(1-e^{-1})^2<1\). Since \(\{2^u:u\ge1\}\subseteq\mathbb N\), \(\sum_{u\ge1}2^ue^{-2^u}\le\sum_{n\ge1}ne^{-n}<1\). Also, with \(n_u=2^{u-1}\), we have \(2^ue^{-2^{u-1}}=2n_ue^{-n_u}\), and hence \(\sum_{u\ge1}2^ue^{-2^{u-1}}\le2\sum_{n\ge1}ne^{-n}<2\). Thus \(\Gamma<1+1+2=4\).
\end{proof}

\section{Appendix of \cref{sec:seperation}}
\newcommand{\Hist}{\mathcal{H}}
\begin{proof}
Fix an integer $m \le \starnum_0$. By \Cref{def:starnum}, choose
$x_1,\dots,x_m \in \dataspace$ and $h_1,\dots,h_m \in \conceptclass$ with
$h_i(x_j) = \indic{i=j}$. Let $\Dist$ be uniform on $\{x_1,\dots,x_m\}$.

\paragraph{Oracle upper bound.} For any target $h_i$, the distribution-aware
oracle generates i.i.d.\ samples from $\Dist$ until $x_i$ appears, then
verifies it once. Since $\Dist(x_i) = 1/m$,
\begin{equation}\label{eq:oracle}
  J^\star(h_i,\Dist) \;\le\; \rewcost\, m + \vercost
  \qquad \text{for every } i \in [m].
\end{equation}

\paragraph{Algorithm lower bound.} Place a uniform prior $I \sim
\mathrm{Unif}([m])$ on the target index, and let $T$ be the first verifier
call that returns label $1$. We think of each verifier call as the algorithm
guessing an index in $[m]$: the response is $1$ if the guess equals $I$ and
$0$ otherwise.

Suppose the algorithm has made $q$ guesses, all
returning $0$ (i.e.\ $T > q$). It has ruled out at most $q$ indices, so the
posterior on $I$ is uniform over the remaining set, which has size at least
$m - q$. Whatever index the algorithm guesses next is a deterministic
function of its history, so it hits $I$ with probability at most
$\frac{1}{m-q}$. Hence
\[
  \Pr(T > q+1 \mid T > q) \;\ge\; 1 - \frac{1}{m-q}.
\]

Multiplying the survival probabilities,
\[
  \Pr(T > q)
  \;\ge\; \prod_{k=0}^{q-1}\!\left(1 - \frac{1}{m-k}\right)
  \;=\; \prod_{k=0}^{q-1}\frac{m-k-1}{m-k}
  \;=\; \frac{m-q}{m},
\]
where the product telescopes (each numerator cancels the next denominator). Using $\mathbb{E}[T] = \sum_{q \ge 0}
\Pr(T > q)$,
\[
  \mathbb{E}[T]
  \;\ge\; \sum_{q=0}^{m-1}\frac{m-q}{m}
  \;=\; \frac{m+1}{2}
  \;\ge\; \frac{m}{2}.
\]
The algorithm pays at least $\vercost$ per verifier call, so
$\mathbb{E}[J(\Alg;h_I,\Dist)] \ge \vercost\, \mathbb{E}[T] \ge
\tfrac{\vercost}{2} m$, where the expectation is over $I$, the samples, and
$\Alg$'s randomness. Averaging over $I$ produces some $i^\star \in [m]$ with
\begin{equation}\label{eq:alg}
  \mathbb{E}[J(\Alg;h_{i^\star},\Dist)] \;\ge\; \tfrac{\vercost}{2}\, m.
\end{equation}

Set $h^\star = h_{i^\star}$ and $\gamma :=
\vercost/\rewcost$. Combining \eqref{eq:oracle} and \eqref{eq:alg},
\[
  \frac{\mathbb{E}[J(\Alg;h^\star,\Dist)]}{J^\star(h^\star,\Dist)}
  \;\ge\;
  \frac{(\vercost/2)\, m}{\rewcost\, m + \vercost}
  \;=\;
  \frac{1}{2}\cdot\frac{\gamma m}{m + \gamma}
  \;\ge\;
  \frac{1}{4}\min\{m,\gamma\}.
\]
The bound holds for every $m \le \starnum_0$ yields the claimed
$\tfrac{1}{4}\min\{\starnum_0, \gamma\}$.
\end{proof}
\subsection{Extension to Probabilistic Concept Classes} \label{sec:start-num-probalistic}
Next we give an extension of the centered star number \cref{def:starnum} for probabilistic concept classes. 
\begin{definition}
For a probabilistic concept class \(\conceptclass\subseteq[0,1]^{\dataspace}\)
and \(\eta\in(0,1]\), define $\eta$-centered star number of $\conceptclass$ as
\[
    \starnum_{0,\eta}
    :=
    \sup\left\{
    m:
    \exists x_1,\dots,x_m,\ h_1,\dots,h_m\in\conceptclass
    \text{ such that }
    h_i(x_i)\ge\eta,\ 
    h_i(x_j)=0\ \forall j\neq i
    \right\}.
\]
\end{definition}
The following corollary extends \cref{thm:starnum} to the probabilistic concept classes.
\begin{corollary}
For every integer \(m\le \starnum_{0,\eta}\), every sound active search
algorithm has a worst-case instance \((h^\star,\Dist)\) such that
$ \displaystyle
    \frac{\mathbb E[J(\Alg;h^\star,\Dist)]}{J^\star(h^\star,\Dist)}
    \ge
    \frac{\eta}{4}
    \min\left\{
        \starnum_{0,\eta},\frac{\vercost}{\rewcost}
    \right\}.
$
\end{corollary}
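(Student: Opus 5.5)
We would mirror the proof of \cref{thm:starnum}, replacing the deterministic labels by Bernoulli labels. Fix an integer $m\le\starnum_{0,\eta}$ and choose witnesses $x_1,\dots,x_m$ and $h_1,\dots,h_m\in\conceptclass$ with $h_i(x_i)\ge\eta$ and $h_i(x_j)=0$ for $j\ne i$. Let $\Dist$ be uniform on $\{x_1,\dots,x_m\}$.

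\textbf{Oracle bound.} For target $h_i$, consider the distribution-aware policy that generates until it sees $x_i$, verifies it, and repeats on failure. Each attempt costs in expectation $\rewcost m+\vercost$ and succeeds with probability $p_i:=h_i(x_i)\ge\eta$. Hence
\[
J^\star(h_i,\Dist)\le \frac{\rewcost m+\vercost}{p_i}\le \frac{\rewcost m+\vercost}{\eta}.
\]

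\textbf{Algorithm bound.} Place a uniform prior $I\sim\mathrm{Unif}([m])$. Only verifications of $x_I$ can return $1$, and every verification of $x_j$ with $j\ne I$ returns $0$. Verifying $x_I$ returns $1$ with probability $p_I\le1$, so a failed verification of a point never rules it out with certainty. We nonetheless want $\EE[T]\ge m/2$, where $T$ is the number of verifier calls.

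The cleanest argument is a coupling. Each verification of a candidate whose location is $x_I$ yields a label that is $1$ with probability $p_I$. Compare with a modified environment in which that label is $1$ with probability exactly $1$. We claim that under any fixed strategy, the number of verifications needed to stop in the original environment stochastically dominates the number needed in the modified one. The modified environment is precisely the binary instance from the proof of \cref{thm:starnum}: a policy there is a sequence of distinct index guesses, and the telescoping argument gives $\EE[T]\ge (m+1)/2$.

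For the domination, we realize the label randomness as independent uniforms and couple. Run the algorithm in the original environment. Before the first verification of any copy of $x_I$, its transcript coincides with its transcript in the modified environment, since all labels are $0$ in both. The modified run stops at the first verification of $x_I$, while the original run stops no earlier. Since the modified run is a legitimate policy for the binary problem, it needs at least the binary-case number of queries. This step is the main obstacle; it has to be stated carefully so that the algorithm's adaptivity in the original environment, where it may later use failure information, does not break the domination. The coupling handles this because we only compare up to the first query of $x_I$, which is a stopping time common to both runs.

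\textbf{Combining.} We obtain $\EE[J(\Alg;h_I,\Dist)]\ge \vercost m/2$ on average over $I$, so there is some $i^\star$ achieving it. With $\gamma=\vercost/\rewcost$,
\[
\frac{\EE[J]}{J^\star}\ge \frac{\eta\,\vercost m/2}{\rewcost m+\vercost}=\frac{\eta}{2}\cdot\frac{\gamma m}{m+\gamma}\ge\frac{\eta}{4}\min\{m,\gamma\}.
\]
Taking $m$ as large as allowed, namely $m=\starnum_{0,\eta}$ if it is finite and arbitrarily large otherwise, gives the stated bound $\frac{\eta}{4}\min\{\starnum_{0,\eta},\vercost/\rewcost\}$.
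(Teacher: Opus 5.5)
Your proposal is correct and is essentially the argument the paper intends; the paper states the corollary as a direct extension of the proof of Theorem~\ref{thm:starnum} without writing it out, using the uniform distribution on an $\eta$-centered star, the oracle bound $J^\star(h_i,\Dist)\le(\rewcost m+\vercost)/\eta$ (which supplies the factor $\eta$), and the unchanged lower bound $\vercost m/2$ on the algorithm's expected cost. Your coupling with the deterministic-label instance cleanly justifies the step the paper leaves implicit: the two runs agree until the first verification of a copy of $x_I$, after which the probabilistic run can only stop later.
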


\subsection{Connection Between Active Learning and Active Search}
\label{sec:activelearn-search}

This section formalizes the relation between pool-based active learning and
pool-based active search. Active learning aims to output a classifier with small
error, while active search only aims to find one positive example. We show that,
on any instance with sufficiently large positive mass, active learning implies
active search. 

For a distribution \(D \in \ProbMeasures{\mathcal X}\) and two measurable
functions \(f,g:\mathcal X\to\{0,1\}\), define
\[
\operatorname{err}_D(f,g)
:=
\Pr_{X\sim D}\!\left(f(X)\neq g(X)\right).
\]
We use this both for the error of a learner's output relative to a target and
for the disagreement between two concepts. In particular, if \(h_0\equiv 0\),
then
\[
\operatorname{err}_D(h,h_0)
=
\Pr_{X\sim D}(h(X)=1).
\]

\begin{definition}
An \((n,m)\) pool-based active learning algorithm \(\Alg\) proceeds as follows.
First, it receives an unlabeled sample
\[
S_n=(X_1,\ldots,X_n)\sim D^n.
\]
Then, for rounds \(t=1,\ldots,m\), the algorithm adaptively chooses an index
\(I_t\in[n]\) as a function of \(S_n\) and the previously observed labeled
examples
\[
(X_{I_1},Y_{I_1}),\ldots,(X_{I_{t-1}},Y_{I_{t-1}}).
\]
After at most \(m\) label queries, the algorithm outputs a classifier
\(\widehat h:\mathcal X\to\{0,1\}\).
\end{definition}

\begin{definition}
Let \(\mathcal H\subseteq\{0,1\}^{\mathcal X}\) be a binary concept class, and
fix \(\varepsilon,\delta\in[0,1]\). We say that \(\Alg\)
\((\varepsilon,\delta)\)-learns \(\mathcal H\) with unlabeled sample complexity
\(n\) and label query complexity \(m\) if, for every \(h^\star\in\mathcal H\)
and every \(D\in\ProbMeasures{\mathcal X}\),
\[
\Pr\left(
\operatorname{err}_D(\widehat h,h^\star)\le \varepsilon
\right)\ge 1-\delta,
\]
where the probability is over \(S_n\sim D^n\) and any internal randomness of
\(\Alg\).
\end{definition}

\begin{definition}[Pool-based active search]
Let \(\mathcal H\subseteq\{0,1\}^{\mathcal X}\) be a binary concept class, and
fix \(\delta\in[0,1]\). We say that an active search algorithm \(\Alg\) finds a
positive example for \(\mathcal H\) with unlabeled sample complexity \(n\) and
label query complexity \(m\) with probability at least \(1-\delta\) if, for
every \(h^\star\in\mathcal H\) and every \(D\in\ProbMeasures{\mathcal X}\) with
\(\Pr_{X\sim D}(h^\star(X)=1)>0\),
\[
\Pr\!\left(h^\star(X_{\widehat I})=1\right)\ge 1-\delta,
\]
where \(\widehat I\in[n]\) is the index output by \(\Alg\), and the probability
is over \(S_n\sim D^n\) and any internal randomness of \(\Alg\).
\end{definition}

\begin{theorem}[Active learning implies active search]
\label{thm:learning-implies-search}
Let \(\mathcal H\subseteq\{0,1\}^{\mathcal X}\) be a binary concept class
containing the all-zero concept \(h_0\equiv 0\). Suppose that
\(\Alg_{\rm learn}\) is an \((n,m)\) pool-based active learning algorithm that
\((\varepsilon,\delta)\)-learns \(\mathcal H\). Then there exists an
\((n,m)\) pool-based active search algorithm \(\Alg_{\rm search}\) such that,
for every \(h^\star\in\mathcal H\) and every
\(D\in\ProbMeasures{\mathcal X}\) with
\[
p_\star := \Pr_{X\sim D}(h^\star(X)=1) > 2\varepsilon,
\]
we have
\[
\Pr_{h^\star}\!\left(h^\star(X_{\widehat I})=1\right)\ge 1-2\delta .
\]
\end{theorem}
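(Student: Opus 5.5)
The plan is a black-box reduction. $\Alg_{\rm search}$ runs $\Alg_{\rm learn}$ verbatim on the same unlabeled pool $S_n$ and forwards each of its label queries to the verifier, so it uses at most $m$ queries and $n$ unlabeled points. Let $\widehat h$ be the learner's output. The search algorithm then outputs an index as follows:
\begin{itemize}
\item if some queried point has returned label $1$, output that index;
\item otherwise, output any index $i\in[n]$ with $\widehat h(X_i)=1$;
\item if there is no such index, output an arbitrary index.
\end{itemize}
The first rule costs no extra queries and can only help, so the analysis focuses on the second rule.

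The analysis intersects two good events. The first is $E_1=\{\operatorname{err}_D(\widehat h,h^\star)\le\varepsilon\}$, which has probability at least $1-\delta$ by the learning guarantee. On $E_1$, since $p_\star>2\varepsilon$, we get
\[
\Pr_{X\sim D}\bigl(\widehat h(X)=1,\ h^\star(X)=1\bigr)\ \ge\ p_\star-\varepsilon\ >\ \varepsilon,
\]
while $\Pr_{X\sim D}(\widehat h(X)=1,\ h^\star(X)=0)\le\varepsilon$. So the region $\{\widehat h=1\}$ has positive population mass, and more of it is correct than incorrect. The difficulty is that $\widehat h$ depends on $S_n$, so we cannot treat pool points as fresh draws inside $\{\widehat h=1\}$.

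To handle this dependence, I will use the hypothesis $h_0\in\mathcal H$ and run the learning guarantee a second time, now against the target $h_0$. Under $h_0$ every label is $0$. On any pool for which no query returns $1$, the learner's transcript under $h^\star$ is therefore identical to its transcript under $h_0$, including its internal randomness. Call this event $F$. On $F$, the output $\widehat h$ coincides with the output $\widehat h_0$ of the same run under $h_0$. The guarantee for $h_0$ says that, with probability at least $1-\delta$, $\Pr_D(\widehat h_0=1)\le\varepsilon$. On $F\cap E_1$ this gives $\Pr_D(\widehat h=1)\le\varepsilon$, hence $p_\star\le\Pr_D(\widehat h=1)+\varepsilon\le 2\varepsilon$. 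This contradicts $p_\star>2\varepsilon$.

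Putting this together, $F$ cannot hold on the intersection of the two good events, which has probability at least $1-2\delta$ by a union bound over the two learning guarantees applied to the same coupled randomness. Off $F$, some query has returned label $1$, and the first rule outputs a verified positive. Hence $\Pr(h^\star(X_{\widehat I})=1)\ge 1-2\delta$. In fact this argument never uses the second rule. The main obstacle I expect is making the coupling rigorous: the runs under $h^\star$ and $h_0$ must share the same $S_n$ and the same internal randomness, and their transcripts must agree exactly on $F$. This is where the determinism of query choices given the history has to be invoked carefully.
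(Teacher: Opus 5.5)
Your proposal is correct and is essentially the paper's proof. Both simulate the learner, output the first queried positive, apply the learning guarantee under both $h_0$ and $h^\star$, and couple the two runs on a shared pool and shared internal randomness so that they coincide until a label $1$ is observed. The paper phrases the last step as a total-variation bound, $1-2\delta\le\|\mathbb P_{h_0}-\mathbb P_{h^\star}\|_{\rm TV}\le\mathbb P_{h^\star}(E^c)$, whereas your direct union bound on the coupled space is an equivalent rephrasing; and, as you noticed, your second output rule is never used.
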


\begin{proof}
The algorithm \(\Alg_{\rm search}\) simulates \(\Alg_{\rm learn}\) on the pool
\(S_n\sim D^n\), forwards each label query to the \(h^\star\)-oracle, and
outputs the first queried index whose label is \(1\). If no such index is
queried, it outputs an arbitrary index. Let \(E\) be the event that no queried
label is \(1\), and let \(\mathbb P_h\) denote the law of the execution under
target \(h\).

First, the output \(\widehat h\) of \(\Alg_{\rm learn}\) distinguishes
\(h_0\) from \(h^\star\). Define
\[
T := \indic{\operatorname{err}_D(\widehat h,h_0)\le\varepsilon}.
\]
Under \(h_0\), the learning guarantee gives
\(\mathbb P_{h_0}(T=1)\ge 1-\delta\). Under \(h^\star\), since
\[
p_\star=\operatorname{err}_D(h^\star,h_0)>2\varepsilon,
\]
the triangle inequality implies that \(T=1\) forces
\[
\operatorname{err}_D(\widehat h,h^\star)
\ge
p_\star-\operatorname{err}_D(\widehat h,h_0)
>
\varepsilon .
\]
Thus \(\mathbb P_{h^\star}(T=1)\le\delta\). By the variational
characterization of total variation,
\[
\|\mathbb P_{h_0}-\mathbb P_{h^\star}\|_{\rm TV}
\ge
\mathbb P_{h_0}(T=1)-\mathbb P_{h^\star}(T=1)
\ge
1-2\delta .
\]

We now upper bound the same total variation by the probability that
\(\Alg_{\rm learn}\) queries a positive point under \(h^\star\). Couple the two
executions using the same pool \(S_n\) and the same internal randomness. If the
\(h^\star\)-execution never observes label \(1\), then all observed labels are
identical to those under \(h_0\). Since the next query is a function only of the
pool, the randomness, and the previously observed labels, the two executions are
identical on \(E\). Hence
\[
\|\mathbb P_{h_0}-\mathbb P_{h^\star}\|_{\rm TV}
\le
\mathbb P_{h^\star}(E^c).
\]
Combining the two bounds gives
\[
\mathbb P_{h^\star}(E^c)\ge 1-2\delta .
\]
On \(E^c\), the simulated learner queries at least one point with label \(1\),
and \(\Alg_{\rm search}\) outputs such an index. Therefore
\[
\Pr_{h^\star}\!\left(h^\star(X_{\widehat I})=1\right)\ge 1-2\delta .
\]
\end{proof}

\begin{remark}[Strong separation]
\label{rem:strong-separation}
Theorem~\ref{thm:learning-implies-search} has no converse: there are
instances on which active search needs one label while active learning
requires $\Omega(d)$.

\paragraph{Construction.}
Let $\mathcal{Z} = \{c, e_1, \ldots, e_d\} \subset \Reals^d$ with $c = (1, \ldots, 1)$
and $e_j$ the $j$-th basis vector, and let $\Dist(c) = \tfrac12$,
$\Dist(e_j) = \tfrac{1}{2d}$. For $w \in \{0, 1, 2\}^d$, define
\[
  h_w(z) := \indic{w^\top z \ge \tfrac32},
  \qquad
  \mathcal H := \{h_w : w \in \{0, 1, 2\}^d\}.
\]
Every $h_w$ is monotone in coordinates with nonnegative weights, and
$\mathcal H$ contains $h_0 \equiv 0$ (take $w = 0$). For
$w \in \{1, 2\}^d$, identifying $S(w) := \{j : w_j = 2\}$ gives
$h_w(c) = 1$ and $h_w(e_j) = \indic{j \in S(w)}$.

\paragraph{Search is easy.}
For every $h^\star \in \mathcal H$ with $h^\star \neq h_0$, we have
$\Dist(\{z : h^\star(z) = 1\}) \ge \Dist(c) = \tfrac12$. A pool of
$n = O(\log(1/\delta))$ unlabeled draws thus contains $c$ with probability
$\ge 1 - \delta$, and a single verifier call on $c$ certifies a positive.
Hence $m = 1$.

\paragraph{Learning is hard.}
Restrict to $\{h_w : w \in \{1, 2\}^d\}$, indexed by $C \subseteq [d]$ via
$w_j = 1 + \indic{j \in C}$, and let $\Alg$ be any $(n, m)$ pool-based active
learner with output $\widehat h$. Draw $C \sim \mathrm{Unif}(2^{[d]})$
independently of the pool and $\Alg$'s randomness, and set
$B_j := \indic{j \in C}$, so $B_1, \ldots, B_d \sim \mathrm{Bernoulli}(\tfrac12)^{\otimes d}$.
Under target $h_C$, a query on $c$ deterministically returns $1$ and a
query on $e_j$ returns $B_j$.

Let $\mathcal V$ denote $\Alg$'s view (pool, randomness, observed labels) and
$T \subseteq [d]$ the random set of indices with $e_j$ queried, so $|T| \le m$.
Conditional on $\mathcal V$, the unqueried bits $\{B_j : j \notin T\}$ remain
$\mathrm{Bernoulli}(\tfrac12)$, and $\widehat h$ is $\mathcal V$-measurable;
hence $\Pr(\widehat h(e_j) \neq B_j \mid \mathcal V) = \tfrac12$ for every
$j \notin T$. Dropping the (only-helps-the-learner) contribution from $c$,
\[
  2d \cdot \operatorname{err}_\Dist(\widehat h, h_C) \,\big|\, \mathcal V
  \;\succeq\; \mathrm{Binomial}\!\bigl(d - |T|,\, \tfrac12\bigr).
\]
Fix $\varepsilon < \tfrac14$ and suppose $m \le d(1 - 4\varepsilon)/2$. Then
$d - |T| \ge d(1 + 4\varepsilon)/2$, and Hoeffding gives
\[
  \Pr\!\bigl(\operatorname{err}_\Dist(\widehat h, h_C) \le \varepsilon \,\big|\, \mathcal V\bigr)
  \;\le\; \exp\!\Bigl(-\tfrac{(1-4\varepsilon)^2}{4(1+4\varepsilon)}\, d\Bigr)
  \quad \text{a.s.}
\]
The tower rule preserves the bound, and for $d$ large enough it falls below
$1/3$. By Yao's principle some target in $\mathcal H$ forces failure with
probability $> 1/3$, so $m = \Omega(d)$ for every constant $\varepsilon < 1/4$.
\qed
\end{remark}

\subsection{Proof of \cref{thm:centered-star-upper}}
\begin{proof}[Proof of \cref{thm:centered-star-upper}]
Soundness is immediate since \(\Alg_{\rm CS}\) returns only after observing verifier label \(1\). Let
\(p:=\Pr_{Z\sim\Dist}(h^\star(Z)=1)>0\). By \cref{lem:binary-oracle-value},
\(J^\star(h^\star,\Dist)=\rewcost/p+\vercost\).

First suppose \(\Alg_{\rm CS}\) enters the verify-all branch, i.e.
 \(\starnum_0>\vercost/\rewcost\). The branch succeeds with geometric
success probability \(p\), so
\[
    \EE[J(\Alg_{\rm CS};h^\star,\Dist)]
    =
    \frac{\rewcost+\vercost}{p}.
\]
Therefore
\[
    \frac{\EE[J(\Alg_{\rm CS};h^\star,\Dist)]}{J^\star(h^\star,\Dist)}
    =
    \frac{(\rewcost+\vercost)/p}{\rewcost/p+\vercost}
    =
    \frac{1+\vercost/\rewcost}{1+(\vercost/\rewcost)p}
    \le
    1+\frac{\vercost}{\rewcost}
    \le
    2\frac{\vercost}{\rewcost},
\]
where the last inequality uses \(\vercost\ge\rewcost\). In this branch,
\(\min\{\starnum_0,\vercost/\rewcost\}=\vercost/\rewcost\), and hence
\[
    \EE[J(\Alg_{\rm CS};h^\star,\Dist)]
    \le
    6\min\left\{\starnum_0,\frac{\vercost}{\rewcost}\right\}
    J^\star(h^\star,\Dist).
\]

Now suppose \(\Alg_{\rm CS}\) enters the batch branch, i.e.
\(\starnum_0\le\vercost/\rewcost\). Since the instance is feasible and \(h_0\in\conceptclass\), we
have \(\starnum_0\ge1\). Let
\[
    n:=\left\lceil \frac{\vercost}{\rewcost}\starnum_0\right\rceil .
\]
By \cref{lem:centered-star-hitting-set}, every batch \(S=(Z_1,\dots,Z_n)\) admits a hitting set
\(I\subseteq[n]\) with \(|I|\le\starnum_0\). If the batch contains a positive point, then
\(A_{h^\star}(S)\neq\emptyset\), so \(I\cap A_{h^\star}(S)\neq\emptyset\), and verifying all indices
in \(I\) finds a positive. Thus one batch succeeds with probability
\(\alpha:=1-(1-p)^n\).

The cost of one batch is at most \(B:=\rewcost n+\vercost\starnum_0\). Since
\(n=\lceil(\vercost/\rewcost)\starnum_0\rceil\), \(\starnum_0\le\vercost/\rewcost\), and
\(\starnum_0\ge1\), we have \(n\le2(\vercost/\rewcost)\starnum_0\). Hence
\(B\le3\vercost\starnum_0\). Also,
\begin{equation}
\label{eq:step-proof-cs-alg}
    \frac{B}{n}
    =
    \rewcost+\frac{\vercost\starnum_0}{n}
    \le
    2\rewcost .
\end{equation}
Let \(x:=np\). Since \(1-p\le e^{-p}\), we have
\(\alpha=1-(1-p)^n\ge1-e^{-np}=1-e^{-x}\). If \(x\le1\), concavity of
\(1-e^{-x}\) gives \(1-e^{-x} \ge x/2\). If \(x>1\), then
\(1-e^{-x}\ge1-e^{-1}\ge1/2\). Hence
\[
    \alpha\ge \frac12\min\{1,np\}.
\]
Therefore
\[
    \EE[J(\Alg_{\rm CS};h^\star,\Dist)]
    \le
    \frac{B}{\alpha}
    \le
    \frac{2B}{\min\{1,np\}}.
\]
If \(np\le1\), then from \cref{eq:step-proof-cs-alg}, we have $\EE[J(\Alg_{\rm CS};h^\star,\Dist)]
    \le
    \frac{2B}{np}
    =
    \frac{2(B/n)}{p}
    \le
    \frac{4\rewcost}{p}$. Then, notice that $ \frac{4\rewcost}{p} \leq
    4J^\star(h^\star,\Dist)
    \le
    6\starnum_0J^\star(h^\star,\Dist),$
where the last inequality uses \(\starnum_0\ge1\). If \(np>1\), then
\[
    \EE[J(\Alg_{\rm CS};h^\star,\Dist)]
    \le
    2B
    \le
    6\vercost\starnum_0
    \le
    6\starnum_0J^\star(h^\star,\Dist),
\]
where the last step uses \(J^\star(h^\star,\Dist)\ge\vercost\). Since in the batch branch
\(\min\{\starnum_0,\vercost/\rewcost\}=\starnum_0\), this gives
\[
    \EE[J(\Alg_{\rm CS};h^\star,\Dist)]
    \le
    6\min\left\{\starnum_0,\frac{\vercost}{\rewcost}\right\}
    J^\star(h^\star,\Dist).
\]
Combining the two branches proves the theorem.
\end{proof}

\begin{lemma} \label{lem:centered-star-hitting-set}
Let $\mathcal{Z}$ be an arbitrary input space, $\conceptclass \in \{0,1\}^{\mathcal{Z}}$, and $n \in \Naturals$ where $n \geq \starnum_0(\conceptclass)$. Let $S \in \mathcal{Z}^n$ be a sequence of length $n$. For every $h \in \conceptclass$, define $A_h(S)=\{i \in [n]~:~h(Z_i)=1\}$. Then there exists \(I\subseteq[n]\) such that \(I\cap A_h(S)\neq\emptyset\) for every \(h\in\conceptclass\) with \(A_h(S)\neq\emptyset\), and \(|I|\le\starnum_0\).
\end{lemma}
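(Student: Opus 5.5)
The plan is to take an inclusion-minimal hitting set and show that minimality itself produces a centered star pattern of the same size, so that \cref{def:starnum} bounds its cardinality.

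\textbf{Existence of a minimal hitting set.} Consider the finite family $\mathcal F:=\{A_h(S): h\in\conceptclass,\ A_h(S)\neq\emptyset\}$ of subsets of $[n]$. If $\mathcal F=\emptyset$, take $I=\emptyset$ and we are done. If $\starnum_0=\infty$, there is nothing to prove beyond existence, which the next sentence supplies. Otherwise, $I_0:=\bigcup_{A\in\mathcal F}A\subseteq[n]$ is a hitting set for $\mathcal F$. Since $[n]$ is finite, we can pass to an inclusion-minimal hitting set $I\subseteq I_0$.

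\textbf{Private witnesses from minimality.} For each $i\in I$, the set $I\setminus\{i\}$ is not a hitting set, by minimality. So there is some $h_i\in\conceptclass$ with $A_{h_i}(S)\neq\emptyset$ and $A_{h_i}(S)\cap(I\setminus\{i\})=\emptyset$. Because $I$ does hit $A_{h_i}(S)$, we get $A_{h_i}(S)\cap I=\{i\}$. In terms of the points, this means $h_i(Z_i)=1$ and $h_i(Z_j)=0$ for every $j\in I$ with $j\neq i$.

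\textbf{Building the star and concluding.} The points $\{Z_i\}_{i\in I}$ are pairwise distinct. Indeed, if $Z_i=Z_j$ with $i\neq j$ in $I$, then $h_i(Z_j)=h_i(Z_i)=1$, which contradicts the previous step. Hence the distinct points $\{Z_i\}_{i\in I}$ and the concepts $\{h_i\}_{i\in I}$ satisfy $h_i(Z_j)=\indic{i=j}$. By \cref{def:starnum}, this gives $|I|\le\starnum_0(\conceptclass)$. (The all-zero center is implicit: each $h_i$ agrees with $h_0$ on every $Z_j$ except $Z_i$.)

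\textbf{Main obstacle.} The only delicate point is that $S$ may contain repeated points, while the star requires distinct $z_1,\dots,z_m$. The distinctness argument above handles this directly. The hypothesis $n\ge\starnum_0$ is not actually needed by this argument.
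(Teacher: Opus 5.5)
Your proof is correct and is essentially the paper's argument: the paper takes a minimum-cardinality hitting set (your inclusion-minimal one works just as well) and uses minimality to find, for each $i\in I$, a witness $h_i$ with $A_{h_i}(S)\cap I=\{i\}$, which gives a centered star of size $|I|$. Your separate distinctness check is harmless but unnecessary, since the condition $h_i(z_j)=\indic{i=j}$ in \cref{def:starnum} already forces the points to be distinct.
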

\begin{proof}
Let
\[
    \mathcal A_S
    :=
    \{A_h(S):h\in\conceptclass,\ A_h(S)\neq\emptyset\}.
\]
If \(\mathcal A_S=\emptyset\), then \(I=\emptyset\) satisfies the claim. Hence assume
\(\mathcal A_S\neq\emptyset\). Since \([n]\) hits every set in \(\mathcal A_S\), there exists a
minimum-cardinality hitting set \(I\subseteq[n]\), i.e.,
\(I\cap A\neq\emptyset\) for every \(A\in\mathcal A_S\), and \(|I|\) is minimal among all such
sets.

We claim that \(|I|\le\starnum_0(\conceptclass)\). For every \(i\in I\), minimality implies that
there exists \(h_i\in\conceptclass\) such that
\[
    A_{h_i}(S)\cap I=\{i\}.
\]
Indeed, if no such \(h_i\) existed, then every set in \(\mathcal A_S\) hit by \(i\) would also be hit
by \(I\setminus\{i\}\), and every set not hit by \(i\) is already hit by \(I\setminus\{i\}\). Thus
\(I\setminus\{i\}\) would still be a hitting set, contradicting minimality of \(I\).

Write \(I=\{i_1,\dots,i_m\}\). For each \(a\in[m]\), the witness
\(h_{i_a}\) satisfies \(A_{h_{i_a}}(S)\cap I=\{i_a\}\). Hence
\(h_{i_a}(Z_{i_a})=1\), while \(h_{i_a}(Z_{i_b})=0\) for every
\(b\neq a\). Therefore
\[
    h_{i_a}(Z_{i_b})=\indic{a=b}
    \qquad \forall a,b\in[m].
\]
Thus the selected points \(Z_{i_1},\dots,Z_{i_m}\) and the witness concepts
\(h_{i_1},\dots,h_{i_m}\) form a centered star of size \(m\). By definition of
\(\starnum_0(\conceptclass)\), \(m\le\starnum_0(\conceptclass)\). Since
\(m=|I|\), this gives \(|I|\le\starnum_0(\conceptclass)\). Finally, because
\(I\) was chosen as a hitting set, \(I\cap A_h(S)\neq\emptyset\) for every
\(h\in\conceptclass\) with \(A_h(S)\neq\emptyset\).
\end{proof}

\begin{lemma}
\label{lem:binary-oracle-value}
Fix a feasible binary active-search instance \((h^\star,\Dist)\) from \cref{def:active-search-instance}, and let
\(p:=\Pr_{Z\sim\Dist}(h^\star(Z)=1)>0\). Then
\[
    J^\star(h^\star,\Dist)=\frac{\rewcost}{p}+\vercost .
\]
\end{lemma}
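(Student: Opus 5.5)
The plan is to specialize \cref{thm:distribution-aware} (equivalently \cref{lem:optimal-streaming-oracle} together with \cref{lem:distribution-aware-online-optimal}) to the binary case $h^\star\in\{0,1\}^{\mathcal Z}$. In this case the success function takes only the values $0$ and $1$. The fixed-point equation $\vercost\,\EE[(h^\star(Z)-\tau^\star)_+]=\tau^\star\rewcost$ can then be solved in closed form, and the optimal value is read off as $J^\star=\vercost/\tau^\star$.

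First I compute the left-hand side. For $\tau\in(0,1)$, $(h^\star(Z)-\tau)_+$ equals $1-\tau$ when $h^\star(Z)=1$ and $0$ otherwise, so $\EE[(h^\star(Z)-\tau)_+]=p(1-\tau)$ with $p=\Pr(h^\star(Z)=1)>0$. The equation becomes $\vercost p(1-\tau)=\tau\rewcost$, giving
\[
\tau^\star=\frac{\vercost p}{\rewcost+\vercost p}.
\]
This lies in $(0,1)$ because $p>0$ and $\rewcost>0$, so by the uniqueness part of \cref{lem:optimal-streaming-oracle} it is the threshold. Hence $J^\star=\vercost/\tau^\star=(\rewcost+\vercost p)/p=\rewcost/p+\vercost$.

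As a sanity check, and as a self-contained alternative, I would also verify the value directly. The threshold rule verifies exactly the positives, so it pays $\rewcost$ per draw over a geometric number of draws with mean $1/p$, plus a single verification. This gives cost $\rewcost/p+\vercost$, matching the formula above. The corresponding lower bound holds because any policy must generate at least until a positive appears, costing $\rewcost/p$ in expectation, and must make at least one verifier call. The first claim needs care: the policy stops only upon a verified positive, so the number of draws $N_{\rm rew}$ is at least the index of the first positive draw, which is geometric with mean $1/p$.

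The main subtlety is confirming that $\Pi_{\rm DA}$ allows pool-based policies. This is already handled by \cref{lem:distribution-aware-online-optimal}, so nothing new is needed beyond the arithmetic.
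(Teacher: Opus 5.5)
Your proof is correct, but your primary route differs from the paper's. The paper never invokes \cref{thm:distribution-aware} here. It proves the lemma directly in two steps:
\begin{itemize}
\item the upper bound comes from the policy that draws until it sees a point with $h^\star(Z)=1$ and verifies it once;
\item the lower bound comes from noting that any sound policy must draw at least up to the first positive index $T_+$, which is geometric with mean $1/p$, and must make at least one verifier call.
\end{itemize}
That is exactly your ``sanity check'' paragraph, so your alternative argument is the paper's proof.

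Your main route instead solves the break-even equation in closed form, giving $\tau^\star=\vercost p/(\rewcost+\vercost p)\in(0,1)$ and then $J^\star=\vercost/\tau^\star$. The arithmetic is right, and uniqueness pins down $\tau^\star$. This buys a consistency check with the general theory. It also gives an explicit threshold, which shows that the optimal rule verifies precisely the positives.

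The cost is that your main route rests on the heavier machinery: the Bellman characterization in \cref{lem:optimal-streaming-oracle} and the pool-versus-streaming reduction in \cref{lem:distribution-aware-online-optimal}, the latter being the most delicate argument in that appendix. In addition, \cref{thm:distribution-aware} is stated for real-valued reward scores, while the binary instances used in \cref{sec:seperation} live on an arbitrary space $\mathcal Z$. You would therefore need to observe that those proofs never use the order structure of $\Reals$.

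The direct argument avoids all of this and works verbatim on $\mathcal Z$. Its lower bound uses nothing about what the policy knows, so it holds for every sound policy. Since it already yields matching upper and lower bounds, the specialization of \cref{thm:distribution-aware} is best viewed as a cross-check rather than as the proof itself.
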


\begin{proof}
The distribution-aware oracle generates i.i.d.\ samples from \(\Dist\) until it sees a point \(Z\) with \(h^\star(Z)=1\), verifies this point once, and stops. This gives expected cost \(\rewcost/p+\vercost\). Conversely, any sound policy must generate at least one positive point and must make at least one verifier call. If \(T_+:=\inf\{i\ge1:h^\star(Z_i)=1\}\), where \(Z_i \sim \Dist\), then \(\EE[T_+]=1/p\), and hence every sound policy has expected cost at least \(\rewcost/p+\vercost\).
\end{proof}

\end{document}